\newtheorem{Theorem}{Theorem}
\newtheorem{Lemma}{Lemma}
\newtheorem{Assumption}{Assumption}
\newtheorem{Definition}{Definition}
\newtheorem{Proposition}{Proposition}
\newtheorem{assumption}{Assumption}
\newtheorem{lemma}{Lemma}
\begin{document}
%
% paper title
% Titles are generally capitalized except for words such as a, an, and, as,
% at, but, by, for, in, nor, of, on, or, the, to and up, which are usually
% not capitalized unless they are the first or last word of the title.
% Linebreaks \\ can be used within to get better formatting as desired.
% Do not put math or special symbols in the title.
\title{DAC-MR: Data Augmentation Consistency Based Meta-Regularization for Meta-Learning}
%
%
% author names and IEEE memberships
% note positions of commas and nonbreaking spaces ( ~ ) LaTeX will not break
% a structure at a ~ so this keeps an author's name from being broken across
% two lines.
% use \thanks{} to gain access to the first footnote area
% a separate \thanks must be used for each paragraph as LaTeX2e's \thanks
% was not built to handle multiple paragraphs
%
%
%\IEEEcompsocitemizethanks is a special \thanks that produces the bulleted
% lists the Computer Society journals use for "first footnote" author
% affiliations. Use \IEEEcompsocthanksitem which works much like \item
% for each affiliation group. When not in compsoc mode,
% \IEEEcompsocitemizethanks becomes like \thanks and
% \IEEEcompsocthanksitem becomes a line break with idention. This
% facilitates dual compilation, although admittedly the differences in the
% desired content of \author between the different types of papers makes a
% one-size-fits-all approach a daunting prospect. For instance, compsoc
% journal papers have the author affiliations above the "Manuscript
% received ..."  text while in non-compsoc journals this is reversed. Sigh.

\author{Jun~Shu,
	Xiang Yuan,
	Deyu Meng,
	and Zongben Xu% <-this % stops a space
	\IEEEcompsocitemizethanks{\IEEEcompsocthanksitem Jun Shu, Xiang Yuan, Deyu Meng (corresponding author) and Zongben Xu are with School of Mathematics and Statistics and Ministry of Education Key Lab of Intelligent Networks and Network Security, Xi'an Jiaotong University, Shaanxi, P.R.China.\protect\\
Email: {xjtushujun,relojeffrey}@gmail.com, {dymeng,zbxu}@mail.xjtu.edu.cn}}

\markboth{Journal of \LaTeX\ Class Files,~Vol.~14, No.~8, August~2015}%
{Shell \MakeLowercase{\textit{et al.}}: Bare Demo of IEEEtran.cls for Computer Society Journals}
% The only time the second header will appear is for the odd numbered pages
% after the title page when using the twoside option.
%
% *** Note that you probably will NOT want to include the author's ***
% *** name in the headers of peer review papers.                   ***
% You can use \ifCLASSOPTIONpeerreview for conditional compilation here if
% you desire.

% The publisher's ID mark at the bottom of the page is less important with
% Computer Society journal papers as those publications place the marks
% outside of the main text columns and, therefore, unlike regular IEEE
% journals, the available text space is not reduced by their presence.
% If you want to put a publisher's ID mark on the page you can do it like
% this:
%\IEEEpubid{0000--0000/00\$00.00~\copyright~2015 IEEE}
% or like this to get the Computer Society new two part style.
%\IEEEpubid{\makebox[\columnwidth]{\hfill 0000--0000/00/\$00.00~\copyright~2015 IEEE}%
%\hspace{\columnsep}\makebox[\columnwidth]{Published by the IEEE Computer Society\hfill}}
% Remember, if you use this you must call \IEEEpubidadjcol in the second
% column for its text to clear the IEEEpubid mark (Computer Society jorunal
% papers don't need this extra clearance.)

% use for special paper notices
%\IEEEspecialpapernotice{(Invited Paper)}

% for Computer Society papers, we must declare the abstract and index terms
% PRIOR to the title within the \IEEEtitleabstractindextext IEEEtran
% command as these need to go into the title area created by \maketitle.
% As a general rule, do not put math, special symbols or citations
% in the abstract or keywords.
\IEEEtitleabstractindextext{%
\begin{abstract}
Meta learning recently has been heavily researched and helped advance the contemporary machine learning. However, achieving well-performing meta-learning model requires a large amount of training tasks with high-quality meta-data representing the underlying task generalization goal, which is sometimes difficult and expensive to obtain for real applications. Current meta-data-driven meta-learning approaches, however, are fairly hard to train satisfactory meta-models with imperfect training tasks. To address this issue, we suggest a meta-knowledge informed meta-learning (MKIML) framework to improve meta-learning by additionally integrating compensated meta-knowledge into meta-learning process. We preliminarily integrate meta-knowledge into meta-objective via using an appropriate meta-regularization (MR) objective to regularize capacity complexity of the meta-model function class to facilitate better generalization on unseen tasks. As a practical implementation, we introduce data augmentation consistency to encode invariance as meta-knowledge for instantiating MR objective, denoted by DAC-MR. The proposed DAC-MR is hopeful to learn well-performing meta-models from training tasks with noisy, sparse or unavailable meta-data. We theoretically demonstrate that DAC-MR can be treated as a proxy meta-objective used to evaluate meta-model without high-quality meta-data. Besides, meta-data-driven meta-loss objective combined with DAC-MR is capable of achieving better meta-level generalization. 10 meta-learning tasks with different network architectures and benchmarks substantiate the capability of our DAC-MR on aiding meta-model learning. Fine performance of DAC-MR are obtained across all settings, and are well-aligned with our theoretical insights. This implies that our DAC-MR is problem-agnostic, and hopeful to be readily applied to extensive meta-learning problems and tasks.

\end{abstract}

% Note that keywords are not normally used for peerreview papers.
\begin{IEEEkeywords}
Data augmentation consistency, meta-regularization, meta-knowledge, meta learning, meta-data, generalization.
\end{IEEEkeywords}}

% make the title area
\maketitle

% To allow for easy dual compilation without having to reenter the
% abstract/keywords data, the \IEEEtitleabstractindextext text will
% not be used in maketitle, but will appear (i.e., to be "transported")
% here as \IEEEdisplaynontitleabstractindextext when the compsoc
% or transmag modes are not selected <OR> if conference mode is selected
% - because all conference papers position the abstract like regular
% papers do.
\IEEEdisplaynontitleabstractindextext
% \IEEEdisplaynontitleabstractindextext has no effect when using
% compsoc or transmag under a non-conference mode.

% For peer review papers, you can put extra information on the cover
% page as needed:
% \ifCLASSOPTIONpeerreview
% \begin{center} \bfseries EDICS Category: 3-BBND \end{center}
% \fi
%
% For peerreview papers, this IEEEtran command inserts a page break and
% creates the second title. It will be ignored for other modes.
\IEEEpeerreviewmaketitle

\IEEEraisesectionheading{\section{Introduction}\label{sec:introduction}}
% Computer Society journal (but not conference!) papers do something unusual
% with the very first section heading (almost always called "Introduction").
% They place it ABOVE the main text! IEEEtran.cls does not automatically do
% this for you, but you can achieve this effect with the provided
% \IEEEraisesectionheading{} command. Note the need to keep any \label that
% is to refer to the section immediately after \section in the above as
% \IEEEraisesectionheading puts \section within a raised box.

% The very first letter is a 2 line initial drop letter followed
% by the rest of the first word in caps (small caps for compsoc).
%
% form to use if the first word consists of a single letter:
% \IEEEPARstart{A}{demo} file is ....
%
% form to use if you need the single drop letter followed by
% normal text (unknown if ever used by the IEEE):
% \IEEEPARstart{A}{}demo file is ....
%
% Some journals put the first two words in caps:
% \IEEEPARstart{T}{his demo} file is ....
%
% Here we have the typical use of a "T" for an initial drop letter
% and "HIS" in caps to complete the first word.
% You must have at least 2 lines in the paragraph with the drop letter
% (should never be an issue)

\IEEEPARstart{M}{aching} learning has recently demonstrated impressive performance in various fields, e.g., computer vision \cite{he2016deep}, natural
language processing \cite{devlin2018bert}, speech processing \cite{abdel2014convolutional}, etc. However, an effective machine learning method often requires a large amount of high-quality labeled data to properly and sufficiently simulate the testing/evaluating distribution. Collecting such large-scale supervised datasets is notoriously expensive in time and effort for most real applications. Compared with current machine intelligence, humans are able to quickly learn novel concepts from only small amount of examples \cite{lake2015human,lake2017building}. The capability of machine to learn new concepts quickly from small examples is thus desirable, especially for many problems/applications where data are intrinsically rare or expensive, or compute resources are unavailable.

Meta-learning \cite{naik1992meta,schmidhuber1987evolutionary,thrun1998lifelong}, or learning to learn, has been suggested as a promising solution path to assemble machine learning with above capability. The key idea of meta-learning is to distill a meta-model from multiple learning tasks/episodes, and then use this meta-model to improve performance of task-specific model on novel query tasks \cite{hospedales2021meta, shu2021learning}.
Such a learning paradigm is hopeful to bring a variety of benefits, such as finely adapting to query tasks with less computation/data costs (e.g., avoid learning from scratch for novel tasks), as well as fewer human interventions.

Recently, it produces an explosion of researches on meta-learning, due to its potential to advance the frontier of the contemporary machine learning. Especially,
meta-learning has helped machine learning improve the data efficiency \cite{wang2020generalizing,shu2018small}, algorithm automation \cite{he2021automl,karmaker2021automl}, and generalization \cite{maurer2016benefit,tripuraneni2020theory,shu2021learning}. Successful
applications have been demonstrated in areas spanning few/zero-shot learning \cite{finn2017model,snell2017prototypical,vinyals2016matching,sung2018learning,soh2020meta}, neural architecture search (NAS) \cite{liu2018darts,elsken2019neural},  hyperparameter optimization \cite{franceschi2018bilevel}, curriculum learning \cite{ren2018learning,shu2019meta}, domain adaptation/generalization \cite{liu2021cycle,li2018learning}, transfer learning \cite{jang2019learning,sun2020meta},  label noise learning \cite{zheng2021meta,shu2020meta,zhao2021probabilistic}, semi-supervised learning \cite{pham2021meta},  unsupervised learning \cite{metz2018meta}, reinforcement Learning \cite{duan2016rl,wang2016learning}, data/label generation \cite{cubuk2018autoaugment,wu2021learning,shu2022cmw}, loss/regularization learning \cite{balaji2018metareg,yazdanpanah2022revisiting,lee2019meta,shu2020learning,huang2019addressing}, learning to optimize \cite{andrychowicz2016learning,ravi2016optimization,shu2022mlr}, and robustness \cite{collins2020task,killamsetty2022nested}, etc.

These successes largely attribute to the data-based nature of current meta-learning approaches that learn from a tremendous number of training tasks with high-quality meta-data representing the underlying task generalization goal.
However, in most real applications, collecting such high-quality training tasks are difficult, expensive and impractical. This often makes obtained training tasks imperfect. In fact, we always have access to problematic meta-data for some applications. For example, the corresponding ground-truth labels of meta-data are generally noisy in label noise problems \cite{shu2019meta}, or unavailable in unsupervised domain adaptation tasks \cite{liu2021cycle}, or the size of meta-data is limited in few-shot learning issues \cite{finn2017model}.
The purely meta-data-driven approaches tend to reach their limits or lead to unsatisfactory results under these imperfect circumstances.
With meta learning becoming more and more popular in real applications, there is also a growing need for meta-learning to train well-performing and sufficiently generalized meta-models from such imperfect training tasks.

\begin{figure} \vspace{-3mm}
	\centering
	\includegraphics[width=0.49\textwidth]{./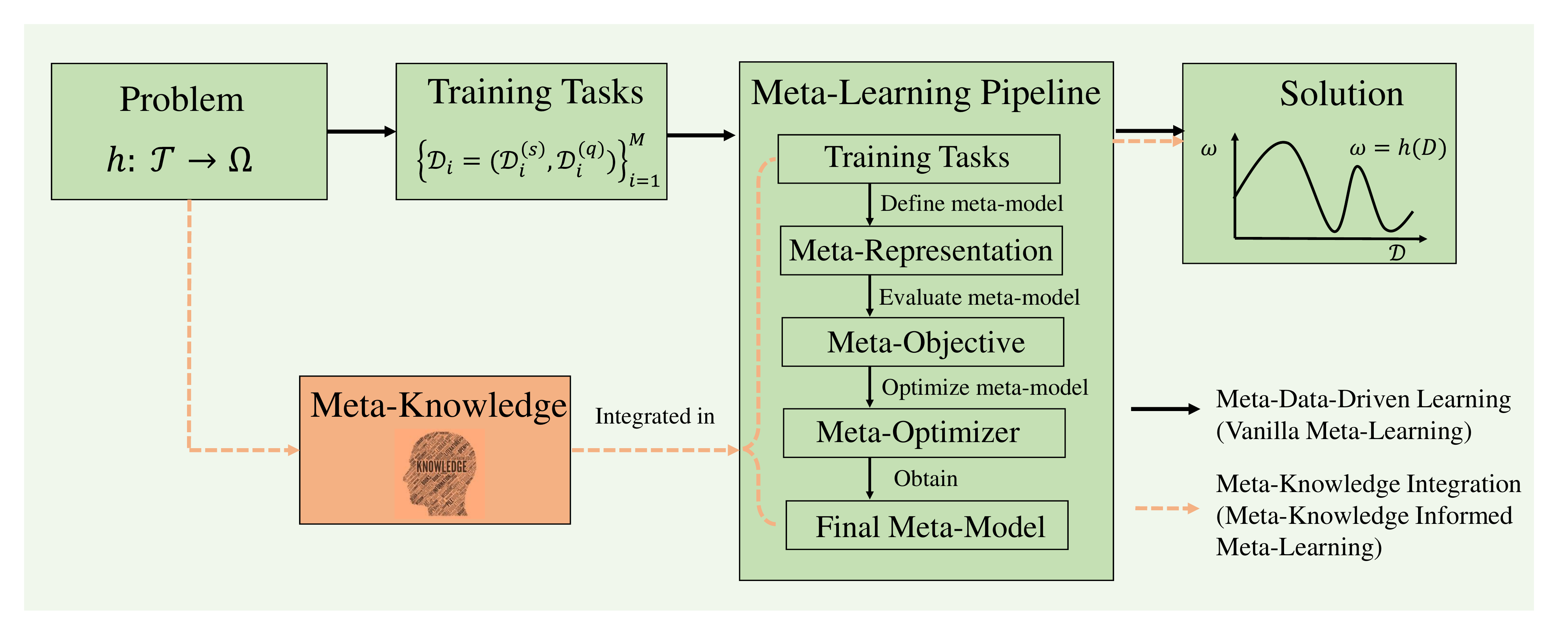}
	\vspace{-8mm}
	\caption{ Illustration of the meta-knowledge informed meta-learning framework. Specifically, an additional branch encoding certain beneficial meta-knowledge is integrated into the meta learning pipeline.
}\label{mkiml}  \vspace{-5mm}
\end{figure}

As a step towards addressing the limitations of purely meta-data-driven meta-learning, we suggest a meta-knowledge informed meta-learning (MKIML) framework, as shown in Fig. \ref{mkiml}, which comprises an additional meta-knowledge integration into the meta learning pipeline. Moreover, such meta-knowledge could be obtained in an external, separated way from the meta-learning problem and the usual training tasks. This framework is expected to be functional in exploring an orthogonal meta-knowledge-driven approaches relative to previous purely meta-data-driven approaches to learn and evaluate meta-model. With the MKIML framework, we attempt to integrate meta-knowledge into meta-objective by means of a meta-regularization (MR) term.
The key insight is that we leverage the benefits of fundamental properties of the meta-model for various training tasks, which should help achieve better generalization of meta-model to unseen tasks and alleviate the critical need of high-quality meta-data.
Specifically, in this study we instantiate MR with the data augmentation consistency (DAC) as a new meta-objective for meta-learning. The DAC stems from recent advances in semi-supervised learning \cite{xie2020unsupervised,sohn2020fixmatch}, and to the best of our knowledge, we exploit it to meta-regularize the complexity of meta-model function class for the first time, which enforces the model facilitated by meta-model to output similar predictions under input data augmentations.

Our contributions can be mainly summarized as follows.

1) We suggest a MKIML framework, as shown in Fig. \ref{mkiml}, aiming to improve capability of previous purely meta-data-driven meta-learning approaches by additionally integrating compensated meta-knowledge into meta learning process. Specifically, we explore to integrate meta-knowledge into meta-objective through designing an appropriate meta-regularizer (MR). The MR is functional on regularising the capacity complexity of meta-model function class, so as to improve its meta-level generalization on unseen tasks.

2) We introduce data augmentation consistency (DAC) to instantiate MR objective for an effective practical implementation (DAC-MR for brevity). The DAC-MR provides a general approach to help apply meta-learning models to tasks with noisy, sparse or unavailable meta-data. Besides, the DAC-MR is problem-agnostic, which can be generally applicable to extensive meta-learning problems and tasks.

3) We theoretically prove that the additional DAC-MR term in meta-objective can bring better meta-level generalization compared with solely meta-data-driven meta-loss objective. Meanwhile, we prove that DAC-MR is able to be regarded as a proxy meta-objective implicitly calculated on high-quality meta-data under some mild conditions.

4) We experimentally demonstrate that DAC-MR aids meta-model learning across various meta-learning problems in computer vision, including few-shot learning (\S\ref{section4}), transfer learning (\S\ref{section5}), continual learning (\S\ref{section6}) and label noise learning (\S\ref{section7}). Specifically, our DAC-MR is substantiated to be valid across 10 meta-learning tasks with different network architectures and testing benchmarks. Furthermore, these empirical results are well-aligned with our theoretical insights.

The paper is organized as follows. \S\ref{relatedwork} discusses related work. \S\ref{section3} presents the proposed MKIML framework, MR objective and our DAC-MR objective as a practical implementation for MKIML as well as its theoretical insights. We evaluate DAC-MR to few-shot learning in \S\ref{section4},
transfer learning in \S\ref{section5}, continual learning in \S\ref{section6} and label noise learning in \S\ref{section7}, respectively. The conclusion is finally made.

%In this paper, we introduce consistency regularization into meta-learning framework as a novel meta-regularizer for the first time to achieve better generalization for meta-model on novel tasks.
%
%We instantiate our meta-regularization framework, referred to as DAC-MR, with data augmentation consistency.
%
%To the best of our knowledge
%
%Yet, despite the motivational difference, each of these problems require learning methodology to be learned from the training tasks that allow for better generalization to novel tasks and data.
%
%
%
%Our approach is a simple, strong baseline that we hope will be used in future research.
%
%
%consequently impairing the performance on unseen tasks
%
%resulting in a severe deterioration of the predictive performance in practice
%
%have a detrimental effect on the performance of
%
%
%under-investigated
\section{Related Work} \label{relatedwork}
\textbf{Meta-Objective design.}
Most current methods define the meta objective using a meta dataset to compute the performance metric, after updating the task
model with the meta-model. This is in line with classic validation set approaches for hyperparameter and model selection. To adapt to various investigated problems, a large corpus of problem-specific meta-learning objectives are proposed, e.g.,
sample efficient few-shot learning \cite{finn2017model,snell2017prototypical}, fast computation \cite{andrychowicz2016learning,ravi2016optimization,shu2022mlr}, rapid online adaptation in non-stationary environment \cite{finn2019online,nagabandi2018deep}, catastrophic interference \cite{riemer2018learning,javed2019meta}, selective forgetting \cite{baik2021learning}, robustness to domain-shift \cite{li2018learning,balaji2018metareg}, label noise \cite{shu2019meta}, adversarial attack \cite{goldblum2020adversarially}. Yet, despite the good progress, these methods often use meta-objective that suits the problem at hand and sometimes tends to be unavailable when given meta-data are noisy or missing.
Instead, we aim to seek the solution of meta-model through possessing some properly defined fundamental meta-knowledge suitable for general meta-learning tasks, which allows for better generalization to unseen tasks and less reliance on meta-data.

\noindent\textbf{Regularization.} Regularization is an important technique in machine learning, which imposes a penalty on model's complexity, allowing for good generalization to unseen data even when being trained on a relatively small training set. Some popular regularizations, such as sparsity \cite{tibshirani1996regression}, low-rankness \cite{candes2009exact}, and smoothness ones \cite{belkin2006manifold}, are effective for eliminating over-fitting and enhance generalization of the learned model. Recently, some regularization methods are proposed to improve generalization of deep learning, e.g., early stopping, weight decay \cite{goodfellow2016deep}, dropout \cite{srivastava2014dropout}, batch normalization \cite{ioffe2015batch}. Besides, some data dependent regularizations make additional assumptions on model with respect to training data, e.g., data augmentation \cite{shorten2019survey}, adversarial training \cite{goodfellow2014explaining}, label smoothing \cite{szegedy2016rethinking}, mixup \cite{zhang2018mixup}, etc. Different from these regularizations aiming to control the \textit{model capacity} for improving its generalization on unseen data, our meta-regularization tries to control the \textit{meta-model capacity} for improving its generalization on unseen tasks.

\noindent\textbf{Consistency regularization.} The idea of consistency regularization has been studied in many settings. It generally enforces the model to output similar predictions under certain input transformations such as adversarial perturbations \cite{miyato2018virtual} and data augmentation \cite{xie2020unsupervised}, or model parameter space transformations such as temporal ensembling \cite{laine2016temporal} and mean teacher \cite{tarvainen2017mean}. Such regularization methods have been successfully applied to semi-supervised learning \cite{xie2020unsupervised,sohn2020fixmatch}, self-supervised learning \cite{chen2020simple,grill2020bootstrap}, unsupervised domain adaptation \cite{berthelot2021adamatch}, adversarial robustness \cite{carmon2019unlabeled,alayrac2019labels}, few-shot learning \cite{kye2020meta}, image generation \cite{zhang2019consistency,sinha2021consistency}, and transfer learning \cite{abuduweili2021adaptive}. Yet as far as we known, these ideas have not been exploited in meta-regularizing meta-models in meta-learning community.

\noindent\textbf{Meta-learning loss/regularization.} The main idea is to meta-learn proxy loss/regularization from data that improves inner-level model optimization from various task-specific goal perspectives, including model generalization \cite{balaji2018metareg,denevi2019learning,denevi2020advantage}, optimization efficiency \cite{li2019lfs,wang2020loss}, differentiable approximation to a true non-differentiable metric \cite{huang2019addressing}, unsupervised update rule \cite{metz2018meta}, robust to domain shift \cite{li2019feature}, label noise \cite{gao2021searching,shu2020learning,shu2020metav2,ding2023improve}, or adversarial attack \cite{goldblum2020adversarially}, and arising in generalizations of unsupervised learning \cite{antoniou2019learning}, self-supervised learning \cite{doersch2017multi}, auxiliary task learning \cite{jaderberg2016reinforcement,liu2019self}, etc. These methods, however, still overlook the meta-objective's design at outer-level learning. Comparatively, we steer the research interests towards designing a meta objective to meta-regularize meta-model's capacity for improving the meta-level generalization to unseen learning tasks.

\noindent\textbf{Knowledge informed machine learning.} The main idea is to integrate beneficial knowledge into machine learning pipelines, to help eliminate limitations of machine learning when it deals with insufficient training data \cite{von2021informed,deng2020integrating,hao2022physics}, hence increasing the reliability and robustness of the obtained model.
Comparatively, meta-knowledge informed meta-learning attempts to integrate useful meta-knowledge into meta-learning pipelines, to help purely meta-data-driven meta-learning approaches deal with imperfect training tasks, e.g., tasks with noisy, sparse or unavailable meta-data.
This framework thus focuses on higher outer-level learning beyond inner-level regular machine learning, and aims to achieve
better meta-level generalization, reliability and robustness of the learned meta-model.

\noindent\textbf{Meta-overfitting and meta-regularization.} Yin et al.,\cite{yin2019meta} found that learning a well-generalized initialization \cite{finn2017model} runs a high risk of inducing a sufficiently expressive initialization that memorizes all meta-training tasks. This phenomenon is called memorization meta-overfitting \cite{yin2019meta}, where meta-learned initialization solves the query set even without much relying on the support set for adaptation. This meta-overfitting meta-model then fails to generalize to meta-test tasks. To eliminate this issue, \cite{yin2019meta} proposed an information bottleneck constraint on the capacity of the initialization.
Afterwards, \cite{liu2020task,rajendran2020meta,ni2021data} presented task augmentation strategies and \cite{jamal2019task} imposed a unbiased task-agnostic prior to address meta-overfitting problem. \cite{shu2021learning} presented to control the range of meta-model's output as the meta-regularization strategy inspired from the derived statistical generalization bound to improve meta-level generalization of meta-model.
%Conversely, Shu et al. \cite{shu2022cmw} proposed the MW-Net algorithm \cite{shu2019meta} for robust deep learning issue, and found that it may suffer from meta-underfitting problem, in which the meta-model lacks the capacity of addressing heterogenous label noises. They further proposed a CMW-Net algorithm \cite{shu2022cmw} to enhance the meta-model's capacity of addressing complicated heterogenous label noises.
However, these methods are delicately designed for specific problems, e.g., few-shot learning, and then use the proposed problem-dependent MR to improve corresponding meta-algorithms.
This makes it hard to generalize such proposed MR to more extensive meta-learning tasks, and thus certainly lack generality among different meta-learning tasks.

Along this research line, the most related work to ours are PACOH \cite{rothfuss2021pacoh} and F-PACOH \cite{rothfuss2021meta}.  PACOH \cite{rothfuss2021pacoh} derives the PAC-optimal hyper-posterior using a KL-divergence between hyper-posterior and hyper-prior to serve as a meta-regularizer.
F-PACOH \cite{rothfuss2021meta} further defines the hyper-prior over the function space to address overconfident predictions in meta-learning.
However, the hyper-prior still needs to be delicately specified for the investigated problems.
Meanwhile, they are usually computationally prohibitive and cumbersome to meta-learning tasks with overparameterized DNNs and large-scale datasets, rendering this meta-level regularization regime always intractable.
Comparatively, our DAC-MR is relatively light-weight, simple and convenient to be implemented and problem-agnostic, which can be potentially applicable to evidently wider range of meta-learning problems and tasks.

\section{Methodology} \label{section3}
In this section, we firstly introduce the vanilla meta-learning model in \S\ref{section31}, and then present the MKIML framework in \S \ref{section321}. In \S\ref{section32}, we provide the MR objective to highlight the utility of MKIML framework and
then present the proposed DAC-MR strategy in \S\ref{section33}. Finally, we show the theoretical guarantee of DAC-MR in terms of improving vanilla meta-learning generalization (\S\ref{section34}), and behaving as a proxy meta-objective used to evaluate meta-model (\S\ref{section35}).

\vspace{-4mm}
\subsection{Preliminary and Vanilla Meta-Learning} \label{section31}
In this paper, we consider the following bi-level optimization formulation of meta-learning \cite{hospedales2021meta}:
\begin{align}
	h_{\phi^*}  = \mathop{\arg\min}_{h_{\phi} \in \mathcal{H}}& \sum_{i=1}^{M} \mathcal{L}^{meta} (\mathcal{D}_i^{(q)}; f_{\theta^*_i}(h_{\phi})),   \label{eqout} \\
	\text{s.t.,} \ f_{\theta^*_i}(h_{\phi}) &= \mathop{\arg\min}_{f_{\theta} \in \mathcal{F}} \mathcal{L}^{task} (\mathcal{D}_i^{(s)};f_{\theta}, h_{\phi} ),\label{eqin}
\end{align}
where $\theta$ and $\phi$ denote the parameters of task-specific model $f:\mathcal{X} \rightarrow \mathcal{Y}$ and meta-model $h:\mathcal{T} \rightarrow \Omega$, respectively, and $\mathcal{X}, \mathcal{Y}$ are the feature and label spaces, respectively,  $\mathcal{T}, \Omega$ are the task and meta-model output spaces, respectively, and $\mathcal{F}, \mathcal{H}$ are the function classes of model and meta-model, respectively. We assume that a set of $M$ training tasks $\mathcal{D}$ are sampled from task distribution $p(\mathcal{T})$, and the $i$-th training task $\mathcal{D}_i = (\mathcal{D}_i^{(s)},\mathcal{D}_i^{(q)})$ consisting of task training data $\mathcal{D}_i^{(s)}$ and meta data $\mathcal{D}_i^{(q)}$. We denote $x\in \mathcal{X}$ as input feature, and $y\in \mathcal{Y}$ as its label (or response), and $\mathcal{D}_i^{(q)} = \{(x_{ij}^{(q)},y_{ij}^{(q)})\}_{j=1}^{n}$, $\mathcal{D}_i^{(s)} = \{(x_{ij}^{(s)},y_{ij}^{(s)})\}_{j=1}^{m}$, where $m,$ $n$ are the sizes of $\mathcal{D}_i^{(s)},\mathcal{D}_i^{(q)}$, respectively. The train and meta datasets here are also respectively called support and query sets in the meta-learning literatures \cite{vinyals2016matching}.
$\omega \in \Omega$ are the hyperparameters in the machine learning \cite{hospedales2021meta,shu2021learning}, which are often pre-specified in the conventional assumption, while are meta-learned by meta-model $h_{\phi}$ from training tasks $\mathcal{D}$ in the meta-learning framework, i.e., $w = h_{\phi}(\mathcal{D})$. We will drop explicit dependence of $h_{\phi}$ on $\mathcal{D}$ for brevity in the following.
$\mathcal{L}^{meta}$ and $\mathcal{L}^{task}$ refer to the outer and inner learning objectives, respectively, such as cross entropy or mean square loss in the case of
classification or regression tasks, and $\mathcal{L}^{task}(\mathcal{D}_i^{(s)}; f_{\theta},h_{\phi}) \!=\! \frac{1}{m}\sum_{j=1}^{m} \mathcal{L}^{task}$ $\left(f_{\theta}(x_{ij}^{(s)}),y_{ij}^{(s)};h_{\phi}\right )$.

Such a bi-level optimization problem is capable of finely delivering the working mechanics of meta-learning. Specifically, at the inner-level learning, the task-specific model $f_{\theta^*_i}(h_{\phi})$ is trained on training dataset $\mathcal{D}_i^{(s)}$ using the meta-model $h_{\phi}$, behaving just like regular machine learning; and the outer-level optimization seeks the meta-model $h_{\phi}$ that ensures produced model $f_{\theta^*_i}(h_{\phi})$ to perform well on its corresponding meta dataset $\mathcal{D}_i^{(q)}$. As a whole, meta-learning aims to learn a shared meta-model that generalizes across $M$ tasks, and ideally this meta-model enables task-specific model to be learned better than from scratch for a new task.

As shown in Fig. \ref{mkiml}, the current meta-learning pipeline mainly contains three components except for given training tasks \cite{hospedales2021meta}: 1) Meta-Representation.
Proper instantiation representation of meta-model indicates the specific meta-learning approaches, e.g., initial condition of the optimal hypothesis \cite{finn2017model}, data curriculum strategy \cite{shu2022cmw}, gradient descend algorithm \cite{andrychowicz2016learning}, learning rate schedules \cite{shu2022mlr}, hyper-parameter setting rule \cite{ding2023improve}. More examples can refer to \cite{hospedales2021meta, shu2021learning}. 2) Meta-Optimizer. The outer-level optimizer for learning meta-model can take a variety of forms, e.g., gradient-descent \cite{finn2017model}, reinforcement learning \cite{bello2017neural}, and evolutionary search \cite{houthooft2018evolved}.
(3) Meta-Objective. It defines the learning objective of meta-model, which achieves different purposes such as
sample efficient few-shot learning \cite{finn2017model,snell2017prototypical}, fast computation \cite{andrychowicz2016learning,ravi2016optimization,shu2022mlr}, robustness to domain-shift \cite{li2018learning,balaji2018metareg}, label noise \cite{shu2019meta}, adversarial attack \cite{goldblum2020adversarially}, etc.

%In practice, such meta-model could be regarded as the initial condition of the optimal hypothesis \cite{finn2017model}, data curriculum strategy \cite{shu2022cmw}, gradient descend algorithm \cite{andrychowicz2016learning}, learning rate schedules \cite{shu2022mlr}, or hyper-parameter setting rule \cite{shu2020learning}, etc. More examples refer to \cite{hospedales2021meta,shu2021learning}.

%Generally, the working mechanism of meta-learning is primarily composed with two levels: an inner- and and outer-level learning \cite{hospedales2021meta, shu2021learning}. At the inner-level learning, a novel query task is arrived, and the machine learning agent attempts to learn the task-specific model from the training data just like regular machine learning. And the quick adaptation of inner-level is facilitated by the shared meta-model which has been extracted from a family of observed tasks at the outer-level learning.

Recently, meta-learning has shown great success in improving machine learning in terms of data efficiency \cite{wang2020generalizing,shu2018small}, algorithm automation \cite{he2021automl,karmaker2021automl}, and generalization \cite{maurer2016benefit,tripuraneni2020theory,shu2021learning}. Most of these success stories are grounded in the data-based nature of the approach that learns from a series of training tasks with high-quality meta-data representing the underlying task generalization goal.
However, high-quality training tasks are difficult and expensive to obtain for some real applications, which would result in imperfect training tasks, e.g., meta-data may be noisy or unavailable. The purely meta-data-driven approaches tend to reach their limits or lead to unsatisfactory results under these imperfect circumstances. With meta learning becoming more and more popular in real-life applications, there is also a growing need for meta-learning to learn reliable and robust meta-models under such imperfect training tasks.

%In this paper, we concentrate on the research of designing meta objective. Although there are various attempts on formulating meta objective, existing meta-learning methods have several deficiencies in practice. On the one hand, most methods define a meta-objective using a performance metric computed on a validation/meta set, after updating the task model with the meta-model. Such validation/meta set represents the underlying distribution of generalization goal, however, we sometimes have access to the problematic data for some applications. For example, the corresponding ground-truth labels of validation/meta set are unavailable in label noise problem \cite{shu2019meta} or unsupervised domain adaptation \cite{liu2021cycle}, or the size of validation/meta set is limited in few-shot learning \cite{finn2017model}, which results in a detrimental effect on the generalization performance of the learned meta-model.
%On the other hand, though there exist a large corpus of problem-specific meta-learning objectives \cite{hospedales2021meta}, current methods mainly design the ad-hoc meta-objective for the investigated problems. This problem-specific meta-objective may limit the applicablity to other meta-learning tasks.

\vspace{-2mm}
\subsection{Meta-Knowledge Informed Meta-Learning (MKIML)}\label{section321}
As a step towards addressing the aforementioned limitations, we suggest a meta-knowledge informed meta-learning (MKIML) framework, aiming to improve the vanilla meta-learning by additionally incorporating meta-knowledge into the meta learning process, as shown in Fig.\ref{mkiml}.
MKIML could hopefully learn from a hybrid information source that contains meta-data and meta-knowledge, or even only from meta-knowledge. Generally, MKIML explores an orthogonal direction through introducing compensated meta-knowledge information relative to purely meta-data-driven approaches, which is expected to reduce the requirement for high-quality training task premise, and thus increase the reliability and robustness of meta-learning.

Such meta-knowledge could be obtained in an external and separated way from the meta-learning problem and the usual training tasks, which could be logic rules, knowledge graphs, equations, invariance, probabilistic relations, etc (cf.\cite{von2021informed,deng2020integrating}).
%The essential characteristic of MKIML is that such meta-knowledge could be explicitly integrated into the whole meta learning pipeline.
Theoretically, such beneficial meta-knowledge could be integrated into training tasks, meta-representation (i.e., the hypothesis set of meta-model), meta-objective and meta-optimizer through elaborate designing on the meta-learning regime, and there should exist a significantly wide range of possible manners to realize such MKIML framework. In this study, we focus on a preliminary MKIML attempt that integrates meta-knowledge into the meta-objective in terms of meta-regularizer, to show its potential power of enhancing meta-learning capability, especially its task generalization capability, and expecting to inspire more considerations along this meaningful research line.

\vspace{-4mm}
\subsection{Meta-Regularization for MKIML}\label{section32}
%Recalling the formulation of meta-learning, the inner optimization Eq.(\ref{eqin}) is conditioned on the meta-model $h_{\phi}$ defined by the outer optimization Eq.(\ref{eqout}), but it cannot change $h_{\phi}$ during its training. This leader-follower asymmetry relationship indicates that the function of $\mathcal{L}^{meta}$ and $\mathcal{L}^{task}$ are essentially different. $\mathcal{L}^{task}$ specifies the learning goal of task-specific model $f_{\theta}$, which could be defined by the meta-model $h_{\phi}$ (cf. ``meta-learning loss/regularization" in \S\ref{relatedwork}). And

Recalling that meta-objective measures different purposes for meta-learning, and often uses a meta-loss computed on paired meta-data with high-quality annotations, after updating the task-specific model facilitated by the meta-model. Such meta-data represent the underlying distribution of the targeted goal. However, as aforementioned, we sometimes only have access to problematic meta-data in real applications. This would yield unreliable meta-objective for such meta-data-driven meta-learning, inclining to result in a detrimental effect on the generalization performance of the learned meta-model to be used on unseen tasks.

%Recalling that meta-objective measures different purposes for meta-model, and it often requires paired meta-data with high-quality annotations, which represent the underlying task generalization goal, to evaluate meta-model (cf. Eq.(\ref{eqout})). However, meta-objective tends to be unreliable when meta-data are noisy \cite{shu2019meta} or unavailable \cite{liu2021cycle} for some practical applications.
%Under these imperfect learning circumstances, such meta-data-driven vanilla meta-learning approaches may reach their limits or lead to unsatisfactory results.

Inspired by the MKIML framework, we try to integrate certain meta-knowledge into meta-objective via designing an appropriate meta-regularizer.
To achieve this, we extend vanilla meta-learning objective in Eq.(\ref{eqout},\ref{eqin}) by incorporating an additional meta-regularization term as follows, expecting to ensure that the solution could possess some specific property (i.e., meta-knowledge) of the meta-model:
\begin{align}
	h_{\phi^*}  = \mathop{\arg\min}_{h_{\phi} \in \mathcal{H}}& \gamma \sum_{i=1}^{M} \mathcal{L}^{meta} (\mathcal{D}_i^{(q)}; f_{\theta^*_i}(h_{\phi}),h_{\phi})  + \lambda \mathcal{MR}(h_{\phi}), \notag \\
	\text{s.t.,} \  f_{\theta^*_i}(h_{\phi}) &= \mathop{\arg\min}_{f_{\theta} \in \mathcal{F}} \mathcal{L}^{task} (\mathcal{D}_i^{(s)};f_{\theta}, h_{\phi} ),\label{eqmr}
\end{align}
where $\mathcal{MR}(h_{\phi})$ is the meta-regularizer that imposes some meta-knowledge of meta-model $h_{\phi}$, aiming to regularize the capacity complexity of the function class $\mathcal{H}$, so as to improve its meta-level generalization to unseen tasks. $\gamma, \lambda \geq 0$ are the hyperparameters making a tradeoff between meta-loss $\mathcal{L}^{meta}$ and meta-regularizer $\mathcal{MR}$.

Recently, some meta-regularization strategies have been proposed (cf. ``meta-overfitting and meta-regularization" in \S\ref{relatedwork}) and helped alleviate the meta-overfitting issues of meta-learning. However, most works still require additional meta-loss computation on high-quality meta-data to jointly learn better meta-model. Besides, they need to be delicately designed for specific problems, certainly limiting their applicability to general meta-learning problems.
In contrast, this study tries to present a novel DAC-MR strategy in the following, which can be treated as a proxy meta-objective used to evaluate meta-model, largely reducing
the essential computation reliance of meta-learning on high-quality meta-data. Meanwhile, DAC-MR is problem-agnostic, which can be potentially useful for more general meta-learning tasks.

\vspace{-4mm}
\subsection{Data Augmentation Consistency Based Meta-Regularization (DAC-MR) } \label{section33}
We attempt to introduce the prediction invariance of the meta-modal under some input perturbations as the meta-knowledge for rectifying a sound learning track for meta-model.
%Specifically, we require the meta-model to maintain such fundamental invariance property for most problems learned from training tasks, in order to achieve better generalization to unseen tasks.
Specifically, in our implementation, we adopt data augmentation consistency \cite{xie2020unsupervised,sohn2020fixmatch} to integrate such invariance knowledge into the meta-regularization term in Eq.(\ref{eqmr}), aiming to enforce the meta-model to produce models capable of outputting similar predictions for all augments from a training sample.
Concretely, we denote $\mathcal{A}: \mathcal{X} \rightarrow \mathcal{X}$ as some set of transformations obtained via data augmentation, and DAC-MR strategy can then be written as follows:
\begin{align}
	h_{\phi^*} & = \mathop{\arg\min}_{h_{\phi} \in \mathcal{H}} \gamma \sum_{i=1}^{M} \mathcal{L}^{meta} (\mathcal{D}_i^{(q)}; f_{\theta^*_i(h_{\phi})}) \notag \\
   & 	  +  \lambda \sum_{i=1}^{M} \mathcal{MR}^{dac} (D_i;f_{\theta^*_i(h_{\phi})},A ),  A \in \mathcal{A} \label{eqdacmr} \\
	\text{s.t.,}  \ & f_{\theta^*_i(h_{\phi})} = \mathop{\arg\min}_{f_{\theta} \in \mathcal{F}} \mathcal{L}^{task} (\mathcal{D}_i^{(s)};f_{\theta}, h_{\phi} ),\label{eqdacin}
\end{align}
where
$\mathcal{MR}^{dac} (D_i;f,A) = \frac{1}{|D_i|} \sum_{j=1}^{|D_i|}\rho \left(f(x_{ij}),f(A(x_{ij}))\right)$, $D_i = \{x_{ij}\}_{j=1}^{k}$, and $x_{ij}$ could be any sample related to the problem like ones from support or query sets. The data augmentation strategies $\mathcal{A}$ in our experiments easily follow the strategies used in \cite{xie2020unsupervised}.
$\rho$ is a metric properly defined on the output space, and we use KL-divergence in our experiments as done in \cite{xie2020unsupervised}. Note that DAC-MR is with an evident difference from the consistency regularization used in machine learning \cite{xie2020unsupervised,sohn2020fixmatch}, which solely enforces model predictions invariant to input perturbations. In comparison, DAC-MR is defined by an outer optimization (meta-level learning) that evaluates the benefit of the meta-model when learning a new task. More comprehensively, such DAC-MR strategy could bring the following potential merits:
\begin{itemize}
	\item  If we set $\gamma \neq 0$, and set $D_i$ as a duplicate of $\mathcal{D}_i^{(q)}$, then additional DAC-MR objective could help yield a smaller generalization error bound than the vanilla meta-learning with respect to meta-model (see \S\ref{section34}), i.e., it inclines to possess better meta-level generalization. Such property demonstrates that it is potential to improve meta-learning algorithms by simply integrating such DAC-MR into the original meta-data-driven meta-objective (see \S\ref{section41},  \S\ref{section53}, \S\ref{section61},\S\ref{section62}).
	\item Note that data involved in computing DAC-MR do not require its corresponding ground-truth labels, in which DAC-MR only enforces predictions for an original sample and its augmented ones to be same. If we set $\gamma=0$, and let $D_i$ additionally sampled/divided from $\mathcal{D}_i^{(s)}$, it would reduce the cost of collecting additional meta data  $\mathcal{D}_i^{(q)}$ with high-quality annotations. This makes it possible to apply meta-learning algorithms to  tasks with noisy or unavailable meta-data scenarios (see \S\ref{section43}, \S\ref{section42}, \S\ref{section51}, \S\ref{section52}, \S\ref{section71},\S\ref{section72}).
	\item When $\gamma = 0$, we can further prove that DAC-MR could be regarded as a proxy meta-objective of the meta-loss calculated with high-quality meta-data under some mild conditions (see \S\ref{section35}). Such property implies that our DAC-MR can be implicitly treated as approximated meta-supervised information for meta-model training without additional guidance from meta-data-driven meta-objective $\mathcal{L}^{(meta)}$.
	%There has a great difference from current meta-regularization strategies, as they may require additional meta-loss computation to jointly learn meta-model for investigative problems.
	\item Our DAC-MR is problem-agnostic, which can be generally applied to different meta-learning problems and tasks. We experimentally demonstrate that DAC-MR finely aids meta-model learning in few-shot learning (\S\ref{section4}), transfer learning (\S\ref{section5}), continual learning (\S\ref{section6}) and label noise learning (\S\ref{section7}), and obtain consistency benefits over corresponding baselines.	
\end{itemize}

\vspace{-4mm}

\subsection{Can DAC-MR Help with Meta-level Generalization?}  \label{section34}
To answer this question, we will demonstrate that DAC-MR can effectively reduce the size of function class $\mathcal{H}$. Formally, we define the following DAC-MR operator $\mathcal{T}^{mr}$ over $\mathcal{H}$:
\begin{Definition} [DAC-MR Operator]
	\begin{align*}
		&\mathcal{T}^{mr}_{\mathcal{A},\mathbf{X}}(\mathcal{H}) \triangleq  \left \{ h | h\in \mathcal{H}, f_{\theta_i(h_{\phi})}(x_{ij}) = f_{\theta_i(h_{\phi})}(A(x_{ij})), f_{\theta_i}  \right. \notag \\
		& \left. \in \mathcal{F}, A \in \mathcal{A}, \mathbf{X} = [x_{11}, \!\cdots\!, x_{ij}, \!\cdots\!, x_{Mn}],  \forall i \in [M], j \in [n] \right\}.
	\end{align*}
\end{Definition}
The DAC-MR operator can be understood as mapping the original function class $\mathcal{H}$ to a potentially smaller subset $\mathcal{T}^{mr}_{\mathcal{A},\mathcal{X}}(\mathcal{H})$, in which every function $h\in \mathcal{T}^{mr}_{\mathcal{A},\mathcal{X}}(\mathcal{H})$ produces consistent predictions for given samples and their corresponding augmentations, i.e., $ f_{\theta^*_i}(h_{\phi})(x_{ij}) = f_{\theta^*_i}(h_{\phi})(A(x_{ij}))$. To illustrate this,
we instantiate our framework for one of the most frequently used classification methods — logistic regression with $\mathcal{X} = \{ x\in \mathbb{R}^d | \|x\|_2 \leq G \}$, $\mathcal{Y} = \{0,1\}$ (the conclusion also holds for $\mathcal{Y} = [K]$). We follow the setting in \cite{tripuraneni2020theory}, and  consider the function class:
\begin{align}
	\mathcal{F} = &\{ f|f(z)  = \alpha^T z, \alpha \in \mathbb{R}^r, \|\alpha\| \leq c \}, \notag \\
	\mathcal{H} = &\{h | h(x) = B^Tx, B = (b_1,\cdots,b_r ) \in \mathbb{R}^{d\times r}, \notag \\
	&B \  \text{is a matrix with orthonormal columns} \},  \label{eqfh}
\end{align}
where task-specific functions $f$s are linear maps, and the underlying meta-representation $h$ is a projection onto a low-dimensional subspace. Such meta-level representation learning would provide a statistical guarantee for several important meta-learning sceneries \cite{maurer2016benefit,tripuraneni2020theory,tripuraneni2021provable,du2020few,sun2021towards,xu2021representation}, e.g., few-shot learning, transfer learning, which will be considered in \S\ref{section4} and \S\ref{section5}, respectively.
We assume $P(y=1|f\circ h(x)) = \sigma(\alpha^TB^Tx)$, where $\sigma(\cdot)$ is the sigmoid function with $\sigma(z) =  1 /(1+\exp(-z))$, and then use the logistic loss $\ell(z,y) =-y\log(\sigma(z)) - (1-y) \log (1-\sigma(z))$ for $\mathcal{L}^{train}$ and $\mathcal{L}^{meta}$.
For the instantiation in Eq.(\ref{eqfh}), \cite{tripuraneni2020theory} recently has theoretically demonstrated that meta-level error bound with respect to meta-model scales as $ C(\mathcal{H}) + M C(\mathcal{F})$, where $C(\cdot)$ captures the complexity of function class, and $M$ denotes some coefficients independent of model complexity. Therefore, we could show Eq.(\ref{eqdacmr}) with additional DAC-MR term (assume $\gamma = \lambda = 1 $) brings better meta-level generalization than Eq.(\ref{eqmr}) through illustrating that the complexity of $\mathcal{T}^{mr}_{\mathcal{A},\mathbf{X}}(\mathcal{H})$ is smaller than $\mathcal{H}$.
To this aim, we use $d_{dac}$ to quantify the strength of data augmentation $\mathcal{A}$ defined by:
\begin{align*}
	d_{dac} \triangleq \mathrm{rank} (A(\mathbf{X})  -\mathbf{X} ), \text{for a fixed} \ A \in \mathcal{A},
\end{align*}
where $A(\mathbf{X}) \!=\! [x_{11}', \!\cdots\!, x_{ij}', \!\cdots\!, x_{Mn}']$,  $x_{ij}'=A(x_{ij})$ represents an augmented sample from $x_{ij}$, and $0 \leq d_{dac} \leq \min (d,Mn)$. As can be seen, $d_{dac}$ measures the number of dimensions perturbed by augmentation, i.e., larger $d_{dac}$ implies that $A(\mathbf{X})$ more evidently perturbs the original dataset, and $d_{dac}=0$ means no augmentations.
\begin{Theorem}\label{th1}\footnote{The theorem shows a relatively concise while informal result. Its formal description is given in Theorem 1 of supplementary material.}
	\begin{align*}
	C(\mathcal{T}^{mr}_{\mathcal{A},\mathbf{X}}(\mathcal{H})) \lesssim \sqrt{\frac{( d-d_{dac})r^2}{Mn}}, C(\mathcal{H})  \lesssim \sqrt{\frac{d r^2 }{Mn}}.
	\end{align*}
\end{Theorem}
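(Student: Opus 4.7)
The plan is to reduce the complexity bound for $\mathcal{T}^{mr}_{\mathcal{A},\mathbf{X}}(\mathcal{H})$ to the same bound that already exists for $\mathcal{H}$, but with the ambient dimension $d$ replaced by $d-d_{dac}$. The key observation is purely algebraic: the DAC-MR constraint confines the columns of any admissible projection matrix $B$ to the orthogonal complement of the column space of $A(\mathbf{X})-\mathbf{X}$, which by definition has dimension $d-d_{dac}$. Once this reduction is in hand, both inequalities follow by invoking the Gaussian/Rademacher complexity bound for the class of rank-$r$ orthonormal projections established in Tripuraneni et al.\ (2020): for the unconstrained class $\mathcal{H}$ we apply it directly in $\mathbb{R}^{d}$ to obtain $\sqrt{dr^{2}/(Mn)}$, and for the restricted class we apply it in the effective ambient space of dimension $d-d_{dac}$.

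The algebraic reduction proceeds as follows. For $f(z)=\alpha^{\top} z$ and $h(x)=B^{\top} x$, the invariance $f_{\theta_{i}(h_{\phi})}(x_{ij})=f_{\theta_{i}(h_{\phi})}(A(x_{ij}))$ becomes $\alpha^{\top} B^{\top}(A(x_{ij})-x_{ij})=0$. Since $\mathcal{F}$ is the full norm-ball of linear heads $\{\alpha:\|\alpha\|\le c\}$, requiring the consistency to hold for every admissible $\alpha$ forces $B^{\top}(A(\mathbf{X})-\mathbf{X})=0$. Let $V_{\perp}=\mathrm{range}(A(\mathbf{X})-\mathbf{X})$, so $\dim V_{\perp}=d_{dac}$ by definition, and let $V=V_{\perp}^{\perp}$ with $\dim V=d-d_{dac}$. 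Then every column of $B$ lies in $V$, so we can write $B=U\tilde B$ with $U\in\mathbb{R}^{d\times(d-d_{dac})}$ an orthonormal basis of $V$ and $\tilde B\in\mathbb{R}^{(d-d_{dac})\times r}$ again orthonormal.

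With this parametrization, $h(x)=\tilde B^{\top}(U^{\top} x)$, so $\mathcal{T}^{mr}_{\mathcal{A},\mathbf{X}}(\mathcal{H})$ is isometric (via the orthonormal change of coordinates $U$) to the class of $r$-dimensional orthonormal projections acting on the $d-d_{dac}$–dimensional data $\{U^{\top} x_{ij}\}$, which remains contained in a ball of radius $G$ because $U$ has orthonormal columns. Applying the Tripuraneni--Jin--Jordan Gaussian-complexity estimate in this reduced ambient dimension yields $C(\mathcal{T}^{mr}_{\mathcal{A},\mathbf{X}}(\mathcal{H}))\lesssim\sqrt{(d-d_{dac})r^{2}/(Mn)}$. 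The second inequality of the theorem is a direct restatement of their bound in the original $d$-dimensional ambient space.

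The main obstacle, and the point that must be handled carefully in the formal version, is the quantifier over task-specific heads in the definition of $\mathcal{T}^{mr}$: the clean conclusion $B^{\top}(A(\mathbf{X})-\mathbf{X})=0$ only follows if the invariance is required for a family of $\alpha$'s whose span is all of $\mathbb{R}^{r}$. This holds here because $\mathcal{F}$ is a full norm ball, but it is a genuine constraint on how DAC-MR is enforced and should be stated explicitly. A secondary technical point is that $d_{dac}=\mathrm{rank}(A(\mathbf{X})-\mathbf{X})$ is data- and augmentation-dependent; the formal theorem should make clear whether $d_{dac}$ is an instance-wise quantity, whether the bound holds conditionally on the sample, and whether it is taken as a uniform bound over $A\in\mathcal{A}$. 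Apart from these scoping issues, the proof is essentially a one-line dimension-reduction argument plugged into an existing complexity estimate.
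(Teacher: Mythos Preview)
Your proposal is correct and follows essentially the same route as the paper: the key observation that the consistency constraint forces $B^{\top}(A(\mathbf{X})-\mathbf{X})=0$, hence the columns of $B$ lie in a $(d-d_{dac})$-dimensional subspace, is exactly what the paper uses. The only difference is in the final step: where you invoke the Tripuraneni--Jin--Jordan bound as a black box in the reduced ambient space, the paper recomputes the empirical Gaussian complexity directly, obtaining $\frac{r}{\sqrt{Mn}}\sqrt{\mathrm{tr}\!\bigl(\mathbf{P}^{\perp}_{\Delta}\tfrac{1}{Mn}\mathbf{X}^{\top}\mathbf{X}\,\mathbf{P}^{\perp}_{\Delta}\bigr)}$, and then bounds this trace by $(d-d_{dac})\cdot\|\tfrac{1}{Mn}\mathbf{X}^{\top}\mathbf{X}\|_{2}$ together with a sub-gaussian concentration lemma for the sample covariance. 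This makes transparent where the distributional assumptions enter and sidesteps the subtlety you flag---that the projection $U$ is data-dependent---since the argument is carried out conditionally on $\mathbf{X}$ before passing to expectation. Your black-box route is cleaner to state but would need exactly this conditional argument spelled out to be rigorous. Your justification for passing from $\alpha^{\top}B^{\top}\Delta=0$ to $B^{\top}\Delta=0$ via the full norm-ball of heads is in fact more careful than the paper, which simply asserts this step ``without loss of generality.''
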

By comparing two complexities, we can see that DAC-MR is efficient to reduce the complexity of the meta-model, which decreases the dimensions from $d$ to $d-d_{dac}$ by enforcing DAC-MR. In particular, consider the scenario that data augmentations well perturb the data, e.g., $d_{dac} = d-c, c \ll d$, and then the vanilla meta-learning gives a complexity that scales as $\sqrt{\frac{dr^2}{Mn}}$, while our DAC-MR yields a dimension-free error $\sqrt{\frac{cr^2}{Mn}}$. In practice, we often instantiate $\mathcal{A}$ with strong data augmentations as in \cite{sohn2020fixmatch}, which can ensure that $d_{dac} = d-c$ holds. Besides, the dimension of $\mathcal{X}$ could be large for many real applications, and hence such dimension-free error is promising to bring expected fine improvements. We further empirically demonstrate that incorporating such DAC-MR can generally benefit existing meta-learning algorithms in \S\ref{section41}, \S\ref{section53}, \S\ref{section61},\S\ref{section62}.

\begin{table*}[t] \vspace{-2mm}
	\caption{Classification results of inductive FSL benchmarks on the miniImageNet and CIFAR-FS datasets, compared with previous four typical meta-learning methods. $^\diamond$ denotes results reported by the original paper. $^\dag$ represents results replicated by us to the best effort.   }\label{tablefsl}  \vspace{-4mm}
	\centering
	\scriptsize
	\tabcolsep =1.5mm
	\begin{tabular}{l|l|ll|ll}
		\toprule
		\multirow{2}{*}{\textbf{Model} }	& \multirow{2}{*}{\textbf{Backbone} } &	\multicolumn{2}{c|}{\textbf{miniImageNet 5-way}}   & \multicolumn{2}{c}{\textbf{CIFAR-FS 5-way}} \\
		\cline{3-4} \cline{5-6}
		&  &     \multicolumn{1}{c} {\textbf{1-shot } }  & \multicolumn{1}{c|} {\textbf{5-shot } }  &  \multicolumn{1}{c} {\textbf{1-shot } }  & \multicolumn{1}{c} {\textbf{5-shot } }       \\
		\hline \hline
		MAML \cite{finn2017model} $^\diamond$  &  32-32-32-32     &  48.70 $\pm$ 1.84    &   63.11 $\pm$ 0.92 &  58.90 $\pm$ 1.90  &    71.50 $\pm$ 1.00    \\
		MAML \cite{finn2017model} $^\dag$ &   32-32-32-32     &  46.75 $\pm$ 0.63   & 60.45 $\pm$ 0.57  &  51.97 $\pm$ 0.70  & 69.50 $\pm$ 0.59    \\
		MAML + DAC-MR  &   32-32-32-32     & 47.48 $\pm$ 0.63 \textcolor{red}{0.73$\uparrow$}   &  61.33 $\pm$ 0.57 \textcolor{red}{0.88$\uparrow$} &  53.40 $\pm$ 0.73 \textcolor{red}{1.43$\uparrow$} & 71.60 $\pm$ 0.60 \textcolor{red}{2.10$\uparrow$}   \\
		\hline \hline
		ProtoNet \cite{snell2017prototypical} $^\diamond$  & 64-64-64-64  & 49.42 $\pm$ 0.78   &   68.20 $\pm$ 0.66  &  55.50 $\pm$ 0.70  & 72.00 $\pm$ 0.60   \\
		ProtoNet \cite{snell2017prototypical} $^\dag$ &  64-64-64-64  &  47.73 $\pm$ 0.63  &  70.82 $\pm$ 0.53 &       60.19 $\pm$ 0.72  &       79.67 $\pm$ 0.52        \\
		ProtoNet + DAC-MR &     64-64-64-64        &  48.78 $\pm$ 0.64 \textcolor{red}{1.05$\uparrow$}  &  71.16 $\pm$ 0.52 \textcolor{red}{0.34$\uparrow$} &  61.23 $\pm$ 0.72 \textcolor{red}{1.04$\uparrow$} &  80.81 $\pm$ 0.52 \textcolor{red}{1.14 $\uparrow$}   \\
		\hline
		ProtoNet \cite{snell2017prototypical} $^\dag$ &  ResNet-12  &  55.59 $\pm$ 0.65  &  75.46 $\pm$ 0.53 &       68.90 $\pm$ 0.74  &       83.51 $\pm$ 0.51       \\
		ProtoNet + DAC-MR &     ResNet-12        &  57.15 $\pm$ 0.66 \textcolor{red}{1.56$\uparrow$}  &  76.55 $\pm$ 0.52 \textcolor{red}{1.09$\uparrow$} &  71.48 $\pm$ 0.75 \textcolor{red}{2.58$\uparrow$} &  84.99 $\pm$ 0.50 \textcolor{red}{1.48 $\uparrow$}   \\
		\hline \hline
		R2D2 \cite{bertinetto2018meta} $^\diamond$  & 96-192-384-512  &51.20 $\pm$ 0.60&  68.80 $\pm$ 0.10  &     65.30 $\pm$ 0.20     &       79.40 $\pm$ 0.10 \\
		R2D2 \cite{bertinetto2018meta} $^\dag$  & 96-192-384-512  &55.90 $\pm$ 0.62&  73.17 $\pm$ 0.49  &     67.71 $\pm$ 0.68     &      83.07 $\pm$ 0.50 \\
		R2D2 + DAC-MR  & 96-192-384-512  &56.60 $\pm$ 0.62 \textcolor{red}{0.70$\uparrow$} &  73.18 $\pm$ 0.51 \textcolor{red}{0.01$\uparrow$}   &     69.72 $\pm$ 0.68  \textcolor{red}{2.01$\uparrow$}    &      84.07 $\pm$ 0.49 \textcolor{red}{1.00$\uparrow$} \\
		\hline
		R2D2 \cite{bertinetto2018meta} $^\dag$  & ResNet-12  &58.80 $\pm$ 0.65&  76.44 $\pm$ 0.49  &     71.79 $\pm$ 0.72     &      84.16 $\pm$ 0.51 \\
		R2D2 + DAC-MR  & ResNet-12  &61.31 $\pm$ 0.66 \textcolor{red}{2.51$\uparrow$} &  77.50 $\pm$ 0.49 \textcolor{red}{1.06$\uparrow$}   &     74.22 $\pm$ 0.72  \textcolor{red}{2.43$\uparrow$}    &      86.09 $\pm$ 0.49 \textcolor{red}{1.93$\uparrow$} \\
		\hline \hline
		MetaOptNet-SVM \cite{lee2019meta} $^\diamond$ &  ResNet-12  &   62.64 $\pm$ 0.61 &  78.63 $\pm$ 0.46  &     72.00 $\pm$ 0.70      &            84.20 $\pm$ 0.50       \\
		MetaOptNet-SVM \cite{lee2019meta}  $^\dag$ &  ResNet-12  &   60.68 $\pm$ 0.66 &  77.32 $\pm$ 0.48  &     71.22 $\pm$ 0.71      &            84.38 $\pm$ 0.50       \\
		MetaOptNet-SVM + DAC-MR &  ResNet-12  & 62.26 $\pm$ 0.65 \textcolor{red}{1.58$\uparrow$} &78.56 $\pm$ 0.48  \textcolor{red}{1.24$\uparrow$}     &      74.16 $\pm$ 0.72  \textcolor{red}{2.94$\uparrow$}  &      86.23 $\pm$ 0.49 \textcolor{red}{1.85$\uparrow$} \\
		\bottomrule
	\end{tabular} \vspace{-2mm}
\end{table*}

\vspace{-4mm}

\subsection{Can DAC-MR Work Without Meta-loss $\mathcal{L}^{(meta)}$?  }  \label{section35}
We firstly introduce some basic notations and definitions. Consider $\mathcal{X} = \{ x\in \mathbb{R}^d | \|x\|_2 \leq G\}$, $\mathcal{Y} = [K]$, and define $\mathcal{B}(x) = \{x': \exists A \in \mathcal{A} \ \text{such that} \ \|x'-A(x)\| \leq r \}$ to
be the set of points with distance $r$ from some data augmentations of $x$. The DAC-MR requires that a classifier $f_{\theta(h_{\phi})}$ (we denote $f_{h}$ for brevity below) learned on training data make predictions stably on another partition meta data (unlabelled data) under a suitable set of data augmentations. The DAC-MR objective of $f_{h}$ on the probability measure $P$ can be defined as the fraction of examples where $f_{h}$ is not robust to input data augmentation transformations:
\begin{align*}
	R_{P}^{\mathcal{B}}(f_h) = \mathbb{E}_{P} [\mathbf{1} (\exists x' \in \mathcal{B}(x), \text{s.t.}, f_h(x')\neq f_h(x))].
\end{align*}
Without loss of generality, we consider single training task setting (it is easy to extend the conclusion to multi-task settings), where $S$ and $Q$ are training and meta data distributions.
To the goal, we would establish the relationship between expected meta error $\epsilon^{Q}(f_h) = \mathbb{E}_{Q} \mathbf{1}[f_h(x) \neq g^*(x)]$ and expected DAC-MR loss $R_{Q}^{\mathcal{B}}(f_h)$ in the following, where $g^*(x)$ is the ground-truth label generation function on $S\cup Q$.

 \begin{table*} \vspace{-2mm}
	\caption{Classification results of cross-domain FSL benchmarks on Meta-Dataset (using a multi-domain feature extractor of URT \cite{liu2020universal}).
		The first eight datasets are seen during training and the last five datasets are unseen and used for test only. Results of other methods are copied from TSA \cite{li2022cross}.} \vspace{-4mm} \label{tabletsa}
	\centering
	\scriptsize
	\setlength{\tabcolsep}{1.5mm}
	\begin{tabular}{c|c|c|c|c|c|c|c|c}
		\toprule
		Test datasets &   Simple CNAPS \cite{bateni2020improved} & SUR \cite{dvornik2020selecting}  &URT \cite{liu2020universal} &FLUTE \cite{triantafillou2021learning} &tri-M \cite{liu2021multi} & URL \cite{li2021universal} &   TSA \cite{li2022cross}&  TSA + DAC-MR  \\ \hline
		ImageNet & 58.4 $\pm$ 1.1& 56.2 $\pm$ 1.0  & 56.8 $\pm$ 1.1 &58.6 $\pm$ 1.0 &51.8 $\pm$ 1.1 &58.8 $\pm$ 1.1 & 59.5 $\pm$ 1.0 & \textbf{60.1 $\pm$ 1.0}  \\
		Omniglot &  91.6 $\pm$ 0.6 & 94.1 $\pm$ 0.4 & 94.2 $\pm$ 0.4 & 92.0 $\pm$ 0.6 & 93.2 $\pm$ 0.5 & 94.5 $\pm$ 0.4 & 94.9 $\pm$ 0.4 &  \textbf{95.5 $\pm$ 0.8} \\
		Aircraft & 82.0 $\pm$ 0.7 & 85.5 $\pm$ 0.5 & 85.8 $\pm$ 0.5 & 82.8 $\pm$ 0.7 &  87.2 $\pm$ 0.5 & 89.4 $\pm$ 0.4 & 89.9 $\pm$ 0.4& \textbf{90.7 $\pm$ 0.3}\\
		Birds & 74.8 $\pm$ 0.9&71.0 $\pm$  1.0 &76.2 $\pm$  0.8 &75.3 $\pm$  0.8 &79.2 $\pm$  0.8& 80.7 $\pm$  0.8 &81.1 $\pm$  0.8 & \textbf{82.1 $\pm$ 0.6}\\
		Textures &  68.8 $\pm$ 0.9&  71.0 $\pm$  0.8& 71.6 $\pm$  0.7& 71.2 $\pm$ 0.8 &68.8 $\pm$ 0.8 &77.2 $\pm$ 0.7& 77.5 $\pm$ 0.7 & \textbf{77.9 $\pm$ 0.7}\\
		Quick Draw &76.5 $\pm$ 0.8&  81.8 $\pm$  0.6& 82.4 $\pm$  0.6& 77.3 $\pm$  0.7& 79.5 $\pm$   0.7& \textbf{82.5 $\pm$ 0.6} &81.7 $\pm$ 0.6& 81.7 $\pm$ 0.6\\
		Fungi &  46.6 $\pm$ 1.0 &64.3 $\pm$  0.9 &64.0 $\pm$  1.0& 48.5 $\pm$  1.0 &58.1 $\pm$  1.1 &\textbf{68.1 $\pm$  0.9} &66.3 $\pm$  0.8 & 67.2 $\pm$ 0.5\\
		VGG Flower&90.5 $\pm$ 0.5& 82.9 $\pm$  0.8& 87.9 $\pm$  0.6 &90.5 $\pm$  0.5 &91.6 $\pm$  0.6 &92.0 $\pm$  0.5& 92.2 $\pm$  0.5 & \textbf{93.0 $\pm$ 0.6}\\ \hline
		Traffic Sign & 57.2 $\pm$ 1.0&  51.0 $\pm$  1.1 &48.2 $\pm$  1.1 &63.0 $\pm$  1.0 &58.4 $\pm$  1.1 &63.3 $\pm$  1.1 &82.8 $\pm$  1.0 & \textbf{86.9 $\pm$ 0.7}\\
		MSCOCO & 48.9 $\pm$ 1.1&52.0 $\pm$  1.1 &51.5 $\pm$  1.1 &52.8 $\pm$  1.1& 50.0 $\pm$  1.0 &57.3 $\pm$  1.0 &57.6 $\pm$  1.0 & \textbf{60.5 $\pm$ 0.9}\\
		MNIST &94.6 $\pm$ 0.4& 94.3 $\pm$  0.4& 90.6 $\pm$  0.5 &96.2 $\pm$  0.3& 95.6 $\pm$  0.5& 94.7 $\pm$  0.4 &96.7 $\pm$  0.4& \textbf{97.0 $\pm$ 0.2}\\
		CIFAR-10 & 74.9 $\pm$ 0.7&  66.5 $\pm$  0.9 &67.0 $\pm$  0.8& 75.4 $\pm$  0.8& 78.6 $\pm$  0.7 &74.2 $\pm$  0.8& 82.9 $\pm$  0.7 & \textbf{84.8 $\pm$ 0.5}\\
		CIFAR-100 & 61.3 $\pm$ 1.1& 56.9 $\pm$  1.1& 57.3 $\pm$  1.0& 62.0 $\pm$  1.0& 67.1 $\pm$  1.0 &63.5 $\pm$  1.0 &70.4 $\pm$  0.9 & \textbf{72.7 $\pm$ 0.9}\\  \hline
		Average Seen & 73.7&  75.9 &77.4& 74.5 &76.2 &80.4 &80.4 & \textbf{81.0}  \\
		Average Unseen & 67.4& 64.1& 62.9 &69.9& 69.9& 70.6& 78.1 & \textbf{80.4}\\
		Average All &71.2&  71.4& 71.8 &72.7 &73.8& 76.6 &79.5  &  \textbf{80.8} \\ \hline
		Average Rank &6.6 & 6.3& 5.7 &5.2 & 5.1&3.2 &2.1& \textbf{1.2}\\
		\bottomrule
	\end{tabular}\vspace{-6mm}
\end{table*}

Now, we define the neighborhood function $\mathcal{N}$ as
\begin{align*}
	\mathcal{N}(x) = \{x'\in \mathcal{X} | \mathcal{B}(x) \cap \mathcal{B}(x') \neq \emptyset \},
\end{align*}
and the neighborhood of a set $S \in S\cup Q $ as
\begin{align*}
	\mathcal{N}(S) = \cup_{x\in S}\mathcal{N}(x).
\end{align*}
Let $\mathcal{X}_k \triangleq \{ x\in \mathcal{X} | g^*(x) = k\}$, with $\mathcal{X}_i \cap \mathcal{X}_j = \emptyset, \forall i \neq j$.
We make the following assumption (Assumption 2 in \cite{cai2021theory}).
\begin{Assumption} \label{assumption1}
	Assume the task training and meta data distributions have the following structures: $supp(S) = \cup_{k=1}^K S_k$, $supp(Q) = \cup_{k=1}^K Q_k$, and $S_i \cap S_j = Q_i \cap Q_j = \emptyset, \forall i\neq j $. We further assume the ground truth class $g^*(x)$ for $x \in S_k \cup Q_k$ is consistent, which is denoted as $y_k \in [K]$. Additionally,
	suppose that there exists a constant $\kappa \geq 1$, such that for any $i \in [K]$,
	\begin{align} \label{eqshift}
		P(Q_i \cap A)  \leq \kappa P\left(A \cap \frac{1}{2}\left(S_i\cup Q_i\right)\right), \forall A \subset \mathcal{S\cup Q}.
	\end{align}
\end{Assumption}
Eq.(\ref{eqshift}) presents a quantitative formulation of distribution shift between $S$ and $Q$ \cite{cai2021theory}. To capture connectivity of the data distribution, we further introduce the expansion property \cite{wei2020theoretical,cai2021theory} on the mixed distribution $S\cup Q$ below:
\begin{Definition} [Constant Expansion \cite{wei2020theoretical}]
	We say that the distribution $S\cup Q$ satisfies $(q,\xi)$-constant expansion for some constant $q,\xi \in (0,1)$, if for any $V \subset S\cup Q$ with $P(V) \geq q$ and $P(V\cap (S_k \cup Q_k)) \leq 1/2$ for any $k\in[K]$, we have $P((\mathcal{N}(V) \setminus V)\cap (S_k \cup Q_k)) \geq \min \{P(V\cap (S_k \cup Q_k)), \xi\}$.
\end{Definition}
\begin{Definition} [Multiplicative Expansion \cite{wei2020theoretical}]
	We say that the distribution $S\cup Q$ satisfies $(a,c)$-multiplicative expansion for some constant $a \in (0,1), c>1$, if for any $k\in[K]$ and $V \subset S \cup Q$ with $P(V\cap (S_k \cup Q_k)) \leq a$, we have $P(\mathcal{N}(V) \cap (S_k \cup Q_k)) \geq \min \{c\cdot P(V \cap (S_k \cup Q_k)), 1\}$.
\end{Definition}
This expansion property lower bounds the neighborhood size of low probability sets, and the parameters $(q,\xi)$ or $(a,c)$ quantify the augmentation strength of $\mathcal{A}$. Specifically, the strength of expansion-based data augmentations is characterized by expansion capability of $\mathcal{A}$: for a neighborhood $V \subset \mathcal{X}$ of proper size (characterized by $q$ or $a$ under measure $P$), the stronger augmentation $A$ leads to more expansion in $\mathcal{N}(S)$, and therefore larger $\xi$ or $c$. The following proposition builds a bridge of two expansions.
\begin{Proposition}[Lemma C.6 in \cite{wei2020theoretical}] \label{prop1}
	Suppose the distribution $S\cup Q$ satisfies $(1/2,c)$-multiplicative expansion on $\mathcal{X}$. Then for any choice of $\xi > 0$, $S\cup Q$ satisfies $(\frac{\xi}{c-1}, \xi)$-constant expansion.
\end{Proposition}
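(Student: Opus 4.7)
The plan is to derive the constant expansion directly from the multiplicative expansion by applying it class-by-class on the set $V$ appearing in the conclusion, then summing the class-wise bounds and converting the multiplicative factor $c-1$ into the additive threshold $\xi$ by invoking the global size hypothesis $P(V) \geq \xi/(c-1)$.

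First I would fix an arbitrary $V \subseteq S \cup Q$ satisfying $P(V) \geq \xi/(c-1)$ and $P(V \cap (S_k \cup Q_k)) \leq 1/2$ for every $k \in [K]$, and abbreviate $p_k := P(V \cap (S_k \cup Q_k))$. The class-wise constraint $p_k \leq 1/2$ is exactly the threshold $a = 1/2$ appearing in the $(1/2,c)$-multiplicative expansion hypothesis, so invoking it once per class gives
\[
P\bigl(\mathcal{N}(V) \cap (S_k \cup Q_k)\bigr) \;\geq\; \min\{c\, p_k, \, 1\}.
\]

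Next I would use the fact that $\mathcal{B}(x)$ is non-empty (it contains $A(x)$ for any $A \in \mathcal{A}$), so $\mathcal{B}(x) \cap \mathcal{B}(x) \neq \emptyset$ implies $x \in \mathcal{N}(x)$, and consequently $V \subseteq \mathcal{N}(V)$. Subtracting $p_k$ from both sides of the display above yields
\[
P\bigl((\mathcal{N}(V) \setminus V) \cap (S_k \cup Q_k)\bigr) \;\geq\; \min\{(c-1)\, p_k, \, 1 - p_k\}.
\]
I would then split into two cases to aggregate these class-wise bounds. In the non-saturating case $c\, p_k \leq 1$ for every $k$, summing over $k$ and using $P(V) = \sum_k p_k$ gives $P(\mathcal{N}(V) \setminus V) \geq (c-1)P(V) \geq \xi$ by the hypothesis $P(V) \geq \xi/(c-1)$. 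In the complementary case some class $k_0$ saturates with $c\, p_{k_0} > 1$, and then $1 - p_{k_0} \geq 1/2$ already contributes enough to absorb $\xi$ (after restricting to $\xi \leq 1/2$ by monotonicity; the statement is trivial for $\xi \geq 1/2$ since then $\min\{P(V),\xi\}$ is bounded by an analogous argument). Either way $P(\mathcal{N}(V) \setminus V) \geq \xi \geq \min\{P(V), \xi\}$, which is the required $(\xi/(c-1), \xi)$-constant expansion.

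The main obstacle I anticipate is the bookkeeping in the case split: when $c$ is only slightly larger than $1$ the factor $c-1$ is tiny, so a purely class-wise comparison $\min\{(c-1)p_k, 1-p_k\} \geq \min\{p_k,\xi\}$ fails outright, and the argument must \emph{globally} invoke $P(V) \geq \xi/(c-1)$ after summation. A clean way to resolve this is to observe that the function $p \mapsto \min\{(c-1)p, 1-p\}$ attains its minimum on $[0,1/2]$ only at the endpoints of the two linear regimes, so one can handle the saturated and unsaturated regimes by independently lower bounding their contributions and combining them with the total-mass constraint, rather than reasoning class by class.
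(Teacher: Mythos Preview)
The paper does not actually prove this proposition: it is quoted as Lemma~C.6 of \cite{wei2020theoretical} and used as a black box in deriving Lemma~2/Lemma~4 and Theorem~3, so there is no in-paper argument against which to compare your approach.

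That said, there is a genuine mismatch between what you prove and what the paper's definition of constant expansion demands. The paper's Definition~2 requires, for every class index $k$, the \emph{per-class} bound
\[
P\bigl((\mathcal{N}(V)\setminus V)\cap(S_k\cup Q_k)\bigr)\;\ge\;\min\bigl\{P(V\cap(S_k\cup Q_k)),\,\xi\bigr\},
\]
whereas your argument sums over $k$ and concludes only the global inequality $P(\mathcal{N}(V)\setminus V)\ge\min\{P(V),\xi\}$. You already flag in your ``obstacles'' paragraph that the class-wise comparison $\min\{(c-1)p_k,\,1-p_k\}\ge\min\{p_k,\xi\}$ breaks when $c-1$ is small; but that class-wise comparison is precisely the conclusion the paper states, and passing to the sum to exploit $P(V)\ge\xi/(c-1)$ does not recover it. For instance, with $c=3/2$, $\xi=1/10$ (so $q=1/5$) and $V$ spread evenly so that $p_k=1/20$ for four classes, multiplicative expansion only yields $(c-1)p_k=1/40$ per class, while the per-class target is $\min\{p_k,\xi\}=1/20$. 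So either the definition recorded in this paper differs from the one in \cite{wei2020theoretical} (whose global formulation your proof is effectively aimed at), or the proposition as written here needs an extra restriction such as $c\ge 2$; in neither case does your summation argument deliver the per-class conclusion stated in this paper's Definition~2.
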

We can then have following results for connecting DAC-MR and meta-loss calculated with high-quality meta-data.
\begin{Theorem}[Bounding the Meta Error with Constant Expansion] \label{thm1}
	Supposed that Assumption \ref{assumption1} holds and $\frac{1}{2}(S+Q)$ satisfies $(q,\xi)$-constant expansion, and then we have
	\begin{align*}
			\epsilon^{Q}(f_h) \!\leq\! \left\{\! \frac{\kappa}{1-q}\!+\! \frac{1}{\min \{q,\xi\}}\!\right\} \! R_{Q}^{\mathcal{B}}(f_h)
			 + \frac{2\kappa\max (q, R_{Q}^{\mathcal{B}}(f_h))}{1-q} .
	\end{align*}
\end{Theorem}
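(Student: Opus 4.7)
The plan is to bound the meta error by first reducing to the mixture measure $P_{\mathrm{avg}} = \tfrac{1}{2}(S+Q)$ via Assumption~\ref{assumption1}, and then to control the misclassification mass on $P_{\mathrm{avg}}$ through the interplay between the consistency loss $R_Q^{\mathcal{B}}(f_h)$ and the expansion property. Concretely, define the class-wise misclassification sets $U_k = \{x \in S_k \cup Q_k : f_h(x) \neq y_k\}$ and let $U = \bigcup_k U_k$, so that $\epsilon^Q(f_h) = \sum_k P(Q_k \cap U_k)$. Applying the shift inequality in Assumption~\ref{assumption1} with $A = U_k$ and summing over $k$ yields $\epsilon^Q(f_h) \leq \kappa \, P_{\mathrm{avg}}(U)$, up to the $(1-q)^{-1}$ rescaling that is needed when the residual mass on any single class is large. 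It therefore suffices to bound $P_{\mathrm{avg}}(U)$ using only the DAC-MR loss together with the expansion constants $(q,\xi)$.

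The key geometric step is to separate $U$ into a \emph{robust} part $U^{\mathrm{rob}} = \{x \in U : f_h \text{ is constant on } \mathcal{B}(x)\}$ and a \emph{non-robust} part $U^{\mathrm{nr}} = U \setminus U^{\mathrm{rob}}$, noting that $U^{\mathrm{nr}}$ is contained in the non-robust region $R = \{x : \exists\, x' \in \mathcal{B}(x),\ f_h(x') \neq f_h(x)\}$ whose measure is exactly $R_Q^{\mathcal{B}}(f_h)$ under $Q$. The central observation is the inclusion
\begin{equation*}
\bigl(\mathcal{N}(U^{\mathrm{rob}}) \setminus U^{\mathrm{rob}}\bigr) \cap (S_k \cup Q_k) \subset R.
\end{equation*}
Indeed, if $x' \in \mathcal{N}(U^{\mathrm{rob}})$ with witness $x \in U^{\mathrm{rob}}_k$ and overlap point $y \in \mathcal{B}(x) \cap \mathcal{B}(x')$, then $f_h(y) = f_h(x) \neq y_k$ by robustness of $x$; hence either $x'$ is itself misclassified (so $x' \in U^{\mathrm{nr}}_k \subset R$, as $x' \notin U^{\mathrm{rob}}$), or $x'$ is correctly classified, in which case $f_h(x') \neq f_h(y)$ with $y \in \mathcal{B}(x')$ forces $x' \in R$.

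Next, apply the $(q,\xi)$-constant expansion assumption to $V = U^{\mathrm{rob}}$. Provided $P_{\mathrm{avg}}(V) \geq q$ and $P_{\mathrm{avg}}(V \cap (S_k \cup Q_k)) \leq 1/2$ for every $k$, summing the per-class expansion inequality and combining with the geometric inclusion gives $P_{\mathrm{avg}}(R) \geq \min\{P_{\mathrm{avg}}(V), \xi\}$. A case analysis then yields $P_{\mathrm{avg}}(V) \leq P_{\mathrm{avg}}(R) / \min\{q, \xi\}$: when $P_{\mathrm{avg}}(V) \leq \xi$, this is direct; when $P_{\mathrm{avg}}(V) > \xi$, one uses $P_{\mathrm{avg}}(V) \leq 1 \leq P_{\mathrm{avg}}(R)/\xi$. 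The two residual cases, namely $P_{\mathrm{avg}}(V) < q$ and some per-class mass exceeding $1/2$, each contribute a slack bounded by $\max\{q, P_{\mathrm{avg}}(R)\}$, which is ultimately responsible for the additive $\frac{2\kappa \max(q, R_Q^{\mathcal{B}}(f_h))}{1-q}$ term. Summing $P_{\mathrm{avg}}(U) \leq P_{\mathrm{avg}}(U^{\mathrm{rob}}) + P_{\mathrm{avg}}(U^{\mathrm{nr}}) \leq P_{\mathrm{avg}}(R)/\min\{q,\xi\} + P_{\mathrm{avg}}(R) + \text{slack}$ and translating $P_{\mathrm{avg}}(R)$ back to $R_Q^{\mathcal{B}}(f_h)$ via Assumption~\ref{assumption1} produces the stated inequality.

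The main obstacle is the careful bookkeeping in the last step: tracking how the $\kappa$ from the shift assumption combines with the $\min\{q,\xi\}^{-1}$ from expansion to produce the separate coefficients $\frac{\kappa}{1-q}$ and $\frac{1}{\min\{q,\xi\}}$, and in particular showing that the degenerate case where a single class has $P_{\mathrm{avg}}(V \cap (S_k \cup Q_k)) > 1/2$ contributes only the $\frac{2\kappa \max(q, R_Q^{\mathcal{B}}(f_h))}{1-q}$ slack rather than a multiplicative term. This last step requires a dichotomy argument: either the DAC-MR loss is already large (giving the $R_Q^{\mathcal{B}}$ component of the $\max$), or the dominant class's residual mass can be absorbed into $q$ using the $(1-q)^{-1}$ factor that arises when inverting the shift inequality on the complementary set. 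The rest of the argument, including the geometric inclusion and the symmetric expansion manipulation, is fairly mechanical once the decomposition $U = U^{\mathrm{rob}} \sqcup U^{\mathrm{nr}}$ is in place.
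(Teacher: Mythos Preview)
Your route differs substantially from the paper's, and as written it cannot produce the stated coefficients.

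The paper does \emph{not} work with the true misclassification sets $U_k=\{x:f_h(x)\neq y_k\}$ and does not split them into robust and non-robust parts. Instead it introduces the classifier's \emph{majority label} $\hat y_k=\arg\max_y P(f_h(x)=y\mid x\in\mathcal X_k)$ and the associated \emph{minority sets} $M_k=\{x\in\mathcal X_k:f_h(x)\neq \hat y_k\}$, which by construction satisfy $P(M_k)\le 1/2$. It then partitions the \emph{classes} (not the points) into $S_1=\{k:\mathbb E_{Q_k}\mathbf 1[\exists x'\in\mathcal N(x),f_h(x)\neq f_h(x')]\le\min\{q,\xi\}\}$ and $S_2=[K]\setminus S_1$. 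The term $\tfrac{1}{\min\{q,\xi\}}R_Q^{\mathcal B}(f_h)$ comes from a direct Markov-type bound on $\sum_{k\in S_2}P(Q_k\cap Q)$ (their Lemma~1), which never passes through the shift inequality and therefore carries no factor $\kappa$. The term $\tfrac{\kappa}{1-q}R_Q^{\mathcal B}(f_h)+\tfrac{2\kappa}{1-q}\max(q,R_Q^{\mathcal B}(f_h))$ comes from the $S_1$ classes: one distinguishes $y_k=\hat y_k$ from $y_k\neq\hat y_k$, uses an accuracy-propagation lemma (their Lemma~3) to show that in either case $\epsilon_k^Q\le\tfrac{\kappa}{1-q}P\bigl(M_k\cap\tfrac12(S+Q)\bigr)$, and then invokes a minority-set bound (their Lemma~5, taken from Cai--Gao--Lee--Lei) giving $P\bigl(M\cap\tfrac12(S+Q)\bigr)\le 2\max(q,R_Q^{\mathcal B})+R_Q^{\mathcal B}$.

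Your plan applies the shift inequality \emph{first}, writing $\epsilon^Q\lesssim\kappa\,P_{\mathrm{avg}}(U)$, so every subsequent term inherits a factor of $\kappa$; in particular your expansion step yields $\tfrac{\kappa}{\min\{q,\xi\}}$ rather than $\tfrac{1}{\min\{q,\xi\}}$, which does not match the statement. Second, your treatment of the case ``some per-class mass exceeds $1/2$'' is where the argument actually breaks: if $P_{\mathrm{avg}}(U_k^{\mathrm{rob}})>1/2$ the expansion hypothesis fails for $V=U^{\mathrm{rob}}$ on class $k$, and there is no mechanism in your decomposition to recover a bound of size $\max\{q,P_{\mathrm{avg}}(R)\}$ on that class. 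The paper avoids this entirely because $P(M_k)\le 1/2$ is automatic for minority sets; the case $y_k\neq\hat y_k$ (which corresponds to your ``$>1/2$'' scenario) is handled separately via Lemma~3, which shows that then $P(M_k\cap Q)\ge(1-q)P(Q_k\cap Q)$, so $\epsilon_k^Q\le P(Q_k\cap Q)\le\tfrac{1}{1-q}P(M_k\cap Q)$. Finally, your last step ``translating $P_{\mathrm{avg}}(R)$ back to $R_Q^{\mathcal B}(f_h)$ via Assumption~\ref{assumption1}'' goes in the wrong direction: Assumption~\ref{assumption1} bounds $Q$-mass by $\kappa$ times $P_{\mathrm{avg}}$-mass, not the reverse.

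In short, the missing ingredients are (i) the majority-label/minority-set reformulation, which is what makes the expansion hypothesis automatically applicable, and (ii) the class-level partition $S_1\cup S_2$, which is what separates the $\kappa/(1-q)$ and $1/\min\{q,\xi\}$ contributions so that the latter is $\kappa$-free.
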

\begin{Theorem}[Bounding the Meta Error with Multiplicative Expansion] \label{thm2}
	Supposed that Assumption \ref{assumption1} holds and $\frac{1}{2}(S+Q)$ satisfies $(\frac{1}{2},c)$-multiplicative expansion, and then we have
	\begin{align*}
		\epsilon^{Q}(f_h)\! \leq\! \left\{ \!\max \!\left(\! \frac{\kappa (c+1)}{c-1-\xi}, 3(c-1)\!\right) \!+\! \frac{1}{\min (\frac{\xi}{c-1}, \xi)}\!\right\} \! R_{Q}^{\mathcal{B}}(f_h).
	\end{align*}
\end{Theorem}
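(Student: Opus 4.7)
The plan is to derive Theorem 2 as a direct corollary of Theorem 1, using Proposition 1 to bridge multiplicative and constant expansion. Concretely, I would first invoke Proposition 1 on the mixed distribution $\frac{1}{2}(S+Q)$, which converts the $(\tfrac{1}{2},c)$-multiplicative expansion hypothesis into a parameterised family of $(\xi/(c-1),\xi)$-constant expansion statements indexed by $\xi>0$. Feeding the choice $q=\xi/(c-1)$ into Theorem 1 then yields an intermediate inequality of the form
\begin{align*}
\epsilon^Q(f_h) \leq \left\{ \frac{\kappa(c-1)}{c-1-\xi} + \frac{1}{\min(\xi/(c-1),\xi)} \right\} R_Q^{\mathcal{B}}(f_h) + \frac{2\kappa \max\{\xi/(c-1),\, R_Q^{\mathcal{B}}(f_h)\}}{1-\xi/(c-1)},
\end{align*}
which already matches the structure of Theorem 2 except for the $\max$ coefficient and the residual additive tail.

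The second step is to absorb the tail $\frac{2\kappa\max\{q,R_Q^{\mathcal{B}}(f_h)\}}{1-q}$ into a purely multiplicative bound in $R_Q^{\mathcal{B}}(f_h)$. I would case-split on $R_Q^{\mathcal{B}}(f_h)$ relative to $\xi/(c-1)$. In the large-loss regime $R_Q^{\mathcal{B}}(f_h)\geq\xi/(c-1)$, the $\max$ resolves to $R_Q^{\mathcal{B}}(f_h)$ and the tail merges with the leading coefficient; consolidating the constants gives the multiplier $\kappa(c+1)/(c-1-\xi)$ appearing in the statement. In the small-loss regime Theorem 1 alone is insufficient, since the residual tail becomes a constant of order $\kappa\xi/(c-1-\xi)$ that no longer scales with $R_Q^{\mathcal{B}}(f_h)$. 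Here I would run a short direct multiplicative-expansion argument on the error set $E=\{x:f_h(x)\neq g^*(x)\}$: invoking $(1/2,c)$-multiplicative expansion for $E$ and its complement, and controlling the expanded boundary $P(\mathcal{N}(E)\setminus E)$ by $R_Q^{\mathcal{B}}(f_h)$ through Assumption \ref{assumption1} and Eq.(\ref{eqshift}) (which transfers measures between $\frac{1}{2}(S+Q)$ and $Q$), delivers an alternative bound $\epsilon^Q(f_h) \leq 3(c-1)\, R_Q^{\mathcal{B}}(f_h)$. Taking the maximum of the two regime-specific multipliers, and carrying along the additive $1/\min(\xi/(c-1),\xi)$ contribution from Theorem 1, recovers the stated inequality.

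The main obstacle I expect is the small-loss regime, where the additive residue in Theorem 1 cannot be absorbed by any simple multiplicative rescaling of the right-hand side; a parallel argument exploiting the multiplicative expansion directly is unavoidable. Iterating $\mathcal{N}$ on $E$ up to the termination threshold $1/2$, verifying each expansion step under the distribution-shift control of Eq.(\ref{eqshift}), and pinning down the precise constant $3(c-1)$ rather than merely $c-1$ is the delicate piece. The remaining work — invoking Proposition 1, plugging into Theorem 1, and simplifying the closed-form coefficients — is routine algebra.
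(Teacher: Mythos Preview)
Your strategy diverges from the paper in a way that introduces unnecessary work and leaves a real gap. The paper's ``proof'' is literally one sentence --- plug $q=\xi/(c-1)$ into Theorem~1 --- but that sentence is shorthand for re-running the \emph{proof} of Theorem~1 with the multiplicative-expansion versions of its supporting lemmas (Lemmas~2, 4, and in particular Lemma~5(a) in the appendix). The crucial point you miss is Lemma~5(a): under $(1/2,c)$-multiplicative expansion it gives directly
\[
P\!\left(M\cap \tfrac{1}{2}(S+Q)\right)\;\le\;\max\!\left(\tfrac{c+1}{c-1},\,3\right) R_Q^{\mathcal B}(f_h),
\]
which is already purely multiplicative in $R_Q^{\mathcal B}(f_h)$. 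Substituting this in place of Lemma~5(b) inside the $S_1$ branch of the Theorem~1 argument, and multiplying by the factor $\kappa/(1-q)=\kappa(c-1)/(c-1-\xi)$, is exactly what produces the coefficient $\max\bigl(\kappa(c+1)/(c-1-\xi),\,3(c-1)\bigr)$ in the statement (modulo what appears to be a harmless slackening of the second argument of the max). No case split and no separate small-loss argument are needed.

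Your route --- treat Theorem~1 as a black box, then repair the additive tail --- has two concrete problems. First, the large-loss consolidation is wrong: with $q=\xi/(c-1)$ and $R_Q^{\mathcal B}\ge q$, the leading $\kappa/(1-q)$ term plus the tail $2\kappa/(1-q)$ yields $3\kappa(c-1)/(c-1-\xi)$, not $\kappa(c+1)/(c-1-\xi)$. Second, and more seriously, your small-loss argument applies multiplicative expansion to the \emph{error set} $E=\{x:f_h(x)\neq g^*(x)\}$, but expansion requires $P(E\cap(S_k\cup Q_k))\le 1/2$ classwise, which you have no a priori control over --- some classes may be majority-wrong. The object on which the expansion argument actually works is the \emph{minority set} $M$, whose classwise measure is at most $1/2$ by definition; this is precisely what Lemma~5(a) exploits. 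Additionally, your measure-transfer via Eq.~(\ref{eqshift}) runs in the wrong direction for bounding $P(\mathcal N(E)\setminus E)$ on the mixed distribution by $R_Q^{\mathcal B}$. Rewriting the argument to track $M$ rather than $E$ and invoking Lemma~5(a) collapses your two regimes into one and recovers the paper's proof.
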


As can be seen, the expected meta-loss $\epsilon^{Q}(f_h)$ can be upper bounded by the expected DAC-MR $R_{Q}^{\mathcal{B}}(f_h)$. This implies that when the expected meta-loss is intractable (i.e., $\gamma = 0$ in Eq.(\ref{eqdacmr})), DAC-MR can be regarded as an approximated meta-supervised information to evaluate meta-model. In \S\ref{section43}, \S\ref{section42}, \S\ref{section51}, \S\ref{section52}, \S\ref{section71}, \S\ref{section72}, we will empirically show that DAC-MR can be finely used as a proxy meta-objective to meta-learn a well-performing meta-model when meta-data are noisy or even unavailable.

\section{DAC-MR Benefits Few-Shot Learning}\label{section4}
In this section, we study whether DAC-MR can help improve meta-learning algorithms for three typical few-shot learning tasks, including inductive (\S\ref{section41}), cross-domain (\S\ref{section43}), and transductive/semi-supervised (\S\ref{section42}) situations.

\vspace{-4mm}
\subsection{Inductive Few-Shot Learning}  \label{section41}

\textbf{Formulation.}
We consider four typical meta-learning methods: MAML \cite{finn2017model}, ProtoNet \cite{snell2017prototypical}, R2D2 \cite{bertinetto2018meta}, and MetaOptNet \cite{lee2019metaopt} with SOTA performance. All of them use the meta-objective computed on meta (query) dataset as in Eq.(\ref{eqout}) to evaluate meta-model. We introduce DAC-MR computed on meta-data as in Eq.(\ref{eqdacmr}) to further improve meta-level generalization of meta-model ($\gamma=\lambda=1$).
The implementation is adapted from the official implementations of MetaOptNet available at \url{https://github.com/kjunelee/MetaOptNet}.

\noindent\textbf{Results.} Table \ref{tablefsl} summarizes the results on the 5-way miniImageNet and CIFAR-FS classification tasks with different shots.
Considering different settings of compared methods, we replicate to the best efforts with exactly the same setup following \cite{lee2019metaopt}. It is seen that  DAC-MR does help consistently improve meta-level generalization error from existing meta-learning algorithms in all cases. This implies that our proposed meta-regularization scheme is model-agnostic, in the sense that it can be directly applied to meta-regularize meta-model for different meta-learning algorithms.
In addition to comparing with ProtoNet \cite{snell2017prototypical} and R2D2 \cite{bertinetto2018meta} on their original small backbones, we also compare with both methods with larger convolutional backbones, i.e., ResNet-12. Interestingly, ProtoNet and R2D2 both show competitive results, and especially R2D2 already
performs better than MetaOptNet on CIFAR-FS. Then, DAC-MR can still consistently provide improvements to both (enhanced) baselines with an obvious gap, which yields at least 1.5\% and 2.4\% 1-shot accuracy improvement for ProtoNet and R2D2, respectively. Besides, DAC-MR brings more notable gains to 1-shot accuracy
than to 5-shot in most cases, which is reasonable because dimension-free error of DAC-MR brings more profits than dimension-dependent error of vanilla meta learning when the size of meta-data is small according to Theorem \ref{th1}. This highlights the advantage of DAC-MR for such limited data learning scenario.

 \begin{table*} \vspace{-4mm}
	\caption{Classification results of transductive/semi-supervised benchmarks on three datasets. $^\dag$ denotes results replicated by us to the best effort.} \vspace{-4mm} \label{tableilpc}
	\centering
	\scriptsize
	\setlength{\tabcolsep}{0.5mm}
	\begin{tabular}{c|l|l|l|l|l|l|l}
		\toprule
		\multirow{2}{*}{\textbf{Setting} } &  \multirow{2}{*}{\textbf{Methods} } & \multicolumn{2}{c|}{\textbf{ miniImageNet}}  & \multicolumn{2}{c|}{\textbf{CIFAR-FS}} & \multicolumn{2}{c}{\textbf{CUB}}   \\ \cline{3-8}
		&  &  \multicolumn{1}{c|}{1-shot} &\multicolumn{1}{c|}{5-shot} &\multicolumn{1}{c|}{1-shot} &\multicolumn{1}{c|}{5-shot}& \multicolumn{1}{c|}{1-shot} &\multicolumn{1}{c}{5-shot}      \\ \hline
		& LR+ICI \cite{wang2020instance} &    81.31 $\pm$ 0.84 &  88.53 $\pm$ 0.43 &  86.03 $\pm$ 0.77 & 89.57 $\pm$ 0.53 & 90.82 $\pm$ 0.59 & \multicolumn{1}{c}{-} \\
		{Semi-Supervised} & PT+MAP \cite{hu2021leveraging} & 83.14 $\pm$ 0.72 & 88.95 $\pm$ 0.38 & 87.05 $\pm$ 0.69 &89.98 $\pm$ 0.49 & 91.52 $\pm$ 0.53 & \multicolumn{1}{c}{-} \\
		(WRN-28-10) & iLPC \cite{lazarou2021iterative} & 83.58 $\pm$ 0.79 &89.68 $\pm$ 0.37 & 87.03 $\pm$ 0.72& 90.34 $\pm$ 0.50 &91.69 $\pm$ 0.55 &\multicolumn{1}{c}{-} \\  \cline{2-8}
		& iLPC$^\dag$ & 82.75 $\pm$ 0.80 & 88.32 $\pm$ 0.59 & 87.62 $\pm$ 0.70 & 90.39 $\pm$ 0.49 & 91.27 $\pm$ 0.58 &   \multicolumn{1}{c}{-}  \\
		& iLPC + DAC-MR & \textbf{85.93 $\pm$ 0.71} \textcolor{red}{3.08$\uparrow$} & \textbf{90.61 $\pm$ 0.34} \textcolor{red}{2.29$\uparrow$} & \textbf{88.84 $\pm$ 0.66} \textcolor{red}{1.22$\uparrow$} & \textbf{90.72 $\pm$ 0.48} \textcolor{red}{0.33$\uparrow$} & \textbf{93.07 $\pm$ 0.48} \textcolor{red}{1.80$\uparrow$} &  \multicolumn{1}{c}{-}    \\ \hline
		& EP \cite{rodriguez2020embedding} & 70.74 $\pm$ 0.85 &84.34 $\pm$ 0.53 & \multicolumn{1}{c|}{-}   & \multicolumn{1}{c|}{-}    & \multicolumn{1}{c|}{-}    & \multicolumn{1}{c}{-}   \\
		& SIB \cite{hu2019empirical} & 70.00 $\pm$ 0.60 &79.20 $\pm$ 0.40 &80.00 $\pm$ 0.60 &85.3 $\pm$ 0.40 & \multicolumn{1}{c|}{-}   &  \multicolumn{1}{c}{-}    \\
		Transductive& LaplacianShot \cite{ziko2020laplacian} & 74.86 $\pm$ 0.19 &84.13 $\pm$ 0.14 &  \multicolumn{1}{c|}{-}  & \multicolumn{1}{c|}{-}   &  \multicolumn{1}{c|}{-}  &  \multicolumn{1}{c}{-}    \\
		(WRN-28-10)& PT+MAP \cite{hu2021leveraging}  & 82.88 $\pm$ 0.73& 88.78 $\pm$ 0.40& 86.91 $\pm$ 0.72 &90.50 $\pm$ 0.49 &91.37 $\pm$ 0.61 &93.93 $\pm$ 0.32\\
		& iLPC \cite{lazarou2021iterative} & 83.05 $\pm$ 0.79 &88.82 $\pm$ 0.42 & 86.51 $\pm$ 0.75& 90.60 $\pm$ 0.48& 91.03 $\pm$ 0.63&94.11 $\pm$ 0.30 \\ \cline{2-8}
		& iLPC$^\dag$ & 82.34 $\pm$ 0.79 & 88.42 $\pm$ 0.42 & 87.05 $\pm$ 0.77 & 90.39 $\pm$ 0.51 & 91.16 $\pm$ 0.61 & 94.03 $\pm$ 0.30 \\
		& iLPC + DAC-MR & \textbf{84.38 $\pm$ 0.76} \textcolor{red}{2.04$\uparrow$}&  \textbf{89.50 $\pm$ 0.41} \textcolor{red}{1.08$\uparrow$}& \textbf{87.69 $\pm$ 0.75} \textcolor{red}{0.64$\uparrow$}& \textbf{90.82 $\pm$ 0.51} \textcolor{red}{0.43$\uparrow$}& \textbf{92.05 $\pm$ 0.59} \textcolor{red}{0.89$\uparrow$}& \textbf{94.48 $\pm$ 0.29} \textcolor{red}{0.45$\uparrow$}\\
		\bottomrule
	\end{tabular}\vspace{-2mm}
\end{table*}

\begin{table*} \vspace{-2mm}
	\caption{Classification results of unsupervised domain adaptation benchmark on Office-Home dataset. $^\dag$ indicates our reproduced results to the best effort.} \vspace{-4mm} \label{tableoffice}
	\centering
	\scriptsize
	\setlength{\tabcolsep}{1.5mm}
	\begin{tabular}{c|c|c|c|c|c|c|c|c|c|c|c|c|c}
		\toprule
		Method & Ar-Cl & Ar-Pr& Ar-Rw& Cl-Ar& Cl-Pr& Cl-Rw& Pr-Ar& Pr-Cl& Pr-Rw &Rw-Ar& Rw-Cl& Rw-Pr& Avg. \\ \hline
		DANN \cite{ganin2016domain} & 45.6& 59.3 &70.1& 47.0& 58.5& 60.9& 46.1& 43.7& 68.5& 63.2& 51.8& 76.8 &57.6\\
		CDAN \cite{long2018conditional} & 50.7 &70.6 &76.0 &57.6 &70.0& 70.0 &57.4 &50.9& 77.3 &70.9 &56.7& 81.6& 65.8\\
		CDAN+VAT+Entropy& 52.2& 71.5& 76.4& 61.1& 70.3 &67.8& 59.5& 54.4& 78.6& 73.2 &59.0& 82.7& 67.3 \\
		FixMatch \cite{sohn2020fixmatch} & 51.8& 74.2& 80.1& 63.5& 73.8 &61.3& 64.7& 51.4& 80.0& 73.3& 56.8 &81.7& 67.7 \\
		MDD \cite{zhang2019bridging} & 54.9 &73.7& 77.8& 60.0& 71.4& 71.8& 61.2& 53.6& 78.1& 72.5 &60.2& 82.3& 68.1\\
		SENTRY \cite{prabhu2021sentry}& 61.8& 77.4 &80.1& 66.3 &71.6& 74.7& 66.8 &63.0& 80.9 &74.0 &66.3 &84.1 &72.2 \\
		CST\cite{liu2021cycle}& 59.0 &79.6& 83.4& 68.4 &77.1 &76.7 &68.9 &56.4 &83.0& 75.3& 62.2& 85.1 &73.0\\ \hline
		CST\cite{liu2021cycle}$^\dag$ & 58.4 &80.6& 83.1& 66.9 &76.4 &77.0 &68.1 &55.1 &82.7& 74.4& 61.2& 85.1 &72.4\\
		DAC-MR with $\gamma=0,\lambda=1$ & \underline{58.4}  &  \underline{81.1} & \underline{83.1} &  \underline{67.7} & \underline{77.6} & \underline{77.2} & \textbf{69.0} & \underline{55.5} &  \underline{82.7 } & \underline{75.3} & \textbf{63.1} & \textbf{85.7} & \underline{73.0} \\
		DAC-MR with $\gamma=1,\lambda=1$ & \textbf{59.5}& \textbf{81.2}&\textbf{83.6}&\textbf{68.3}&\textbf{77.9}&\textbf{78.1}&\underline{68.2}&\textbf{56.3}&\textbf{83.2}&\textbf{75.7}&\underline{62.5}&\underline{85.2}  & \textbf{73.7}\\ \bottomrule
	\end{tabular}\vspace{-4mm}
\end{table*}

\begin{table} \vspace{-2mm}
	\caption{Classification results of unsupervised domain adaptation benchmark on VisDA-2017. $^\dag$ indicates the our reproduced results to the best effort.} \vspace{-4mm} \label{tablevisda}
	\centering
	\scriptsize
	\setlength{\tabcolsep}{1.5mm}
	\begin{tabular}{c|cc}
		\toprule
		Method & ResNet-50  & ResNet-101\\ \hline
		DANN \cite{ganin2016domain} &69.3 &79.5 \\
		CDAN \cite{long2018conditional} & 70.0 &80.1 \\
		VAT \cite{miyato2018virtual} & 68.0  & 73.4  \\
		CDAN+VAT+Entropy& 76.5 & 80.4  \\
		FixMatch \cite{sohn2020fixmatch} & 74.5& 79.5  \\
		MDD \cite{zhang2019bridging} & 74.6  & 81.6 \\
		SENTRY \cite{prabhu2021sentry}& 76.7& -  \\
		CST\cite{liu2021cycle}& 80.6 & 86.5 \\ \hline
		CST\cite{liu2021cycle}$^\dag$ & 76.5 & 86.6 \\  \hline
		DAC-MR with $\gamma=0,\lambda=1$ & 76.6  & 87.1  \\
		DAC-MR with $\gamma=1,\lambda=1$ & \textbf{77.2}& \textbf{87.2}\\ \bottomrule
	\end{tabular}\vspace{-5mm}
\end{table}

\vspace{-8mm}
\subsection{Cross-Domain Few-Shot Learning} \label{section43}
\textbf{Formulation.} We evaluate DAC-MR in cross-domain FSL with lastest SOTA method TSA \cite{li2022cross}. Formally, given task-agnostic feature extractor $h_{\phi}$ learned from a large source dataset $D_b$, TSA adapts to target FSL tasks $(\mathcal{D}^{(s)},\mathcal{D}^{(q)})$ by learning task-specific weights, where support set $\mathcal{D}^{(s)}$ and query set $\mathcal{D}^{(q)}$ are sampled from dataset $D_t$ and $D_b$, $D_t$ contain mutually exclusive classes and domain gap.
To achieve better task-specific adaptation, TSA \cite{li2022cross} proposed an adapter $\nu = (\alpha,\beta)$ to adapt the feature extractor and classifier, respectively.
Please refer to \cite{li2022cross} for more details about the architecture of the adapter. To obtain the task-specific weight $v$, they freeze the task-agnostic weight $\phi$ and then minimize cross-entropy loss over the support samples:
\begin{align*}
	\min_{\nu} \frac{1}{N} \sum_{i=1}^N \ell(g_{(\phi,\nu)}(x_i),y_i), \mathcal{D}^{(s)} =\{(x_i,y_i)\}_{i=1}^N
\end{align*}
where $g_{(\phi,\nu)}$ is the predicted probability vector by attaching task-specific weights to a learned task-agnostic model. To better cope with domain gap between source and target tasks, we introduce DAC-MR into TAS to make task-agnostic feature extractor adapt target FSL tasks by:
\begin{align*}
	\min_{{{\phi}}}\	& \mathcal{MR}^{dac} (D;g_{(\phi,\nu^*(\phi) )},A ),	  A \in \mathcal{A},  D = \{x_i\}_{i=1}^N     \notag  \\
	&\nu^*(\phi)  = \arg\min_{\nu} \frac{1}{N} \sum_{i=1}^N \ell(g_{(\phi,\nu)}(x_i),y_i).
\end{align*}
To solve above objective, we use single step approximation of inner optimization and first-order approximation of outer optimization to improve computational efficiency. Our implementation is adapted from the official implementations of TSA available at \url{https://github.com/VICO-UoE/URL}.

\begin{table*} \vspace{-4mm}
	\caption{Worst case (WC) and average (Avg) accuracies of four domain generalization benchmarks. $^\dag$ indicates results replicated by us to the best effort. } \vspace{-4mm} \label{tablearm}
	\centering
	\scriptsize
	\setlength{\tabcolsep}{1.5mm}
	\begin{tabular}{l|c|c|c|c|c|c|c|c}
		\toprule
		\multirow{2}{*}{\textbf{Methods} }  & \multicolumn{2}{c|}{\textbf{Rotated MNIST}} & \multicolumn{2}{c|}{\textbf{FEMNIST}}  & \multicolumn{2}{c|}{\textbf{CIFAR-10-C}} & \multicolumn{2}{c}{\textbf{Tiny ImageNet-C}}   \\ \cline{2-9}
		&   WC &Avg &WC &Avg &WC &Avg& WC &Avg       \\ \hline
		ERM & 74.5 $\pm$ 1.4 & 93.6 $\pm$ 0.4 &62.4 $\pm$ 0.4 &79.1 $\pm$ 0.3& 54.1 $\pm$ 0.3 &70.4 $\pm$ 0.1 &20.3 $\pm$ 0.5 &41.9 $\pm$ 0.1 \\
		UW \cite{sagawa2019distributionally} & 80.3 $\pm$ 1.2  &95.1 $\pm$ 0.1 &65.7 $\pm$ 0.7 &80.3 $\pm$ 0.6 & - & - & - & - \\
		DRNN \cite{sagawa2019distributionally} & 79.9 $\pm$ 0.7 &94.9 $\pm$ 0.1 &57.5 $\pm$ 1.7 &76.5 $\pm$ 1.2 &49.3 $\pm$ 0.9 &65.7 $\pm$ 0.5 &14.2 $\pm$ 0.2 &31.6 $\pm$ 1.0 \\
		DANN \cite{ganin2016domain} & 78.8 $\pm$ 0.8& 94.9 $\pm$ 0.1 &65.4 $\pm$ 1.0& 81.7 $\pm$ 0.3 &53.9 $\pm$ 2.2 &69.8 $\pm$ 0.3& 20.4 $\pm$ 0.7& 40.9 $\pm$ 0.2 \\
		MMD \cite{zhang2019bridging} & 82.4 $\pm$ 0.9& 95.3 $\pm$ 0.3& 62.4 $\pm$ 0.7& 79.8 $\pm$ 0.4 &52.2 $\pm$ 0.3 &69.5 $\pm$ 0.1 &19.7 $\pm$ 0.2 &40.1 $\pm$ 0.1  \\
		BN adaptation \cite{schneider2020improving} & 78.0 $\pm$ 0.3 &94.4 $\pm$ 0.1 &65.7 $\pm$ 1.5& 80.0 $\pm$ 0.5 &60.6 $\pm$ 0.3 &70.9 $\pm$ 0.1& 26.5 $\pm$ 0.3 &42.8 $\pm$ 0.0  \\
		TTT \cite{sun2020test} &  81.1 $\pm$ 0.3 &95.4 $\pm$ 0.1 &68.6 $\pm$ 0.4 &84.2 $\pm$ 0.1& 61.5 $\pm$ 0.3& 71.7 $\pm$ 0.5& 27.6 $\pm$ 0.5 &37.7 $\pm$ 0.3  \\ \hline
		%ARM-CML & 88.0 $\pm$ 0.8 & 96.3 $\pm$ 0.4 & 70.9 $\pm$ 1.4& 86.4 $\pm$ 0.3 &61.2 $\pm$ 0.4 &70.3 $\pm$ 0.2& 29.1 $\pm$ 0.4& 43.3 $\pm$ 0.1 \\
		ARM-CML$^\dag$ & 87.6 $\pm$ 1.3 & 96.0 $\pm$ 0.5 & 70.5 $\pm$ 0.5 & 86.2 $\pm$ 0.3 & 60.5 $\pm$ 0.3 & 70.2 $\pm$ 0.3 & 28.6 $\pm$ 0.2 & 43.1 $\pm$ 0.1 \\
		ARM-CML + DAC-MR & \textbf{88.6 $\pm$ 0.6} & \textbf{96.5 $\pm$ 0.3} & \textbf{71.4 $\pm$ 0.7} & \textbf{86.6 $\pm$ 0.3} & \textbf{62.5 $\pm$ 0.5} & \textbf{72.3 $\pm$ 0.2} & \textbf{30.3 $\pm$ 0.5} & \textbf{43.4 $\pm$ 0.1} \\ \hline
		%ARM-BN & 83.3 $\pm$ 0.5 &95.6 $\pm$ 0.1& 64.5 $\pm$ 3.2 &83.2 $\pm$ 0.5& 61.7 $\pm$ 0.3 &72.4 $\pm$ 0.3 &28.3 $\pm$ 0.3 &43.3 $\pm$ 0.1  \\
		ARM-BN$^\dag$ & 83.6 $\pm$ 0.3 & 95.7 $\pm$ 0.2 & 63.9 $\pm$ 3.2 & 83.0 $\pm$ 0.8 & 61.8 $\pm$ 0.3 & 72.6 $\pm$ 0.3 & \textbf{28.6 $\pm$ 0.3} & 43.4 $\pm$ 0.1 \\
		ARM-BN + DAC-MR & \textbf{83.8 $\pm$ 0.4} & \textbf{95.9 $\pm$ 0.1} & \textbf{68.2 $\pm$ 1.5} & \textbf{84.8 $\pm$ 0.2} & \textbf{64.4 $\pm$ 0.3} & \textbf{74.3 $\pm$ 0.3} & 28.3 $\pm$ 0.4 & \textbf{43.6 $\pm$ 0.4} \\  \hline
		%ARM-LL& 88.9 $\pm$ 0.8 &96.9 $\pm$ 0.2& 67.0 $\pm$ 0.9 &84.3 $\pm$ 0.7& 61.2 $\pm$ 0.7& 72.5 $\pm$ 0.4 &25.4 $\pm$ 0.1& 35.7 $\pm$ 0.4 \\
		ARM-LL$^\dag$ & 88.9 $\pm$ 0.4 & 97.0 $\pm$ 0.1 & 68.9 $\pm$ 1.5 & 84.9 $\pm$ 0.1 & 61.1 $\pm$ 0.5 & 72.2 $\pm$ 0.1 & 21.2 $\pm$ 3.6 & 31.2 $\pm$ 3.8 \\
		ARM-LL + DAC-MR & \textbf{89.3 $\pm$ 0.9}& \textbf{97.0 $\pm$ 0.2} & \textbf{69.5 $\pm$ 2.0} & \textbf{86.3 $\pm$ 0.4} & \textbf{63.1 $\pm$ 0.6} & \textbf{74.3 $\pm$ 0.2} & \textbf{21.2 $\pm$ 3.0} & \textbf{31.6 $\pm$ 3.3} \\
		\bottomrule
	\end{tabular}\vspace{-2mm}
\end{table*}

\begin{table*} \vspace{-2mm}
	\caption{Classification results of standard transfer learning benchmarks (ResNet-50 pre-trained) on three datasets for baselines and our DAC-MR.}  \label{finetuning}\vspace{-4mm}
	\centering
	\scriptsize
	\setlength{\tabcolsep}{1.5mm}
	\begin{tabular}{c|c|cccc|c|cccc|c}
		\toprule
		\multirow{3}{*}{\textbf{Dataset} }	 & \multirow{3}{*}{\textbf{Method} } & \multicolumn{5}{c|}{\textbf{Supervised Pre-trained}}  & \multicolumn{5}{c}{\textbf{MoCo Pre-trained}}  \\  \cline{3-12}
		&   & \multicolumn{5}{c|}{Sampling Rates}  &\multicolumn{5}{c}{Sampling Rates} \\ \cline{3-12}
		&  & 15\%  & 30\% & 50\% & 100\%  &  Avg & 15\%  & 30\% & 50\% & 100\%  &  Avg \\ \hline
		\multirow{8}{*}{\textbf{CUB-200-2011} }	&  Fine-tune & 51.3$\pm$0.1 & 64.4$\pm$0.3 & 74.7$\pm$0.3 & 81.4$\pm$0.6 & 68.0 & 28.0$\pm$0.5 & 47.8$\pm$0.1 & 62.3$\pm$0.5 & 76.2$\pm$0.2 & 53.6 \\
		& Fine-tune + DAC-MR & \textbf{53.2$\pm$0.1} & \textbf{65.3$\pm$0.2} & \textbf{75.6$\pm$0.1} & \textbf{81.9$\pm$0.4} & \textbf{69.0} & \textbf{32.1$\pm$0.1} & \textbf{53.1$\pm$0.4} &\textbf{ 64.9$\pm$0.3} & \textbf{77.7$\pm$0.3} & \textbf{57.0} \\  \cline{2-12}
		& $L^2$-SP \cite{xuhong2018explicit} & 51.1$\pm$0.1 & 65.1$\pm$0.1 & 74.6$\pm$0.5 & 81.7$\pm$0.1 & 68.1 & 27.9$\pm$0.3 & 53.4$\pm$0.4 & 67.8$\pm$0.2 & 78.0$\pm$0.3 & 56.8 \\
		& $L^2$-SP + DAC-MR & \textbf{52.8$\pm$0.2} & \textbf{66.4$\pm$0.1} & \textbf{75.3$\pm$0.2} & \textbf{82.1$\pm$0.3} & \textbf{69.2} & \textbf{30.5$\pm$0.1} & \textbf{55.1$\pm$0.1} & \textbf{68.5$\pm$0.6} & \textbf{78.1$\pm$0.1} & \textbf{58.0} \\ \cline{2-12}
		& DELTA \cite{li2018delta} & 55.0$\pm$0.1 & 67.2$\pm$0.2 & 76.2$\pm$0.2 & 82.3$\pm$0.1 & 70.2 & 26.9$\pm$0.4 & 51.3$\pm$6.1 & 64.5$\pm$0.8 & 75.0$\pm$0.3 & 54.4 \\
		& DELTA  + DAC-MR & \textbf{55.4$\pm$0.1} &\textbf{ 68.1$\pm$0.1} & \textbf{76.5$\pm$0.1} & \textbf{82.4$\pm$0.3} & \textbf{70.6 }& \textbf{31.5$\pm$0.2} & \textbf{55.1$\pm$0.1 }&\textbf{ 66.2$\pm$0.2 }& \textbf{75.2$\pm$0.1} & \textbf{57.1} \\ \cline{2-12}
		& Co-Tuning \cite{you2020co} & 57.6$\pm$0.3 & 70.1$\pm$0.2 & 77.4$\pm$0.5 & 82.7$\pm$0.1 & 72.0 & 29.3$\pm$0.1 & 49.5$\pm$0.1 & 63.1$\pm$0.4 & 76.1$\pm$0.1 & 54.5 \\
		& Co-Tuning  + DAC-MR & \textbf{58.3$\pm$0.1} &\textbf{ 70.2$\pm$0.3} & \textbf{78.2$\pm$0.4} & \textbf{83.1$\pm$0.1} &\textbf{72.5} & \textbf{31.0$\pm$0.2} & \textbf{53.3$\pm$0.3} & \textbf{67.0$\pm$0.3} & \textbf{78.0$\pm$0.2} & \textbf{57.3} \\ \hline \hline
		\multirow{8}{*}{\textbf{Stanford Cars} }	&  Fine-tune  & 41.1$\pm$0.1 & 66.0$\pm$0.2 & 78.2$\pm$0.2 & 87.8$\pm$0.1 & 68.4 & 44.2$\pm$0.6 & 71.6$\pm$0.1 & 83.0$\pm$0.3 & 90.0$\pm$0.1 & 72.2 \\
		& Fine-tune + DAC-MR & \textbf{43.7$\pm$0.2 }& \textbf{68.1$\pm$0.2} & \textbf{80.0$\pm$0.1 }&\textbf{ 88.3$\pm$0.2 }&\textbf{70.0} & \textbf{49.8$\pm$0.1} & \textbf{74.2$\pm$0.2 }& \textbf{84.1$\pm$0.3} & \textbf{90.7$\pm$0.1 }&\textbf{74.7} \\  \cline{2-12}
		& $L^2$-SP \cite{xuhong2018explicit} & 42.4$\pm$0.3 & 68.1$\pm$0.1 & 79.7$\pm$0.1 & 88.4$\pm$0.2 & 70.0 & 46.4$\pm$0.6 & 75.1$\pm$0.2 & 84.0$\pm$0.5 & 89.9$\pm$0.1 & 73.9 \\
		& $L^2$-SP + DAC-MR & \textbf{44.7$\pm$0.2} & \textbf{69.1$\pm$0.3} & \textbf{80.5$\pm$0.1} & \textbf{88.5$\pm$0.1} &\textbf{70.7} & \textbf{50.9$\pm$0.4} & \textbf{76.7$\pm$0.1} & \textbf{84.5$\pm$0.2} & \textbf{90.0$\pm$0.1} & \textbf{75.5} \\ \cline{2-12}
		& DELTA \cite{li2018delta} & 45.0$\pm$0.1 & 68.2$\pm$0.2 & 79.8$\pm$0.2 & 88.2$\pm$0.2 & 70.3 & 45.9$\pm$0.4 & 73.1$\pm$0.1 & 83.2$\pm$0.1 & 89.1$\pm$0.4  & 73.1 \\
		& DELTA + DAC-MR & \textbf{46.4$\pm$0.2} & \textbf{69.2$\pm$0.1} & \textbf{80.6$\pm$0.1} & \textbf{88.3$\pm$0.1} & \textbf{71.1} & \textbf{53.0$\pm$0.4} & \textbf{76.9$\pm$0.4} & \textbf{83.8$\pm$0.2} & \textbf{89.6$\pm$0.2} & \textbf{75.8} \\ \cline{2-12}
		& Co-Tuning \cite{you2020co} & 48.4$\pm$0.5 & 71.1$\pm$0.7 & 81.9$\pm$0.2 & 89.1$\pm$0.1 & 76.6 & 44.0$\pm$0.4 & 72.2$\pm$0.2& 83.3$\pm$0.4 & 90.3$\pm$0.1 & 72.5 \\
		& Co-Tuning  + DAC-MR & \textbf{55.6$\pm$0.2} & \textbf{76.8$\pm$0.4} & \textbf{85.0$\pm$0.3} & \textbf{91.3$\pm$0.4} & \textbf{77.2} & \textbf{49.2$\pm$0.2} & \textbf{77.8$\pm$0.4} & \textbf{86.9$\pm$0.2} & \textbf{91.3$\pm$0.3} & \textbf{76.3} \\ \hline \hline
		\multirow{8}{*}{\textbf{FGVC Aircraft} }	&  Fine-tune  & 41.6$\pm$0.7 & 58.1$\pm$0.1 & 68.4$\pm$0.1 & 80.3$\pm$0.5 & 62.1 & 46.6$\pm$0.5 & 67.8$\pm$0.3 & 78.9$\pm$0.5 & 88.0$\pm$0.1 & 70.3 \\
		& Fine-tune + DAC-MR & \textbf{44.9$\pm$0.1} & \textbf{60.6$\pm$0.2} & \textbf{70.1$\pm$0.1} & \textbf{81.8$\pm$0.3} & \textbf{64.4} & \textbf{50.1$\pm$0.1} & \textbf{71.6$\pm$0.1} & \textbf{80.5$\pm$0.1} & \textbf{88.4$\pm$0.2} & \textbf{72.7} \\  \cline{2-12}
		& $L^2$-SP \cite{xuhong2018explicit} & 43.4$\pm$0.6 & 61.3$\pm$0.1 & 70.5$\pm$0.2 & 82.1$\pm$0.2 & 64.3 & 48.3$\pm$0.1 & 74.0$\pm$0.8 & 81.6$\pm$0.8 & 89.1$\pm$0.1 & 73.3 \\
		& $L^2$-SP + DAC-MR & \textbf{46.1$\pm$0.1} & \textbf{64.1$\pm$0.3} & \textbf{72.4$\pm$0.1} & \textbf{83.1$\pm$0.2} & \textbf{66.4} & \textbf{52.0$\pm$0.1} & \textbf{75.2$\pm$0.1} & \textbf{82.4$\pm$0.1} & \textbf{89.2$\pm$0.1} & \textbf{74.7} \\ \cline{2-12}
		& DELTA \cite{li2018delta} & 44.4$\pm$0.2 & 61.6$\pm$0.3 & 71.3$\pm$0.1 & 82.7$\pm$0.2 & 65.0 & 32.6$\pm$2.7 & 64.1$\pm$0.2 & 70.7$\pm$4.4 &  77.9$\pm$3.7 & 62.2 \\
		& DELTA  + DAC-MR & \textbf{46.0$\pm$0.2} & \textbf{63.1$\pm$0.3} & \textbf{72.3$\pm$0.1} & \textbf{82.8$\pm$0.1} & \textbf{66.1} & \textbf{37.7$\pm$0.5} & \textbf{65.6$\pm$0.5} & \textbf{78.9$\pm$0.3} & \textbf{80.4$\pm$0.4} & \textbf{65.7} \\ \cline{2-12}
		& Co-Tuning \cite{you2020co} & 45.5$\pm$0.8 & 60.8$\pm$0.5 & 71.6$\pm$0.6 & 82.1$\pm$0.4 & 65.0 & 47.0$\pm$0.8 & 68.1$\pm$0.5 & 78.8$\pm$0.4 & 87.8$\pm$0.3 & 70.4 \\
		& Co-Tuning  + DAC-MR & \textbf{52.9$\pm$0.4} & \textbf{68.5$\pm$0.1} & \textbf{76.1$\pm$0.3} & \textbf{85.2$\pm$0.3} & \textbf{70.7} & \textbf{54.5$\pm$0.2} & \textbf{75.4$\pm$0.3} & \textbf{83.8$\pm$0.5} & \textbf{89.0$\pm$0.2} & \textbf{75.7} \\
		\bottomrule
	\end{tabular}\vspace{-6mm}
\end{table*}

\noindent\textbf{Results.} Table \ref{tabletsa} shows that performance comparison with SOTA methods of cross-domain FSL benchmarks on Meta-Dataset \cite{triantafillou2019meta}. We report the average classification accuracy in previously seen domains, unseen domains, all domains and the average rank. As can be seen,
our method outperforms TSA and other competing methods on most domains (11 out of 13), especially obtaining significant improvement on 5 unseen datasets than TSA method, i.e., Average Unseen (+2.3).
Achieving improvement on unseen domains is more challenging due to the large gap between seen and unseen domains and the scarcity of labeled samples for the unseen task.
Nevertheless, DAC-MR can help improve TAS for better adaptation and generalization to unseen domains benefiting from the meta-knowledge information, and hence achieves very competitive results. This fully complies with the theoretical analysis in \S\ref{section35}, and validates that DAC-MR is hopeful to cope with such challenging scenes.

\vspace{-4mm}
\subsection{Transductive / Semi-Supervised Few-Shot Learning} \label{section42}
\textbf{Formulation.} Transductive FSL leverages the distributions of examples in query set $\mathcal{D}^{(q)}$ and given support set $\mathcal{D}^{(s)}$, and the feature extractor $h_{\phi}$ is learned from a large source dataset $D_b$ to make predictions on $\mathcal{D}^{(q)}$, where $(\mathcal{D}^{(s)},\mathcal{D}^{(q)})$ are sampled from a novel target dataset $D_t$. The key idea for transductive FSL is to predict confident pseudo-labels on the query set $\mathcal{D}^{(q)}$ with the help of some semi-supervised techniques.
For semi-supervised FSL, it follows the same solution with $\mathcal{D}^{(q)}$ replaced by unlabelled samples $U$. Here, we consider iLPC \cite{lazarou2021iterative} method due to its recent SOTA transductive/semi-supervised FSL performance.
We aim to meta-optimize feature extractor adapting to target FSL tasks by solving the below objective, and then use the updated feature extractor to produce more confident pseudo-labels on the query (unlabelled) set:
\begin{align*}
	\min_{{{\phi}}}\	& \mathcal{MR}^{dac} (\mathcal{D}^{(s)} \cup \mathcal{D}^{(q)}; f\circ h_{\phi},  A ),	  A \in \mathcal{A},  \notag  \\
s.t. \ &\{ \hat{y}^q_k \}_{k=1}^{|\mathcal{Q}|}  = \text{Alg}_{iLPC}(\mathcal{D}^{(s)}, \mathcal{D}^{(q)}, h_\phi),
\end{align*}
where $\text{Alg}_{iLPC}(\mathcal{D}^{(s)}, \mathcal{D}^{(q)}, h_\phi)$ denotes the set of pseudo-labels predicted by iLPC \cite{lazarou2021iterative} for $\mathcal{D}^{(q)}$, and we choose the best confident pseudo-labels to build a query subset $\mathcal{D}^{(q)'} = \{(x_l^q,\hat{y}_l^q)\}_{l=1}^{|\mathcal{D}^{(q)}|/2}$ with a half size of $\mathcal{D}^{(q)}$. Then we build $f$ by computing prototype from $\mathcal{D}^{(s)} \cup \mathcal{D}^{(q)'}$ as ProtoNet \cite{snell2017prototypical}.
Our implementation is built upon the code of iLPC\cite{lazarou2021iterative} available at \url{https://github.com/MichalisLazarou/iLPC}.

\noindent\textbf{Results.} Table \ref{tableilpc} presents the results of transductive/semi-supervised FSL benchmarks on tieredImageNet, CIFAR-FS and CUB datasets. As can be seen, the performance of DAC-MR is superior to iLPC in all settings. Especially, DAC-MR can bring more notable gains to 1-shot accuracy
than to 5-shot in three datasets.
Though we cannot access labels from query (unlabelled) data, DAC-MR can provide supplemental meta-knowledge information to help seek feature extractor adapting to target FSL tasks, which leads to more confident pseudo-labels, and hence obtain better performance.
This confirms the theoretical analysis in \S\ref{section35}.

\begin{table*}[t] \vspace{-2mm}
	\caption{Results of task-incremental learning benchmarks on CIFAR-100 and TinyImagenet-200. Results of baselines are copied from La-MAML \cite{gupta2020look}.  }\label{tablelamaml}  \vspace{-4mm}
	\centering
	\scriptsize
	\setlength{\tabcolsep}{1.5mm}
	%\resizebox{0.95\textwidth}{13mm}{
		\begin{tabular}{l|c|c|c|c|c|c|c|c}
			\toprule
			\multirow{3}{*}{\textbf{Model} }	& 	\multicolumn{4}{c|}{CIFAR-100}   & 	\multicolumn{4}{c}{TinyImagenet-200}  \\
			\cline{2-9}
			& 	\multicolumn{2}{c|}{Multiple} & \multicolumn{2}{c|}{Single}   & 	\multicolumn{2}{c|}{Multiple} & \multicolumn{2}{c}{Single}  \\  \cline{2-9}
			& RA & BTI & RA & BTI & RA & BTI & RA & BTI  \\ \hline
			iCaRL \cite{rebuffi2017icarl} &  60.47 $\pm$ 1.09 & -15.10 $\pm$ 1.04 & 53.55 $\pm$ 1.69 & -8.03 $\pm$ 1.16 & 54.77 $\pm$ 0.32 & \textbf{-3.93 $\pm$ 0.55} & 45.79 $\pm$ 1.49 & \textbf{-2.73 $\pm$ 0.45} \\
			GEM \cite{lopez2017gradient} & 62.80 $\pm$ 0.55  &-17.00 $\pm$ 0.26  &48.27 $\pm$ 1.10  &-13.70 $\pm$ 0.70 & 50.57 $\pm$ 0.61 & -20.50 $\pm$ 0.10  &40.56 $\pm$ 0.79  &-13.53 $\pm$ 0.65  \\
			AGEM \cite{chaudhry2018efficient} &  58.37 $\pm$ 0.13  &-17.03 $\pm$ 0.72  &46.93 $\pm$ 0.31 & -13.40 $\pm$ 1.44  &46.38 $\pm$ 1.34 & -19.96 $\pm$ 0.61  &38.96 $\pm$ 0.47  &-13.66 $\pm$ 1.73 \\
			MER \cite{riemer2018learning} & - & - & 51.38 $\pm$ 1.05& -12.83 $\pm$ 1.44& -& -& 44.87 $\pm$ 1.43 &-12.53 $\pm$ 0.58   \\
			C-MAML & 65.44 $\pm$ 0.99 & -13.96 $\pm$ 0.86  &55.57 $\pm$ 0.94  &-9.49 $\pm$ 0.45  &61.93 $\pm$ 1.55  &-11.53 $\pm$ 1.11  &48.77 $\pm$ 1.26 & -7.60 $\pm$ 0.52 \\
			SYNC & 67.06 $\pm$ 0.62 & -13.66 $\pm$ 0.50  &58.99 $\pm$ 1.40 & -8.76 $\pm$ 0.95 & 65.40 $\pm$ 1.40  &-11.93 $\pm$ 0.55 & 52.84 $\pm$ 2.55 &-7.30 $\pm$ 1.93  \\ \hline
			LA-MAML & 70.08 $\pm$ 0.66 &-9.36 $\pm$ 0.47 & 61.18 $\pm$ 1.44  &-9.00 $\pm$ 0.20  &66.99 $\pm$ 1.65  &-9.13 $\pm$ 0.90 & 52.59 $\pm$ 1.35  &-3.70 $\pm$ 1.22  \\
			%LA-MAML & 69.33 $\pm$ 1.27  & -9.10 $\pm$ 1.20 & 60.73 $\pm$ 0.57 & -8.75 $\pm$ 0.1 & 65.79 $\pm$ 0.51 & -9.30 $\pm$ 0.40 & 51.53 $\pm$ 0.98 & -2.5 $\pm$ 0.33 \\
			LA-MAML + DAC-MR & \textbf{70.61 $\pm$ 0.30} & \textbf{-7.57 $\pm$ 1.06} & \textbf{62.26 $\pm$ 0.74} & \textbf{-7.80 $\pm$ 0.62} & \textbf{68.86 $\pm$ 0.47} & \textbf{-7.37 $\pm$ 0.85}& \textbf{55.53 $\pm$ 1.14} & \textbf{-2.76 $\pm$ 0.12} \\
			
			\bottomrule
		\end{tabular}	\vspace{-4mm}
	\end{table*}

\begin{table*}[t]
	\caption{Results of FSCIL benchmarks on CIFAR100, MiniImageNet and CUB200 datasets. Other results are copied from the corresponding papers.  }\label{tablefewcontinual}  \vspace{-3mm}
	\centering
	\scriptsize
	%\resizebox{0.95\textwidth}{13mm}{
		\begin{tabular}{l|ccccccccc|c|c}
			\toprule
			\multirow{2}{*}{\textbf{Model} }	& 	\multicolumn{9}{c|}{Sessions (CIFAR-100 5-way 5-shot w/ResNet20)}   & {Average } & Final  \\
			\cline{2-10}
			& 0 &  1  &  2 & 3 & 4  &  5  &  6  &  7  &  8  &  Accuracy & Improvement    \\ \hline
			iCaRL \cite{rebuffi2017icarl} &  64.10 & 53.28 &  41.69 & 34.13 & 27.93 & 25.06 & 20.41 & 15.48 &  13.73 & 32.87 &  +35.51 \\
			TOPIC \cite{tao2020few} &  64.10 & 55.88 & 47.07 & 45.16 & 40.11 &  36.38 & 33.96 & 31.55 & 29.37 & 42.62  & +19.87  \\
			SPPR \cite{zhu2021self} &  64.10 &  65.86 &61.36& 57.34& 53.69 &50.75& 48.58& 45.66& 43.25& 54.51& +5.99\\
			CEC \cite{zhang2021few} & 73.07 & 68.88 &  65.26 &  61.19 & 58.09 & 55.57 & 53.22 & 51.34 & 49.14 & 59.53 & +0.1\\	 \hline	
			CEC + DAC-MR &  \textbf{73.08} & \textbf{69.12} & \textbf{65.30} & \textbf{61.37} & \textbf{58.21} & \textbf{55.64} & \textbf{53.32} & \textbf{51.36} & \textbf{49.24} & \textbf{59.63}  & \\
			\bottomrule
		\end{tabular}	\vspace{1mm}
		
		\begin{tabular}{l|ccccccccc|c|c}
			\toprule
			\multirow{2}{*}{\textbf{Model} }	& 	\multicolumn{9}{c|}{Sessions (MiniImageNet 5-way 5-shot w/ResNet18)}   & Average & Final\\
			\cline{2-10}
			& 0 &  1  &  2 & 3 & 4  &  5  &  6  &  7  &  8  & Accuracy & Improvement     \\ \hline
			iCaRL \cite{rebuffi2017icarl} &  61.31 & 46.32 & 42.94 & 37.63 & 30.49 & 24.00 & 20.89 & 18.80 & 17.21 & 33.29 & +30.74 \\
			TOPIC \cite{tao2020few} &  61.31 & 50.09 & 45.17 &41.16 & 37.48 & 35.52 &32.19 & 29.46 &24.42&  39.64 & +23.53  \\
			SPPR \cite{zhu2021self} &  61.45& 63.80& 59.53 &55.53& 52.50& 49.60 &46.69 &43.79 &41.92& 52.75 & +6.03\\
			CEC \cite{zhang2021few} & 72.00 & 66.83 &62.97 & 59.43 & 56.70 &53.73 & 51.19 & 49.24 & 47.63 & 57.75  & +0.32 \\	 \hline	
			CEC + DAC-MR &\textbf{72.30} & \textbf{67.39} & \textbf{63.30} & \textbf{59.93} &\textbf{57.24} & \textbf{54.15} & \textbf{51.71} & \textbf{49.67} &\textbf{47.95} & \textbf{58.18} & \\
			\bottomrule
		\end{tabular} \vspace{1mm}
		
		\begin{tabular}{l|ccccccccccc|c|c}
			\toprule
			\multirow{2}{*}{\textbf{Model} }	& 	\multicolumn{11}{c|}{Sessions (CUB200 10-way 5-shot w/ResNet18)}   & Average & Final \\
			\cline{2-12}
			& 0 &  1  &  2 & 3 & 4  &  5  &  6  &  7  &  8  & 9 & 10 &  Accuracy & Improvement     \\ \hline
			iCaRL \cite{rebuffi2017icarl} &  68.68 & 52.65 & 48.61 & 44.16 & 36.62 & 29.52 & 27.83 & 26.26 & 24.01 & 23.89 & 21.16 & 36.67  &+31.13 \\
			TOPIC \cite{tao2020few} &   68.68& 62.49& 54.81& 49.99& 45.25& 41.40 & 38.35& 35.36& 32.22& 28.31& 26.28 &43.92 & +26.01\\
			SPPR \cite{zhu2021self} & 68.68& 61.85 &57.43& 52.68 &50.19 &46.88 &44.65& 43.07 &40.17 &39.63 &37.33 &49.32& +14.96 \\
			CEC \cite{zhang2021few} & 75.85 & 71.94 & 68.50 & 63.50 & 62.43 & 58.27 & 57.73 & \textbf{55.81} & \textbf{54.83} & \textbf{53.52} & {52.28} & 61.33  & +0.01\\	 \hline	
			CEC + DAC-MR & \textbf{76.57} & \textbf{72.28} & \textbf{68.67} & \textbf{63.72} & \textbf{62.82} & \textbf{58.40} & \textbf{57.79} & {55.75} & {54.77} & {53.48} & \textbf{52.29} & \textbf{61.50}& \\
			\bottomrule
		\end{tabular} \vspace{-4mm}
	\end{table*}

\vspace{-4mm}
\section{DAC-MR Benefits Transfer Learning}\label{section5}

In this section, we study the influence of DAC-MR on meta-learning for three typical transfer learning tasks, containing unsupervised domain adaptation (\S\ref{section51}), domain generalization (\S\ref{section52}), and transfer learning with fine-tuning (\S\ref{section53}).

\vspace{-4mm}
\subsection{Unsupervised Domain Adaptation}  \label{section51}
\textbf{Formulation.}
We mainly apply DAC-MR on top of CST \cite{liu2021cycle}, due to its SOTA domain adaptation performance, expecting to further boost its performance. Specifically, we have access to $m$ labeled i.i.d. samples $\mathcal{D}^{(s)} = \{x_i^{(s)}, y_i^{(s)}\}_{i=1}^{m}$ from $S$ and $n$ unlabeled i.i.d. samples $\mathcal{D}^{(q)} = \{x_i^{(q)}\}_{i=1}^{n}$ from $Q$.
We denote the shared meta-representation as $h_{\phi}$, and the source and target classifiers trained on top of meta-representation $h_{\phi}$ as $f_s$, $f_q$, respectively. To train $f_q$ on unlabeled target data, it uses trained $f_s$ to generate target pseudo-labels as
\begin{align} \label{eqpse}
{y'}^{(q)} = \mathop{\arg\min}_i \{ f_{s} \circ h_{\phi}(x^{(q)})_{[i]}\},
\end{align}
for each $x^{(q)}$ in the target dataset $\mathcal{D}^{(q)}$. We introduce DAC-MR into CST to encourage learned classifiers to be robust against domain shift and produce reliable pseudo-labels:
\begin{align}
	f^*_s, \phi^*  & = \mathop{\arg\min}_{f_s, \phi}  \mathcal{L} (\mathcal{D}^{(s)}; f_s, h_{\phi}) + \gamma \mathcal{L} (\mathcal{D}^{(s)}; f^*_q(h_{\phi}), h_{\phi})  \notag \\
	& +  \lambda \mathcal{MR}^{dac} (\mathcal{D}^{(q)};f^*_q(h_{\phi}) \circ h_{\phi},A ),	  A \in \mathcal{A},   \notag \\
	& f^*_q(h_{\phi})  = \mathop{\arg\min}_{f_q} \mathcal{L}(\tilde{\mathcal{D}}^{(q)}; f_q,h_{\phi}),  \label{eqdaccst}
\end{align}
where $\tilde{\mathcal{D}}^{(q)} = \{x_i^{(q)}, {y'}_i^{(q)}\}_{i=1}^{n_q}$ is with pseudo-labels generated by the source classifier $f_s$ in Eq.(\ref{eqpse}), and $\mathcal{L}$ is the cross-entropy loss. If we set $\gamma=1, \lambda=0$, it is degenerated to the original CST \cite{liu2021cycle}.
CST focuses on improving the quality of pseudo-labels by optimizing $\mathcal{L} (\mathcal{D}^{(s)}; f^*_q(h_{\phi})$, while we pay more attention on enhancing the tolerance of meta-model to the domain shift of traget domain, i.e., minimizing $\mathcal{MR}^{dac} (\mathcal{D}^{(q)};f^*_q(h_{\phi}) \circ h_{\phi},A )$, which
enforces the model to predict stably under simulated domain shift by some data augmentations. In our experiments, we consider two novel cases compared with CST: 1) $\gamma=0,\lambda=1$, we only require meta-model to be capable of addressing the domain shift; 2) $\gamma=1,\lambda=1$, we require meta-model to struggle against the domain shift and produce reliable pseudo-labels. We just follow the training algorithm of CST to optimize Eq.(\ref{eqdaccst}), and implement our method based on the official implementations of CST available at \url{https://github.com/Liuhong99/CST}.

\noindent\textbf{Results.} Table \ref{tableoffice} shows the results on 12 pairs of Office-Home tasks. Due to the domain shift, standard self-training methods (e.g., VAT and Fixmatch) may fail to produce reliable pseudo-label, and CST \cite{liu2021cycle} would improve the pseudo-label quality and obtain fine results. However, it does not consider feature adaptation for domain shift. Our DAC-MR encourages the feature representation to be robust against domain shift adapting to target domain. When
we only focus on feature adaptation ($\gamma=0,\lambda=1$), it obtains similar or even slightly better performance than CST. Once we  require to struggle against the domain shift and produce reliable pseudo-labels ($\gamma=1,\lambda=1$), it consistently improves CST on all tasks, and outperforms other methods significantly in 12 tasks. Note that we do not involve careful hyperparameter tuning process, and it sometimes suffers from slight decay in performance compared to only feature adaptation.
Table \ref{tablevisda} shows the results on VisDA-2017, which further shows the effectiveness of our DAC-MR.
Particularly, we do not use ground-truth labels of target data to evaluate meta-model $h_{\phi}$. This supports the theoretical analysis in \S\ref{section35}.

\vspace{-2mm}
\subsection{Domain Generalization}  \label{section52}
\textbf{Formulation.} We study the setting that learns models adapting to domain shift at the testing stage. We apply DAC-MR on top of the SOTA meta-learning method ARM \cite{zhang2021adaptive} for domain generalization, aiming to further boost its performance. Formally, ARM optimizes the following objective:
\begin{align*}
\min_{\theta,\phi} \varepsilon(\theta,\phi) = \frac{1}{K}\sum_{k=1}^K \mathcal{L}(g(\theta');D_k),  D_k=\{(x^k_i,y^k_i)\}_{i=1}^N,
\end{align*}
where $\mathcal{L}(g(\theta');D_k) = \frac{1}{N} \sum_{i=1}^N \ell(g(x_i; \theta'),y_i)$, the prediction model $g(\cdot;\theta): \mathcal{X}\rightarrow \mathcal{Y}$ is parameterized by $\theta \in \Theta$ and predicts $y$ given $x$, and $\theta' = h(\theta,x_1,\cdots,x_K;\phi)$ is adaptive parameters of $g$, which is produced by adaptation model $h: \Theta \times \mathcal{X}^K \rightarrow \Theta$ through inputting parameters $\theta$ and $K$ unlabeled samples. Note that ARM has no access to meta-data representing the domain generalization goal, and we attempt to exploit DAC-MR to achieve this aim by solving:
\begin{align*}
	  \min_{\hat{\theta},\hat{\phi}} \frac{1}{K}\sum_{k=1}^K & \mathcal{MR}^{dac} (D'_k;g(\hat{\theta}'),A ),	  A \in \mathcal{A}, D'_k=\{x^k_i\}_{i=1}^N \notag \\
&	\hat{\theta},\hat{\phi} = \mathop{\arg\min}_{\theta,\phi} \frac{1}{K}\sum_{k=1}^K \mathcal{L}(g(\theta');D_k),
\end{align*}
where $\hat{\theta}'=h(\hat{\theta},x_1,\cdots,x_K;\hat{\phi}) $. Our DAC-MR enforces the trained models to make prediction stably under simulated domain shift by some domain augmentations. We achieve our algorithm via following ARM's official implementations available at \url{https://github.com/henrikmarklund/arm}.

\noindent\textbf{Results.} Table \ref{tablearm} reports the results on four domain generalization benchmarks. Across all testbeds, DAC-MR improves both worst case and average accuracies of three variants of ARM methods in almost all cases, implying that DAC-MR is comparatively less reliant on favorable inductive biases for domain shift and consistently attains better results.
%We also evaluate DAC-MR on the WILDS benchmark \cite{koh2021wilds}, and achieve advantage results over ARM (More details can be found in Appendix C.2).
Though we cannot access data from new domains, DAC-MR can encourage the learned models to behave robust against domain shift by leveraging the meta-knowledge about invariance, validating the properness of our theoretical analysis presented in \S\ref{section35} for domain generalization.

\vspace{-4mm}
\subsection{Transfer Learning with Fine-tuning}  \label{section53}
\textbf{Formulation.} Given a DNN model pre-trained on a source dataset $\mathcal{D}^{(s)}$, transfer learning aims to fine-tune it to fit a target dataset $\mathcal{D}^{(q)}= \{x_i^{(q)}, y_i^{(q)}\}_{i=1}^{n}$. Generally, $\mathcal{D}^{(s)}$ and $\mathcal{D}^{(q)}$ share the same input space $\mathcal{X}$ but have
respective category spaces $\mathcal{Y}_{s}$ and $\mathcal{Y}_{q}$. In computer vision, $\mathcal{D}^{(s)}$ is often large-scale, e.g., ImageNet, and $\mathcal{Y}_{q}$ is the visual classification dataset we concern. To overcome heterogeneous label space, it often splits the pre-trained DNN into two parts: a shared representation function $h_{\phi_0}$ and a task-specific function $f_s$, which builds upon $h_{\phi_0}$. In the fine-tuning stage, the $h_{\phi_0}$ is retained and the $f_s$ is replaced by a randomly initialized function $f_q$, whose output space matches $\mathcal{Y}_{q}$. Then the vanilla fine-tuning method optimizes the following non-convex optimization with a good starting point $h_{\phi_0}$,
\begin{align*}
	\min_{f_q, h_{\phi}} \mathcal{L}(\mathcal{D}^{(q)}; f_q \circ h_{\phi}),
\end{align*}
where $\mathcal{L}(\mathcal{D}^{(q)}; f_q \circ h_{\phi}) = \frac{1}{n} \sum_{i=1}^{n} \ell\left(f_q \circ h_{\phi}(x_i^{(q)}),y_i^{(q)} \right)$. Recently, various regularization techniques are proposed to help alleviate over-fitting, and more details can be found in Appendix C.3. Different from them, we use DAC-MR as a meta-regularizer to promote fine-tuning to perform robust against domain shift from a meta-learning perspective:
%\begin{align}
% \min_{f_q, {{\phi}}}\	& \mathcal{MR}^{dac} (\mathcal{D}^{(q)};f_q' \circ  {\phi}',A ),	  A \in \mathcal{A},  \notag  \\
%	&f_q', {{\phi}'}  = f_q, {{\phi}} -\nabla_{f_q, {{\phi}}} \mathcal{L}(\mathcal{D}^{(q)}; f_q \circ h_{\phi}).     \label{eqtr}
%\end{align}
\begin{align}
 \min_{ {{\phi'}}}\	& \mathcal{MR}^{dac} ({D};  f_{q'} \circ h_{\phi'} ,A ),	  A \in \mathcal{A},  \notag  \\
& f_{q'} , h_{\phi'} = \mathop{\arg\min}_{f_q , h_{\phi}} \mathcal{L}(\mathcal{D}^{(q)}; f_q \circ h_{\phi}),
\end{align}
where ${D} = \{x_i^{(q)}\}_{i=1}^{n}$.
This formulation requires to optimize the representation function such that fine-tuning performance on the target task could produce maximally robust behavior against some data augmentations. We use multiple gradient updates for inner-level optimization to improve its computational efficiency.
%We also use a first-order approximation to improve its computational efficiency.

\begin{table}[t]\vspace{-2mm}
	\caption{Classification results of label noise learning benchmark on CIFAR-10 and CIFAR-100 with synthetic symmetric and asymmetric noises. Other results are taken from ELR+\cite{liu2020early} and AugDesc \cite{nishi2021augmentation}. }\label{tablemwnet}  \vspace{-4mm}
	\centering
	\scriptsize
	\tabcolsep =0.6mm
	%\resizebox{0.49\textwidth}{10mm}{
		\begin{tabular}{l|cccc|c|cccc|c}
			\toprule
			\multirow{3}{*}{\textbf{Model} } &  \multicolumn{5}{c|}{\textbf{CIFAR-10}}   & \multicolumn{5}{c}{\textbf{CIFAR-100}} \\
			\cline{2-11}
			&  \multicolumn{4}{c|} {\textbf{Symmetric } }  & \multicolumn{1}{c|} {\textbf{Asym. } }  &  \multicolumn{4}{c|} {\textbf{Symmetric } }  & \multicolumn{1}{c}{\textbf{Asym. }}     \\ \cline{2-11}
			&   20\%   &  50\%  &  80\% &  90\%   &  40\%  &   20\%   &  50\%  &  80\% &  90\%   &  40\%   \\
			\hline
			ERM &   86.8 &79.4 & 62.9 & 42.7 & 83.2 & 62.0 & 46.7 & 19.9 & 10.1 & - \\
			Forward \cite{patrini2017making} &   86.8&  79.8 & 63.3 & 42.9 &-&  61.5&  46.6 &19.9 & 10.2&- \\
			M-correction \cite{arazo2019unsupervised}& 94.0 & 92.0 &  86.8 & 69.1 & 87.4 & 73.9& 66.1& 48.2&  24.3 & - \\
			PENCIL \cite{yi2019probabilistic} & 92.4 & 89.1 & 77.5&  58.9&88.5 &  69.4&  57.5 & 31.1&  15.3&-\\
			%Meta-Learning  & 92.9 & 89.3 & 77.4&  58.7 & - &  68.5 & 59.2 & 42.4 & 19.5 & - \\
			DivideMix \cite{li2019dividemix} & 96.1& 94.6& 93.2& 76.0 & 93.4 & 77.3 & 74.6 & 60.2 & 31.5 &31.5 \\
			ELR+ \cite{liu2020early} & 94.6 & 93.8 & 91.1 & 75.2 & 92.7 & 77.5 & 72.4 & 58.2 & 30.8 & 76.5 \\
			AugDesc \cite{nishi2021augmentation}  & 96.2 & 95.1 & 93.6 & 91.8 & 94.3 & 79.2 & 77.0 & 66.1 & 40.9 & 76.8 \\
			C2D \cite{zheltonozhskii2022contrast} & 96.2 & 95.1 & 94.3 & 93.4 & 90.8 & 78.3 & 76.1 & 67.4 & 58.5 & 75.1 \\ \hline
			Ours & \textbf{96.7} & \textbf{95.6} & \textbf{94.5} & \textbf{93.5} & \textbf{95.7} & \textbf{81.6} & \textbf{77.5} & \textbf{70.0} & \textbf{64.3} & \textbf{78.2} \\			
			\bottomrule
		\end{tabular} \vspace{-4mm}
	\end{table}

\noindent\textbf{Results.} The classification accuracies are shown in Table \ref{finetuning}. Across all sampling rates and all testing datasets, DAC-MR consistently improves the performance of baseline methods regardless of supervised or self-supervised pre-trained representations. Note that baselines can obtain fine-tuning performance when sufficient data are provided.
It can be easily observed that DAC-MR produces boosts in accuracy by large margins with fewer training data for baseline methods (e.g., 7\% absolute rise on Aircraft with a sampling rate of 15\% and 30\% in terms of Co-Tuning), indicating that DAC-MR is potentially useful for transfer learning when target data are limited. Besides, it is known that self-supervised pre-trained representations and downstream classification tasks may suffer from the large discrepancy, e.g., CUB. Though baselines can hardly perform well, DAC-MR can also yield consistent gains for all fine-tuning settings. This shows DAC-MR can help transfer learning perform robust against domain shift, and brings more profits when there are fewer target data, agreed with the result of Theorem \ref{th1}.

\vspace{-4mm}
\section{DAC-MR Benefits Continual Learning}\label{section6}

In this section, we study the influence of DAC-MR on meta-learning for continual learning, containing task-incremental learning (\S\ref{section61}) and class-incremental learning (\S\ref{section62}) tasks.

\vspace{-4mm}
\subsection{Task-Incremental Learning}  \label{section61}
\textbf{Formulation.} We apply DAC-MR on top of the La-MAML \cite{gupta2020look}, due to its SOTA meta-learning performance on task-incremental learning. Note that La-MAML exploits the samples in the replay-buffer as meta-data to compute meta-loss for updating meta-model. We additionally introduce DAC-MR computed on meta-data to further boost its meta-level generalization. More discussions can be found in Appendix D.1.
The implementation is adapted from the official implementations of LA-MAML available at \url{https://github.com/montrealrobotics/La-MAML}.

\noindent\textbf{Results.} Table \ref{tablelamaml} reports the task-incremental results on CIFAR-100 and TinyImagenet-200.
%Experiments are carried out in both the Single-Pass and Multiple-Pass settings.
As can be seen, our DAC-MR consistently improves the performance of LA-MAML on both datasets across setups, and achieves superior performance compared to other baselines. Though iCARL attains lower BTI in some setups, it takes the cost of lower performance throughout learning. Among the high-performing approaches, our method has the lowest BTI. This shows that DAC-MR is hopeful to alleviate the forgetting issue and favor positive backward transfer for the task.
We highlight the fact that meta-data-driven meta-loss of LA-MAML combined with DAC-MR can boost meta-level generalization indeed, accordant with the result in Theorem \ref{th1}.

\begin{table}[t] \vspace{-2mm}
	\caption{Classification results of label noise learning benchmark on real-world mini-WebVision dataset. Results for baseline methods are copied from C2D \cite{zheltonozhskii2022contrast}. * denotes results trained with Inception-ResNet-v2.} \label{webvision}\vspace{-4mm}
	\centering
	\tabcolsep =0.7mm
	\begin{tabular}{c|c|c|c|c}
		\toprule
		\multirow{2}{*}{Methods} &  \multicolumn{2}{c|}{WebVision} & \multicolumn{2}{c}{ILSVRC12}   \\  \cline{2-5}
		&   top1 & top5 & top1 & top5 \\
		\hline
		Forward* \cite{patrini2017making}& 61.12 &82.68 &  57.36 & 82.36  \\
		MentorNet* \cite{jiang2018mentornet}&  63.00& 81.40 &57.80& 79.92  \\
		Co-teaching* \cite{han2018co}  & 63.58 &85.20&61.48 &84.70 \\
		Interative-CV* \cite{chen2019understanding} &   65.24 &85.34& 61.60& 84.98 \\
		%CMW-Net-SL & 72.72 & 90.44&  69.88 & 89.56  \\ \hline
		DivideMix*  \cite{li2019dividemix} & 77.32 & 91.64& 75.20 & 90.84  \\
		ELR* \cite{liu2020early} & 77.78 & 91.68& 70.29 & 89.76  \\
		DivideMix \cite{li2019dividemix}&   76.32 & 90.65& 74.42 & 91.21  \\
		C2D \cite{zheltonozhskii2022contrast}& 79.42 & 92.32 & 78.57 & 93.04  \\
		\hline
		Ours& \textbf{81.44} &\textbf{94.24} & \textbf{78.76} & \textbf{94.76} \\
		\bottomrule
	\end{tabular} \vspace{-4mm}
\end{table}

\begin{table*}[ht] \vspace{-2mm}
	\caption{Classification results of transition matrix estimation benchmark on CIFAR-10 and CIFAR-100 with synthetic symmetric and pair flipping noises.
	}\label{tabletm}  \vspace{-4mm}
	\centering
	\scriptsize
	%\scriptsize
	%\tabcolsep =0.7mm
	%\resizebox{0.49\textwidth}{10mm}{
		\begin{tabular}{l|cccc|cccc}
			\toprule
			\multirow{2}{*}{\textbf{Model} } &  \multicolumn{4}{c|}{\textbf{CIFAR-10}}   & \multicolumn{4}{c}{\textbf{CIFAR-100}} \\    \cline{2-9}
			&  Sym-20\% &  Sym-50\%  & Pair-20\% &  Pair-45\%   &  Sym-20\% &  Sym-50\%  & Pair-20\% &  Pair-45\%   \\ \cline{2-9}
			\hline
			%ERM  &   86.8 &79.4 & 62.9 & 42.7 & 83.2 & 62.0 & 46.7 & 19.9 \\
			Forward \cite{patrini2017making} &   85.20 $\pm$ 0.80 &  74.82 $\pm$ 0.78 & 88.21 $\pm$ 0.48 & 77.44 $\pm$ 6.89 & 54.90 $\pm$ 0.74 & 41.85 $\pm$ 0.71&  56.12 $\pm$ 0.54  &36.88 $\pm$ 2.32 \\
			T-Revision \cite{xia2019anchor} & 87.95 $\pm$ 0.36& 80.01 $\pm$ 0.62 & 90.33 $\pm$ 0.52 & 78.94 $\pm$ 2.58 & 62.72 $\pm$ 0.69 & 49.12 $\pm$ 0.22& 64.33 $\pm$ 0.49& 41.55 $\pm$ 0.95 \\
			Dual-T \cite{yao2020dual} & 88.35 $\pm$ 0.33 & 82.54 $\pm$ 0.19 & 89.77 $\pm$ 0.25&  76.53 $\pm$ 2.51&62.16 $\pm$ 0.58 &  52.49 $\pm$ 0.37&  67.21 $\pm$ 0.43 & 47.60 $\pm$ 0.43\\
			VolMinNet \cite{li2021provably}  & 89.58 $\pm$ 0.26 & 83.37 $\pm$ 0.25 & 90.37 $\pm$ 0.30 &  88.54 $\pm$ 0.21& 64.94 $\pm$ 0.40 &  53.89 $\pm$ 1.26 & 68.45 $\pm$ 0.69 & 58.90 $\pm$ 0.89 \\  \hline
			Ours & \textbf{90.72 $\pm$ 0.12} & \textbf{83.89 $\pm$ 0.18 }    &  \textbf{91.69 $\pm$ 0.21 }      &  \textbf{ 89.10 $\pm$ 0.23  } &   \textbf{ 69.62 $\pm$ 0.15}   &    \textbf{59.52 $\pm$ 0.25}    &  \textbf{ 74.30 $\pm$ 0.16} &     \textbf{62.07 $\pm$ 0.29}                \\			
			\bottomrule
		\end{tabular} \vspace{-4mm}
	\end{table*}

	\begin{table*}[ht]
		\setlength{\abovecaptionskip}{0.cm}
		\setlength{\belowcaptionskip}{-2cm}
		\caption{Classification results of transition matrix estimation benchmark on real-world Clothing1M dataset. The other results are copy from VolMinNet \cite{li2021provably}.} \vspace{-1mm} \label{tableclothing}
		\centering
		\scriptsize
		\begin{tabular}{c|c|c|c|c|c|c|c|c}
			\toprule
			ERM & GCE \cite{zhang2018generalized} &  Co-teaching \cite{han2018co} & MentorNet \cite{jiang2018mentornet} &  Forward \cite{patrini2017making}& T-Revision \cite{xia2019anchor}  & Dual-T \cite{yao2020dual} &  VolMinNet \cite{li2021provably} &  Ours \\   		\hline
			69.03 & 69.75&  60.15 & 56.79 & 69.91 & 70.97 & 71.49   & 72.42 & \textbf{72.80}  \\
			\bottomrule
		\end{tabular} \vspace{-4mm}
	\end{table*}

\begin{figure} \vspace{-1mm}
	\centering
	\subfigcapskip=-1mm
	\subfigure[Performance on CIFAR-10]{\label{figcifar10}
		\includegraphics[width=0.23\textwidth]{./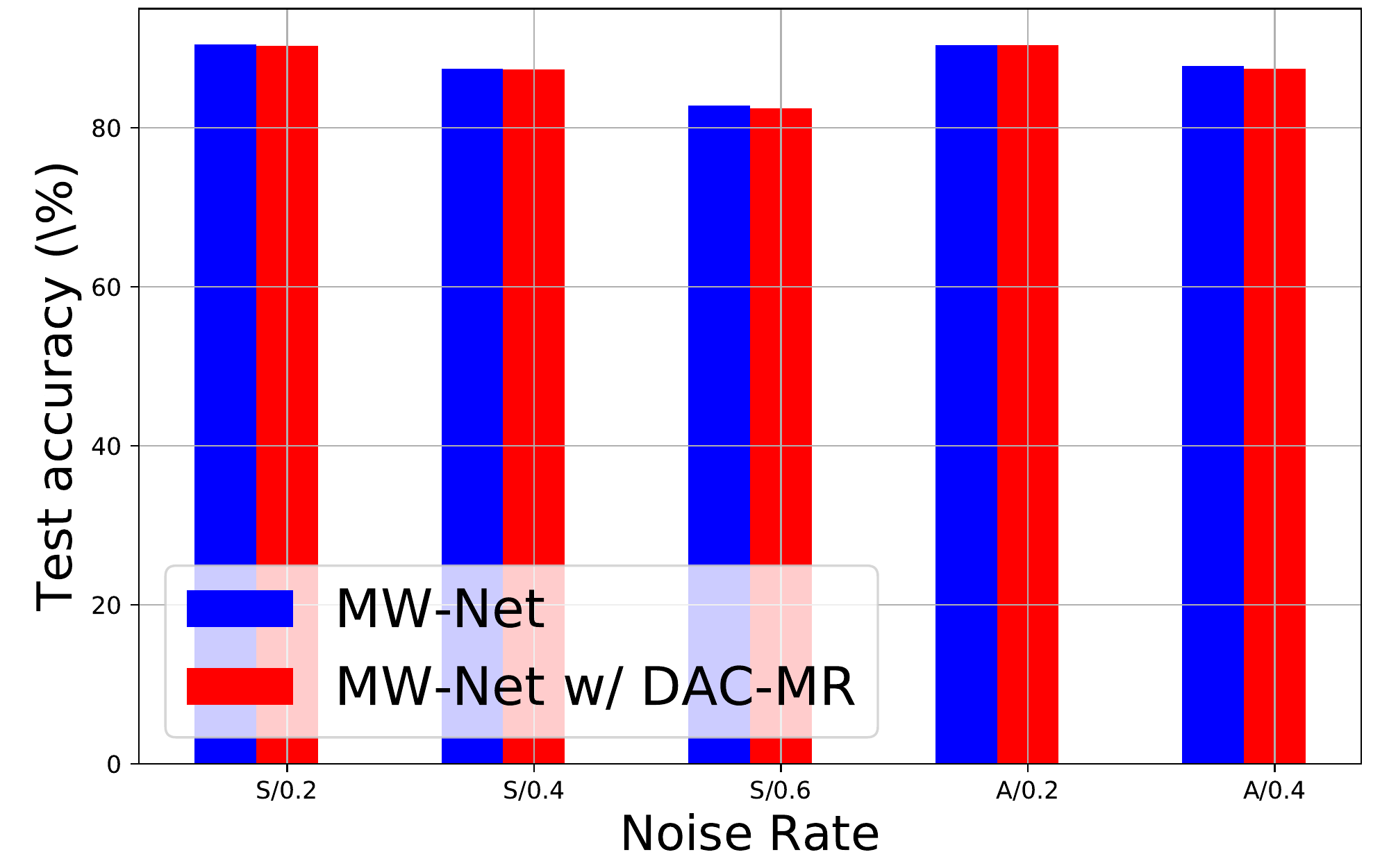} }
	\subfigure[Performance on CIFAR-100]{\label{figcifar100}
		\includegraphics[width=0.23\textwidth]{./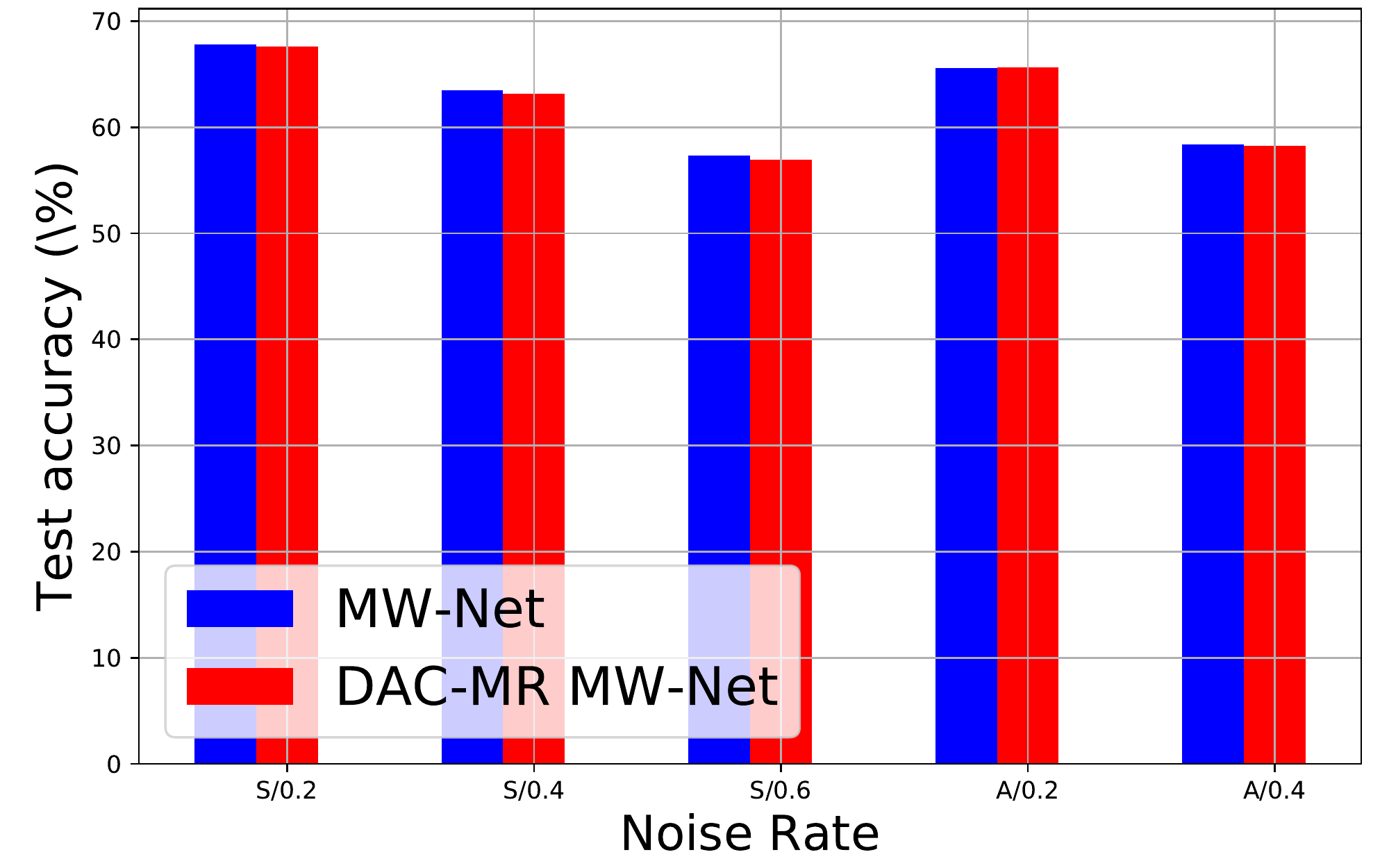} } \vspace{-6mm}
	\caption{Performance comparison for MW-Net with original clean meta samples and DAC-MR on synthetic (a) CIFAR-10, (b) CIFAR-100 noises.}\label{figcomparion}  \vspace{-4mm}
\end{figure}

\vspace{-4mm}
\subsection{Few-Shot Class-Incremental Learning} \label{section62}
\textbf{Formulation.} Different from FSL, few-shot class-incremental learning (FSCIL) learns training sessions in sequence. In this paper, we study the CEC \cite{zhang2021few} algorithm due to its SOTA FSCIL performance. CEC exploits the meta-objective computed on limited meta-data to optimize meta-model.
We introduce DAC-MR into CEC as a meta-regularizer to provide supplemental meta-knowledge information to help improve the performance of CEC.
More discussions can be found in Appendix D.2.
The implementation of our method is adapted from the official implementations of CEC available at \url{https://github.com/icoz69/CEC-CVPR2021}.

\noindent\textbf{Results.} As shown in Table \ref{tablefewcontinual}, our methods outperforms other methods on all three datasets among almost all the incremental
sessions. Our DAC-MR achieves higher average accuracy over all sessions and better final performance compared with CEC \cite{zhang2021few}.
This substantiates the effectiveness of DAC-MR, which further improves the capability of the meta-model to incrementally learn with less forgetting. It is also well-aligned with the theoretical result of Theorem \ref{th1}.

\vspace{-4mm}
\section{DAC-MR Benefits Label Noise Learning}\label{section7}
In this section, we study the influence of DAC-MR on meta-learning for two typical label noise learning tasks, including adaptive sample weighting strategy learning (\S\ref{section71}) and transition matrix estimation (\S\ref{section72}).
\vspace{-4mm}
\subsection{Sample Weighting Learning}   \label{section71}
\textbf{Formulation.} We consider the MW-Net algorithm \cite{shu2019meta}, representing a typical recent meta-learning strategy. Specifically, given a training dataset $\mathcal{D}^{(s)}$ with noisy labels, and a meta dataset $\mathcal{D}^{(q)}$ with clean labels, MW-Net $h_{\phi}(\cdot)$ learns an explicit sample weighting mapping by solving the following bi-level optimization objective:
\begin{align}
		{\phi}^* &= \mathop{\arg\min}_{{\phi }}  \mathcal{L}^{meta} (\mathcal{D}^{(q)};\mathbf{w}^*(\phi)),     \label{eqmeta}\\
		\text{s.t.} \ &\ \mathbf{w}^*(\phi) =\mathop{\arg\min}_{{\mathbf{w} }} \frac{1}{m} \sum_{i=1}^{m} h_{\phi}(L_i^{tr}(\mathbf{w})) L_i^{tr}(\mathbf{w}), \label{eqtrain}
\end{align}
where $L_i^{tr}(\mathbf{w}) \!=\! {\ell}(f_{\mathbf{w}}(x_i^{(s)}),y_i^{(s)})$, and $\ell$ is the cross-entropy loss. MW-Net often requires to collect extra meta dataset with clean labels, which is sometimes unavailable in practice. To reduce the barrier of real-life problem settings where clean samples are often unavailable, we introduce DAC-MR into MW-Net without the need to access clean meta samples. Specifically, we replace Eq.(\ref{eqmeta}) with DAC-MR as below: \vspace{2mm}
\begin{align} \label{eqmwnet}
{\phi}^* &= \mathop{\arg\min}_{{\phi }} \mathcal{MR}^{dac} (D;\mathbf{w}^*(\phi),A ),	  A \in \mathcal{A},
\end{align}
where $D = \{x_{i}\}_{i=1}^{k}$ are additionally sampled/divided from $\mathcal{D}^{(s)}$, so as to reduce the requirement of clean meta samples.
Here, we use the official implementations of MW-Net available at \url{https://github.com/xjtushujun/meta-weight-net}.

\noindent\textbf{Comparison with MW-Net.}  Fig. \ref{figcomparion} shows performances of MW-Net meta-learned with clean meta samples and DAC-MR, respectively, under different noise settings. One can see that DAC-MR behaves similar in almost all cases as clean meta samples supervision. This implies that DAC-MR can learn an adaptive weighting strategy without access to clean meta-data, well-aligned with the theoretical insights in \S\ref{section35}.

\noindent\textbf{Comparison with SOTA methods.} To fairly compare with the SOTA methods, we use pseudo-labels to correct noisy labels to more sufficiently make use of samples inspired by DivideMix \cite{li2019dividemix}, C2D \cite{zheltonozhskii2022contrast} and AugDesc \cite{nishi2021augmentation}. More discussions are given in the Appendix E.1.
Table \ref{tablemwnet} evaluates the performance of MW-Net with DAC-MR on CIFAR-10 and CIFAR-100 with synthetic noisy datasets. It is seen that our method consistently outperforms SOTA methods by an evident margin. Table \ref{webvision} compares our method with SOTA methods trained on the real mini-WebVision dataset and evaluated on both WebVision and ILSVRC12 validation sets. Our method outperforms previous works on WebVision validation set by at least 2\% top-1 accuracy, further validating the effectiveness of DAC-MR objective for learning proper weighting schemes on different noisy label cases.

\vspace{-4mm}
\subsection{Transition Matrix Estimation} \label{section72}
\textbf{Formulation.} The transition matrix plays a key role in building statistically consistent classifiers \cite{patrini2017making,shu2020meta} in label noise learning. Existing consistent estimators for the transition matrix have been developed by
exploiting anchor points \cite{patrini2017making}. However, the anchor-point assumption is not always satisfied in real scenarios, which tends to conduct a poorly estimated transition matrix and a degenerated classifier. To reduce the requirement of anchor points, we explore to use the DAC-MR for estimating transition matrix, i.e.,
\begin{align*}
	{\mathbf{T}}^* & = \mathop{\arg\min}_{ \mathbf{T} \in \mathbb{T}} \mathcal{MR}^{dac} (D;\mathbf{w}^*(\mathbf{T}),A ),	  A \in \mathcal{A},  \\
	\text{s.t.,} \ \mathbf{w}^*(\mathbf{T}) & = \mathop{\arg\min}_{\mathbf{w}} \frac{1}{m}\sum_{i=1}^m \ell(\mathbf{T}f_{\mathbf{w}}(x_i^{(s)}), y_i^{(s)} ),
\end{align*}
where $D = \{x_{i}\}_{i=1}^{k}$ are additionally sampled/divided from $\mathcal{D}^{(s)}$. Note that we treat transition matrix $\mathbf{T}$ as meta-representation, and its estimation should be obtained by minimizing DAC-MR on $D$ in a meta-learning manner \cite{hospedales2021meta}. 

\noindent\textbf{Results.} Table \ref{tabletm} shows the classification accuracies of our and baseline methods on synthetic noisy datasets. Note that T-Revision, Dual-T and VolMinNet are specifically designed based on the knowledge of transition matrix, while our DAC-MR can still outperform them on all noise settings, especially on CIFAR-100 by a significant margin.
Table \ref{tableclothing} shows the results on real noisy Clothing1M dataset. Forward, T-Revision and Dual-T additionally use 50k clean data to help estimate the transition matrix, which is actually not practical in real-world settings. Similar to VolMinNet \cite{li2021provably}, we only use noisy data for transition matrix estimation and model training, and achieve better performance than all baseline methods.
These results demonstrate that DAC-MR produces better transition matrix estimation, and perform superior over baseline methods in dealing with different label noise problems.
Especially, we have no access to clean meta-data, which further confirms the properness of theoretical analysis proposed in \S\ref{section35}.

\vspace{-4mm}
\section{Conclusion}
In this study we have suggested a MKIML framework to increase reliability and robustness of meta-learning for imperfect training tasks. The key insight is to integrate compensated meta-knowledge into the meta-learning process. As a preliminary attempt, we put forward the meta-regularization strategy. Compared to regularization used to improve generalization capability of the extracted model in conventional machine learning, the meta-regularization aims to help ameliorate generalization of the extracted meta-model for meta-learning.
We further use data augmentation consistency to encode data prediction invariance as meta-knowledge as a practical implementation of such MR objective, denoted by DAC-MR, which encourages models facilitated by meta-model to produce similar predictions under some data augmentations.
Two potential theoretical prospects of DAC-MR are illustrated and substantiated. One is that DAC-MR can be regarded as a proxy meta-supervised information to evaluate meta-model, implying a possible solution to handle meta-learning tasks with noisy or unavailable meta-data. The other is that additional DAC-MR objective can boost the meta-level generalization beyond purely meta-data-driven meta-objective. Comprehensive experimental results validate that DAC-MR can improve performance of baseline methods across various meta-learning tasks, network architectures and datasets, implying that DAC-MR is problem-agnostic and potentially useful to help strengthen general meta-learning problems and tasks.

In our future investigation, we will try to develop more meta-regularization strategies benefited from other types of meta-knowledge, like logic rules, knowledge graph, etc, to improve purely meta-data-driven approaches for more comprehensive and diverse meta learning problems. Attributed to its similar principle to conventional regularization strategies, it is also hopeful to develop deeper and more comprehensive statistical learning understanding for meta-regularization theoretically, like bias-variance tradeoff, Bayesian interpretations, etc. Besides, we will make endeavor to build the connection between meta-regularization and effective meta-hypothesis space.
More possible and valid paths to integrate meta-knowledge into other meta-learning components will also be further investigated.
Especially, developing novel approaches of MKIML to effectively learn the meta-model from relatively less training tasks, and well generalize to more complicated meta-test tasks will also be considered in our future research.

\bibliographystyle{IEEEtran}
\bibliography{IEEEabrv,mylib}

\clearpage
\onecolumn
\appendices

\section{Proof Details in Section 3} \label{sec1}

\subsection{Proof of Theorem 1}
We firstly present some necessary assumptions and lemma for our proof.
\begin{assumption}[Regularity of marginal distribution, \cite{yang2022sample}] \label{assumption11}
	Let $x \in P(x)$ be zero-mean $\mathbb{E}[x] = 0$, with the covairance $\mathbb{E}[xx^T] = \Sigma_x \succ 0$. Assume that $\Sigma_x^{-1/2} x$ is $\rho^2$-subgaussian, and there exist constants $C \geq c = \mathcal{O} (1)$, such that $c\mathbf{I}_d  \preccurlyeq \Sigma_x \preccurlyeq  C\mathbf{I}_d$.
\end{assumption}

\begin{assumption}[Sufficient labeled data, \cite{yang2022sample}]\label{assumption22}
	We assume that $MN \gg \rho^4 (d-d_{dac})$.
\end{assumption}

\begin{lemma} \label{lemma11}
	Let $x \in P(x)$ be zero-mean $\mathbb{E}[x] = 0$, with the covariance $\mathbb{E}[xx^T] = \Sigma_x \succ 0$, and $\Sigma_x^{-1/2} x$ is $\rho^2$-subgaussian.  Given an i.i.d. sample of $x$, $\mathbf{X} = [x_{11}, \cdots, x_{ij}, \cdots, x_{Mn}]$, for any $\delta \in (0,1)$, if $Mn\gg \rho^4 d$, then $0.9\Sigma_x \preccurlyeq \frac{1}{Mn} \mathbf{X}^T \mathbf{X} \preccurlyeq 1.1\Sigma_x$ with high probability.
\end{lemma}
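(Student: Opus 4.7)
The statement is a standard non-asymptotic covariance concentration bound for sub-Gaussian vectors, so the plan is to reduce to the isotropic case, apply an off-the-shelf concentration inequality, and then transfer the bound back to $\Sigma_x$. Concretely, I would define the whitened sample $z_{ij} = \Sigma_x^{-1/2} x_{ij}$ for $i \in [M], j \in [n]$, so that $z_{ij}$ are i.i.d., zero-mean, $\rho^2$-sub-Gaussian, and have identity covariance. Writing $\mathbf{Z} = [z_{11}, \ldots, z_{Mn}]$, one has the exact algebraic identity
\begin{equation*}
    \tfrac{1}{Mn}\mathbf{X}\mathbf{X}^{\top} \;=\; \Sigma_x^{1/2}\!\left(\tfrac{1}{Mn}\mathbf{Z}\mathbf{Z}^{\top}\right)\!\Sigma_x^{1/2},
\end{equation*}
so it suffices to control the deviation $\big\| \tfrac{1}{Mn}\mathbf{Z}\mathbf{Z}^{\top} - \mathbf{I}_d \big\|_{\mathrm{op}}$ and sandwich.

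Next, I would invoke a standard sub-Gaussian covariance concentration result (e.g., Vershynin, \emph{High-Dimensional Probability}, Theorem 4.6.1, or Wainwright Ch.~6): for i.i.d. isotropic $\rho^2$-sub-Gaussian vectors in $\mathbb{R}^d$ and any $u \geq 0$,
\begin{equation*}
    \mathbb{P}\!\left(\big\|\tfrac{1}{Mn}\mathbf{Z}\mathbf{Z}^{\top} - \mathbf{I}_d\big\|_{\mathrm{op}} \;\geq\; C_0\rho^2\!\left(\sqrt{\tfrac{d+u}{Mn}} + \tfrac{d+u}{Mn}\right)\right) \;\leq\; 2e^{-u}.
\end{equation*}
Taking $u = \log(2/\delta)$ and using Assumption~\ref{assumption22} (i.e., $Mn \gg \rho^4 d$), the right-hand side of the operator-norm bound is at most $0.1$ with probability at least $1-\delta$. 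Substituting back through the whitening identity above yields
\begin{equation*}
    0.9\,\Sigma_x \;\preccurlyeq\; \tfrac{1}{Mn}\mathbf{X}\mathbf{X}^{\top} \;\preccurlyeq\; 1.1\,\Sigma_x,
\end{equation*}
which is the claim (reading $\mathbf{X}^{\top}\mathbf{X}$ in the lemma as the $d{\times}d$ empirical second-moment matrix, consistent with the way it is used in the paper).

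The bulk of the work is entirely off-the-shelf; the only genuine step requiring a small argument is justifying that the whitened vectors $z_{ij}$ inherit the $\rho^2$-sub-Gaussianity (this is immediate from the definition, since the assumption is stated directly on $\Sigma_x^{-1/2} x$) and verifying that the regularity condition $c\mathbf{I}_d \preccurlyeq \Sigma_x \preccurlyeq C\mathbf{I}_d$ from Assumption~\ref{assumption11} is not actually needed for this lemma itself (it will be used downstream). I do not foresee a real obstacle: the main subtlety is bookkeeping conventions (columns-as-samples versus rows-as-samples) to make sure the resulting $d\times d$ matrix is the one intended in the lemma's conclusion, and picking the constant in Assumption~\ref{assumption22} large enough to absorb $C_0\rho^2$ into the target tolerance $0.1$.
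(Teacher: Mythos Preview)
Your proposal is correct and is the standard whitening-plus-sub-Gaussian-covariance-concentration argument. The paper itself does not prove this lemma directly but simply cites it as Lemma~5 in \cite{yang2022sample}, so your write-up in fact supplies more detail than the paper does; the underlying mechanism (Vershynin-type operator-norm concentration for isotropic sub-Gaussian vectors) is exactly what that reference relies on.
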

The proof of Lemma \ref{lemma11} can refer to Lemma 5 in \cite{yang2022sample}.
Formally, we could define the following DAC-MR operator $\mathcal{T}^{mr}$ over $\mathcal{H}$:
\begin{Definition} [DAC-MR Operator]
	\begin{align*}
		&\mathcal{T}^{mr}_{\mathcal{A},\mathbf{X}}(\mathcal{H}) \triangleq  \left \{ h | h\in \mathcal{H}, f_{\theta^*_i(h_{\phi})}(x_{ij}) = f_{\theta^*_i(h_{\phi})}(A(x_{ij})), f_{\theta^*_i}  \right. \notag \\
		& \left. \in \mathcal{F}, A \in \mathcal{A}, \mathbf{X} = [x_{11}, \!\cdots\!, x_{ij}, \!\cdots\!, x_{Mn}],  \forall i \in [M], j \in [n] \right\}.
	\end{align*}
\end{Definition}
We use $d_{dac}$ to quantify the strength of data augmentation $\mathcal{A}$ defined by
\begin{align} \label{dac}
	d_{dac} \triangleq \mathrm{rank} (A\mathbf{X}  -\mathbf{X}  ), \text{for fixed} \ A \in \mathcal{A}.
\end{align}
Our theoretical results are built upon the results of \cite{tripuraneni2020theory}. Specifically,
we instantiate our framework for one of the most frequently used classification methods — logistic regression with $\mathcal{X} = \{ x\in \mathbb{R}^d | \|x\|_2 \leq G \}$, $\mathcal{Y} = \{0,1\}$. We follow the setting in \cite{tripuraneni2020theory}, and  consider the function class
\begin{align}
	\mathcal{F} = &\{ f|f(z)  = \alpha^T z, \alpha \in \mathbb{R}^r, \|\alpha\| \leq c \}, \notag \\
	\mathcal{H} = &\{h | h(x) = B^Tx, B = (b_1,\cdots,b_r ) \in \mathbb{R}^{d\times r}, \notag \\
	&B \  \text{is a matrix with orthonormal columns} \},  \label{eqfh}
\end{align}
where task-specific functions $f$s are linear maps, and the underlying meta-representation $h$ is a projection onto a low-dimensional subspace. Such meta-level representation learning would provide a statistical guarantee for several importantly meta-learning scenarios \cite{maurer2016benefit,tripuraneni2020theory,tripuraneni2021provable,du2020few,sun2021towards,xu2021representation}, e.g., transfering learning, few-shot learning.

We assume that $P(y=1|f\circ h(x)) = \sigma(\alpha^TA^Tx)$, where $\sigma(\cdot)$ is the sigmoid function with $\sigma(z) =  1 /(1+\exp(-z))$. We use the logistic loss $\ell(z,y) =-y\log(\sigma(z)) - (1-y) \log (1-\sigma(z))$ for $\mathcal{L}^{train}$ and $\mathcal{L}^{meta}$.
For the instantiation in Eq.(\ref{eqfh}), \cite{tripuraneni2020theory} recently has theoretically proved that meta-level error bound with respect to meta-model scales as $ C(\mathcal{H}) + M C(\mathcal{F})$, where $C(\cdot)$ captures the complexity of function class, and $M$ denotes some coefficients independent of model complexity.

To demonstrate that our additional DAC-MR objective brings better meta-level generalization than purely meta-data-driven meta-objective,
we just need to illustrate that the complexity of $\mathcal{T}^{mr}_{\mathcal{A},\mathbf{X}}(\mathcal{H})$ is smaller than $\mathcal{H}$.
Here we use the Gaussian complexity \cite{bartlett2002rademacher} to measure the complexity of a function class. The following theorem shows the complexities of $\mathcal{T}^{mr}_{\mathcal{A},\mathbf{X}}(\mathcal{H})$ and $\mathcal{H}$.
\begin{Theorem}[Formal restatement of Theorem 1 in Section 3.5] \label{th1}
	Considering the setting in Eq.(\ref{eqfh}), if Assumptions \ref{assumption11}, \ref{assumption22} and the conditions in Lemma \ref{lemma11} hold, then the complexity of $\mathcal{T}^{mr}_{\mathcal{A},\mathbf{X}}(\mathcal{H})$ and $\mathcal{H}$ satisfy
	\begin{align}
		C(\mathcal{T}^{mr}_{\mathcal{A},\mathbf{X}}(\mathcal{H})) \lesssim \sqrt{\frac{( d-d_{dac})r^2}{Mn}}, C(\mathcal{H})  \lesssim \sqrt{\frac{d r^2 }{Mn}},
	\end{align}
	where $d_{dac}$ is defined in Eq.(\ref{dac}).
\end{Theorem}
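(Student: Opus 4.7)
The plan is to prove the two complexity bounds by reducing the supremum over the function class to a supremum over a bounded parameter set in a low-dimensional space, and then applying standard Gaussian complexity estimates for linear classes. Throughout, any composition $f \circ h$ with $f(z) = \alpha^{T}z$ and $h(x) = B^{T}x$ collapses to the scalar function $x \mapsto \alpha^{T} B^{T} x = (B\alpha)^{T} x$. So, for complexity computations, the relevant object is the vector $v := B\alpha \in \mathbb{R}^{d}$, which satisfies $\|v\| \leq \|B\|_{\mathrm{op}}\|\alpha\| \leq c$ because $B$ has orthonormal columns. I would first establish that the empirical Gaussian complexity reduces to
\[
\hat{G}_{Mn}(\mathcal{H}) \;=\; \mathbb{E}_{g}\, \sup_{\|v\|\leq c}\, \frac{1}{Mn}\sum_{i,j} g_{ij}\, v^{T} x_{ij} \;=\; \frac{1}{Mn}\,\mathbb{E}_{g}\Bigl\|\sum_{i,j} g_{ij} x_{ij}\Bigr\| \cdot c,
\]
and then use Jensen's inequality together with Lemma~\ref{lemma11} (which gives $\tfrac{1}{Mn}\mathbf{X}^{T}\mathbf{X} \preccurlyeq 1.1\Sigma_{x}$ under Assumption~\ref{assumption22}) to bound this by $\lesssim \sqrt{d/(Mn)}$. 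An extra factor of $r$ enters because, following \cite{tripuraneni2020theory}, the full meta-level complexity aggregates a trace-type bound over the $r$ task-specific directions in the subspace $\mathrm{col}(B)$, giving the stated $\sqrt{dr^{2}/(Mn)}$.

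Next, for the constrained class $\mathcal{T}^{mr}_{\mathcal{A},\mathbf{X}}(\mathcal{H})$, I would translate the DAC constraint into a linear constraint on $v$. The condition $f_{\theta^{*}_i(h_\phi)}(x_{ij}) = f_{\theta^{*}_i(h_\phi)}(A(x_{ij}))$ for all $i,j$ forces $v^{T}(x_{ij} - A(x_{ij})) = 0$ for every $(i,j)$, i.e.,
\[
v \;\in\; \ker\bigl((\mathbf{X} - A(\mathbf{X}))^{T}\bigr).
\]
By definition of $d_{dac}$ in Eq.(\ref{dac}), this kernel has dimension $d - d_{dac}$. Let $P$ denote the orthogonal projection onto this $(d-d_{dac})$-dimensional subspace. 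Then $v = P v$ and
\[
v^{T} x_{ij} \;=\; v^{T} P x_{ij},
\]
so the relevant Gaussian complexity becomes
\[
\hat{G}_{Mn}(\mathcal{T}^{mr}_{\mathcal{A},\mathbf{X}}(\mathcal{H})) \;\leq\; \frac{c}{Mn}\, \mathbb{E}_{g}\Bigl\|P \sum_{i,j} g_{ij} x_{ij}\Bigr\|.
\]
Applying Jensen's inequality yields the upper bound $c\sqrt{\mathrm{tr}(P\hat{\Sigma}_{x} P)/(Mn)}$ where $\hat{\Sigma}_{x} = \tfrac{1}{Mn}\mathbf{X}^{T}\mathbf{X}$. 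By Lemma~\ref{lemma11} together with Assumption~\ref{assumption11} ($\Sigma_{x} \preccurlyeq C\mathbf{I}$), this trace is at most $1.1 C (d - d_{dac})$, giving the desired $\sqrt{(d-d_{dac})/(Mn)}$ scaling. The same $r^{2}$ factor re-enters by repeating the representation-learning aggregation of \cite{tripuraneni2020theory} verbatim in the projected coordinate system.

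The main obstacle, I expect, is handling the interplay between the rank constraint on $B$ (orthonormal columns) and the linear constraint inherited from the DAC operator, since $P$ and $B$ are coupled through $v = B\alpha$. A clean way around this is to \emph{not} separate $B$ and $\alpha$ during the complexity calculation and instead parametrize the hypothesis directly by $v$ as above, which only uses the trivial inequality $\|v\|\leq c$. The subspace constraint then lives entirely on $v$ and plays nicely with $P$. A secondary technicality is ensuring the reduction from the multi-task meta-level bound of \cite{tripuraneni2020theory} to the single-vector complexity computation above is legitimate; I would invoke their decomposition of the meta-generalization error into $C(\mathcal{H}) + M\cdot C(\mathcal{F})$ directly and simply substitute the improved feature-dimension count $d - d_{dac}$ into their bound on $C(\mathcal{H})$. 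Finally, the lower-bound requirement $Mn \gg \rho^{4}(d-d_{dac})$ in Assumption~\ref{assumption22} is exactly what guarantees Lemma~\ref{lemma11} is applicable after projection by $P$, closing the argument.
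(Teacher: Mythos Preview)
Your proposal has a genuine gap in how it sets up the complexity computation. The quantity $C(\mathcal{H})$ in the theorem is the Gaussian complexity of the \emph{vector-valued} representation class $\mathcal{H}=\{x\mapsto B^{T}x\}$, whose outputs lie in $\mathbb{R}^{r}$. Its empirical Gaussian complexity is
\[
\hat{\mathcal{G}}_{\mathbf{X}}(\mathcal{H})\;=\;\frac{1}{Mn}\,\mathbb{E}\Bigl[\sup_{B}\sum_{i,j}\sum_{k=1}^{r} g_{kij}\,b_{k}^{T}x_{ij}\Bigr],
\]
with independent Gaussians $g_{kij}$ indexed by the output coordinate $k$ in addition to the sample $(i,j)$. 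This is precisely what produces the factor $r$: after separating the sum over $k$ and using $\|b_k\|\le 1$, each of the $r$ terms is bounded by $\sqrt{\mathrm{tr}(\tfrac{1}{Mn}\mathbf{X}\mathbf{X}^{T})/(Mn)}$, and summing gives the $r\sqrt{d/(Mn)}$ scale. Your displayed identity ``$\hat{G}_{Mn}(\mathcal{H})=\mathbb{E}_{g}\sup_{\|v\|\le c}\ldots$'' is therefore not correct: the right-hand side is the Gaussian complexity of the scalar class $\{x\mapsto v^{T}x:\|v\|\le c\}$, which is (a relaxation of) $\mathcal{F}\circ\mathcal{H}$ for a single task, not $\mathcal{H}$ itself. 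That scalar complexity has no $r$ in it, and your appeal to ``aggregation over the $r$ task-specific directions'' from \cite{tripuraneni2020theory} cannot reinstate it, because once you have collapsed to $v=B\alpha$ the $r$-dimensional output structure has been discarded.

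The same issue carries over to the constrained class. The paper places the DAC constraint on $B$: from $\alpha^{T}B^{T}\Delta=0$ it passes to $B^{T}\Delta=0$, i.e., every column $b_{k}\in\mathrm{Null}(\Delta)$. It is this column-wise constraint that permits replacing $x_{ij}$ by $P^{\perp}_{\Delta}x_{ij}$ inside the vector-valued Gaussian complexity and then bounding each of the $r$ summands via $\mathrm{tr}(P^{\perp}_{\Delta}\hat{\Sigma}P^{\perp}_{\Delta})\lesssim d-d_{dac}$ (using Lemma~\ref{lemma11} and Assumption~\ref{assumption11}). Your constraint $v\in\ker(\Delta^{T})$ is weaker and lives only on the collapsed parameter; it is internally consistent with your scalar calculation, but it does not give the column-wise condition on $B$ that the bound on $C(\mathcal{T}^{mr}_{\mathcal{A},\mathbf{X}}(\mathcal{H}))$ requires. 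To close the argument you need to work with the vector-valued complexity throughout and impose the null-space condition on each $b_{k}$, not on $v=B\alpha$.
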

\begin{proof}
	We firstly calculate the Gaussian complexity of $\mathcal{H}$ as follows:
	\begin{align*}
		\hat{\mathcal{G}}_{\mathbf{X}}(\mathcal{H})
		&= \frac{1}{Mn} \mathbb{E}\left[\sup_{B} \sum_{i=1}^M \sum_{j=1}^n \sum_{k=1}^r g_{kij} b_k^T x_{ij}\right]
		= \frac{1}{Mn} \mathbb{E}\left[\sup_{(b_1,\cdots,b_r) \in \mathcal{H}} \sum_{k=1}^r b_k^T \left( \sum_{i=1}^M \sum_{j=1}^n  g_{kij}  x_{ij}\right) \right] \\
		& \leq  \frac{1}{Mn} \sum_{k=1}^r \mathbb{E} \left[ \left\| \sum_{i=1}^M \sum_{j=1}^n  g_{kij}  x_{ij}\right\| \right]
		\leq \frac{1}{Mn} \sum_{k=1}^r \sqrt{\mathbb{E} \left[ \left\| \sum_{i=1}^M \sum_{j=1}^n  g_{kij}  x_{ij}\right\|^2 \right]}  \\
		& \leq \frac{1}{Mn} \sum_{k=1}^r \sqrt{   \sum_{i=1}^M \sum_{j=1}^n  \left\|  x_{ij}\right\|^2 }
		= \frac{r}{\sqrt{Mn}} \sqrt{\mathrm{tr} \left (\frac{1}{Mn} \mathbf{X}\mathbf{X}^T \right)},
	\end{align*}
	where $\mathbf{X} = [x_{11}, \!\cdots\!, x_{ij}, \!\cdots\!, x_{Mn}]$, and the first and second inequalities hold by Jensen inequality; and the third inequality holds by Cauchy-Schwarz inequality. Furthermore, we take expectation over $\mathbf{X}$ to obtain the population Gaussian complexity as below:
	\begin{align*}
		\mathcal{G}_{Mn}(\mathcal{H}) \leq  \frac{r}{\sqrt{Mn}} \mathbb{E}\left[ \sqrt{\mathrm{tr} \left (\frac{1}{Mn} \mathbf{X}\mathbf{X}^T \right)} \right]  \lesssim  \sqrt{\frac{d r^2 }{Mn}},
	\end{align*}
	where the last inequality is obtained by Theorem 4 in \cite{tripuraneni2020theory}.
	
	We then calculate the Gaussian complexity of $\mathcal{T}^{mr}_{\mathcal{A},\mathcal{X}}(\mathcal{H})$. DAC-MR can be rewritten as the following constraints:
	\begin{align}
		\alpha^T B^T (\mathcal{A}(\mathbf{X}) - \mathbf{X} ) = 0.   \label{eqsp1}
	\end{align}
	Without loss of generality, Eq.(\ref{eqsp1}) can be written as
	\begin{align*}
		B^T (\mathcal{A}(\mathbf{X}) - \mathbf{X} ) = \mathbf{0}.
	\end{align*}
	i.e.,
	\begin{align*}
		(\mathcal{A}(\mathbf{X}) - \mathbf{X} ) b_k  = 0, k \in [r] .
	\end{align*}
	Let $\Delta = \mathcal{A}(\mathbf{X}) - \mathbf{X}$, and then we have
	\begin{align*}
		\mathcal{T}^{mr}_{\mathcal{A},\mathcal{X}}(\mathcal{H}) \triangleq & \left \{ h(x) = B^Tx | b_k \in \mathrm{Null}(\Delta),  k \in [r]   \right\}.
	\end{align*}
	Denote $\mathbf{P}_{\Delta}$ as the orthogonal projector onto $\Delta$, $\mathbf{P}^{\perp}_{\Delta} \triangleq \mathbf{I}_d - \mathbf{P}_{\Delta}$ as the orthogonal complement of $\mathbf{P}_{\Delta}$, and we are ready to bound the Gaussian complexity of $\mathcal{T}^{mr}_{\mathcal{A},\mathcal{X}}(\mathcal{H})$ as follows:
	\begin{align*}
		\hat{\mathcal{G}}_{\mathbf{X}}(\mathcal{T}^{mr}_{\mathcal{A},\mathcal{X}}(\mathcal{H}))
		= & \frac{1}{Mn} \mathbb{E}\left[\sup_{B \in \mathcal{T}^{mr}_{\mathcal{A},\mathcal{X}}(\mathcal{H})} \sum_{i=1}^M \sum_{j=1}^n \sum_{k=1}^r g_{kij} b_k^T x_{ij}\right]
		=  \frac{1}{Mn} \mathbb{E}\left[\sup_{(b_1,\cdots,b_r) \in \mathcal{H}} \sum_{k=1}^r b_k^T \left( \sum_{i=1}^M \sum_{j=1}^n  g_{kij} \mathbf{P}^{\perp}_{\Delta} x_{ij}\right) \right] \\
		\leq &  \frac{1}{Mn} \sum_{k=1}^r \mathbb{E} \left[ \left\| \sum_{i=1}^M \sum_{j=1}^n  g_{kij}\mathbf{P}^{\perp}_{\Delta}  x_{ij}\right\| \right]
		\leq  \frac{1}{Mn} \sum_{k=1}^r \sqrt{\mathbb{E} \left[ \left\| \sum_{i=1}^M \sum_{j=1}^n  g_{kij}\mathbf{P}^{\perp}_{\Delta}  x_{ij}\right\|^2 \right]}  \\
		\leq & \frac{1}{Mn} \sum_{k=1}^r \sqrt{   \sum_{i=1}^M \sum_{j=1}^n  \left\| \mathbf{P}^{\perp}_{\Delta} x_{ij}\right\|^2 }
		=  \frac{r}{\sqrt{Mn}} \sqrt{\mathrm{tr}  \left(  \frac{1}{Mn} \mathbf{P}^{\perp}_{\Delta} \mathbf{X}^T \mathbf{X} \mathbf{P}^{\perp}_{\Delta}  \right )}.
	\end{align*}
	Taking expectation over $\mathbf{X}$,
	%and conditioned on $\mathbf{P}^{\perp}_{\Delta}$,
	we can obtain that the population Gaussian complexity is satisfied with:
	\begin{align*}
		\mathcal{G}_{Mn}(\mathcal{T}^{mr}_{\mathcal{A},\mathcal{X}}(\mathcal{H}))
		\leq  \frac{r}{\sqrt{Mn}} \mathbb{E} \left[\sqrt{\mathrm{tr}  \left(  \frac{1}{Mn} \mathbf{P}^{\perp}_{\Delta} \mathbf{X}^T \mathbf{X} \mathbf{P}^{\perp}_{\Delta}  \right)}  \right]
		\leq  \frac{r}{\sqrt{Mn}} \sqrt{\mathbb{E} \left[\mathrm{tr}  \left(  \frac{1}{Mn} \mathbf{P}^{\perp}_{\Delta} \mathbf{X}^T \mathbf{X} \mathbf{P}^{\perp}_{\Delta}  \right )  \right]}
		\triangleq  \frac{r C_\mathcal{A}}{\sqrt{Mn}}.
	\end{align*}
	
	Under the Assumptions \ref{assumption11}, \ref{assumption22}, and Lemma \ref{lemma11}, we have
	\begin{align*}
		C_\mathcal{A}  \leq \sqrt{d-d_{dac}} \left\|\frac{1}{Mn} \mathbf{P}^{\perp}_{\Delta} \mathbf{X}^T \mathbf{X} \mathbf{P}^{\perp}_{\Delta} \right \|_2
		\leq \sqrt{d-d_{dac}} \left\|\frac{1}{Mn} \mathbf{X}^T \mathbf{X}\right \|_2 \leq 1.1 C \sqrt{d-d_{dac}}
		\lesssim \sqrt{d-d_{dac}},
	\end{align*}
	and thus we have
	\begin{align*}
		\mathcal{G}_{Mn}(\mathcal{T}^{mr}_{\mathcal{A},\mathcal{X}}(\mathcal{H})) \lesssim \sqrt{\frac{( d-d_{dac})r^2}{Mn}}.
	\end{align*}
\end{proof}

\subsection{Proof of Theorem 2 \& 3}\label{sec35}

Firstly, we introduce some additional basic notations and definitions.
For an arbitrary set $\mathcal{X}$, let $\mathcal{Y} = [K]$, and $g^*: \mathcal{X} \rightarrow [K]$ be the ground truth classifier that partition $\mathcal{X}$: for each $k\in [K]$, let $\mathcal{X}_k \triangleq \{ x\in \mathcal{X} | g^*(x) = k\}$, with $\mathcal{X}_i \cap \mathcal{X}_j = \emptyset, \forall i \neq j$. In addition, for an arbitrary classifier $g: \mathcal{X} \rightarrow [K]$, we denote the majority label with respect to $g$ for each class,
\begin{align*}
	\hat{y}_k \triangleq \mathop{\arg\max}_{y \in [K]} P(g(x)=y | x \in \mathcal{X}_k), \forall k \in [K],
\end{align*}
and the class-wise and global minority sets
\begin{align*}
	M_k \triangleq \{x \in \mathcal{X}_k | g(x)\neq \hat{y}_k\}, \forall k \in [K], \ M  \triangleq \bigcup_{k=1}^K  M_k.
\end{align*}

To capture the connectivity of the data distribution, we further introduce the expansion propety \cite{wei2020theoretical,cai2021theory} on the mixed distribution $S\cup Q$ as below:
\begin{Definition} [Constant Expansion]
	We say that the distribution $S\cup Q$ satisfies $(q,\xi)$-constant expansion for some constant $q,\xi \in (0,1)$, if for any $V \subset S\cup Q$ with $P(V) \geq q$ and $P(V\cap (S_k \cup Q_k)) \leq 1/2$ for any $k\in[K]$, we have $P((\mathcal{N}(V) \setminus V)\cap (S_k \cup Q_k)) \geq \min \{P(V\cap (S_k \cup Q_k)), \xi\}$.
\end{Definition}
\begin{Definition} [Multiplicative Expansion]
	We say that the distribution $S\cup Q$ satisfies $(a,c)$-multiplicative expansion for some constant $a \in (0,1), c>1$, if for any $k\in[K]$ and $V \subset S \cup Q$ with $P(V\cap (S_k \cup Q_k)) \leq a$, we have $P(\mathcal{N}(V) \cap (S_k \cup Q_k)) \geq \min \{c\cdot P(V \cap (S_k \cup Q_k)), 1\}$.
\end{Definition}
This expansion property lower bounds the neighborhood size of low probability sets, and the parameters $(q,\xi)$ or $(a,c)$ quantify the augmentation strength of $\mathcal{A}$. Specifically, the strength of expansion-based data augmentations is characterized by expansion capability of $\mathcal{A}$: for a neighborhood $V \subset \mathcal{X}$ of proper size (characterized by $q$ or $a$ under measure $P$), the stronger augmentation $A$ leads to more expansion in $\mathcal{N}(S)$, and therefore larger $\xi$ or $c$. The following proposition builds a bridge between two expansions.
\begin{Proposition}[Lemma C.6 in \cite{wei2020theoretical}]  \label{prop1}
	Suppose that the distribution $S\cup Q$ satisfies $(1/2,c)$-multiplicative expansion on $\mathcal{X}$. Then for any choice of $\xi > 0$, $S\cup Q$ satisfies $(\frac{\xi}{c-1}, \xi)$-constant expansion.
\end{Proposition}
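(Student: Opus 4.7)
The plan is to convert the class-wise multiplicative expansion bound into the stated constant-expansion bound by a direct subtraction argument. Fix any $V \subset \mathrm{supp}(S \cup Q)$ with $P(V) \geq \xi/(c-1)$ and $v_k := P(V \cap (S_k \cup Q_k)) \leq 1/2$ for every $k \in [K]$. Since $v_k \leq 1/2$, applying the $(1/2, c)$-multiplicative expansion to the restricted subset $V \cap (S_k \cup Q_k)$ within class $k$ gives
\begin{align*}
P\bigl(\mathcal{N}(V \cap (S_k \cup Q_k)) \cap (S_k \cup Q_k)\bigr) \geq \min\{c\, v_k,\, 1\}.
\end{align*}
Monotonicity of the neighborhood operator, $\mathcal{N}(V \cap (S_k \cup Q_k)) \subset \mathcal{N}(V)$, transfers this lower bound to $P(\mathcal{N}(V) \cap (S_k \cup Q_k))$. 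Subtracting the mass $v_k$ of $V$ itself on both sides isolates the ``new neighbors'':
\begin{align*}
P\bigl((\mathcal{N}(V) \setminus V) \cap (S_k \cup Q_k)\bigr) \geq \min\{(c-1) v_k,\, 1 - v_k\}.
\end{align*}

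The remaining task is the algebraic verification $\min\{(c-1) v_k,\, 1 - v_k\} \geq \min\{v_k,\, \xi\}$, which is exactly where the calibration $q = \xi/(c-1)$ is designed to kick in. I would split into two regimes. When $v_k \geq \xi/(c-1)$, the multiplicative contribution $(c-1) v_k$ already reaches $\xi$, and the saturated branch $1 - v_k \geq 1/2$ also dominates $\xi$ in the non-trivial range $\xi \leq 1/2$, so the left side is at least $\xi \geq \min\{v_k, \xi\}$. When $v_k < \xi/(c-1)$, the global hypothesis $\sum_k v_k = P(V) \geq \xi/(c-1)$ forces some other class to carry enough mass to land in the previous regime, and since the target $\min\{v_k, \xi\}$ is itself a per-class quantity that the bound $(c-1) v_k$ already controls up to the factor $c - 1 > 0$, the inequality still holds once the threshold is invoked.

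The main obstacle is this small-$v_k$ sub-case when $c$ is only marginally above $1$: the multiplicative slack $(c-1)$ is then thin and a purely class-wise bound is not tight enough, so the ``global'' hypothesis $P(V) \geq \xi/(c-1)$ must be used in an essential way rather than as a sanity check. The cleanest route, following Lemma C.6 of \cite{wei2020theoretical}, is to argue that the threshold $\xi/(c-1)$ is precisely the crossover point at which the multiplicative gain $(c-1) v_k$ meets the constant target $\xi$; once the inequality is verified at this boundary value, monotonicity of both sides in $v_k$ propagates it uniformly across the admissible range, closing the proof.
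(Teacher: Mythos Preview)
The paper does not supply its own proof of this proposition; it merely cites Lemma~C.6 of \cite{wei2020theoretical}. So there is nothing to compare against except correctness.

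Your argument through the subtraction step is fine: from $(1/2,c)$-multiplicative expansion and $v_k\le 1/2$ one indeed gets
\[
P\bigl((\mathcal{N}(V)\setminus V)\cap(S_k\cup Q_k)\bigr)\;\ge\;\min\{(c-1)v_k,\,1-v_k\}.
\]
The case $v_k\ge \xi/(c-1)$ is also handled correctly (and, incidentally, your caveat ``$\xi\le 1/2$'' is not needed there: since $v_k\le 1/2$ we have $1-v_k\ge v_k$ and $(c-1)v_k\ge\xi\ge\min\{v_k,\xi\}$ regardless).

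The genuine gap is the sub-case $v_k<\xi/(c-1)$. Your appeal to the \emph{global} hypothesis $P(V)\ge\xi/(c-1)$ does not help: the conclusion of constant expansion in the paper's Definition is \emph{per class}, so knowing that some other class $k'$ has large mass says nothing about the class $k$ you are trying to bound. Concretely, take $c=1.5$, $\xi=0.2$ (so $q=0.4$), and a set $V$ with $v_k=0.1$ for the class of interest while other classes supply the remaining mass to reach $P(V)\ge 0.4$. Then $(c-1)v_k=0.05<0.1=\min\{v_k,\xi\}$, so the required inequality fails at class $k$. Your closing ``monotonicity'' remark does not rescue this either: the left-hand side $\min\{(c-1)v_k,1-v_k\}$ is not monotone in $v_k$, and matching the two sides at the single boundary value $v_k=\xi/(c-1)$ does not propagate downward.

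What is going on is a discrepancy between definitions. In \cite{wei2020theoretical} the constant-expansion conclusion is \emph{global}, namely $P(\mathcal{N}(V)\setminus V)\ge\min\{\xi,P(V)\}$, and Lemma~C.6 is proved by summing the per-class multiplicative bounds over $k$; the global hypothesis $P(V)\ge\xi/(c-1)$ then enters exactly through $\sum_k (c-1)v_k=(c-1)P(V)\ge\xi$. Under the paper's stronger per-class formulation of constant expansion, the statement as written does not follow from $(1/2,c)$-multiplicative expansion without the extra assumption $c\ge 2$ (which makes $(c-1)v_k\ge v_k$ trivially). You should either (i) revert to the global definition of constant expansion and sum over classes, or (ii) add the hypothesis $c\ge 2$, in which case the small-$v_k$ case becomes immediate.
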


To establish the relatinship between the expected meta-loss $\epsilon^{Q}(g) = \mathbb{E}_{Q} \mathbf{1}[g(x) \neq g^*(x)]$ and the expected DAC-MR $R_{Q}^{\mathcal{B}}(g) = \mathbb{E}_{Q} [\mathbf{1} (\exists x' \in \mathcal{B}(x), \text{s.t.}, g(x')\neq g(x))]$, we present some necessary assumption and lemmas as below.

\begin{Assumption} \label{assumption1}
	Assume that the task training and meta data distributions have the following structures: $supp(S) = \cup_{k=1}^K S_k$, $supp(Q) = \cup_{k=1}^K Q_k$, and $S_i \cap S_j = Q_i \cap Q_j = \emptyset, \forall i\neq j $. We further assume that the ground truth class $g^*(x)$ for $x \in S_k \cup Q_k$ is consistent, which is denoted as $y_k \in [K]$. Additionally,
	suppose that there exists a constant $\kappa \geq 1$, for all $i \in [K]$, such that
	\begin{align*} %\label{eqshift}
		P(Q_i \cap A)  \leq \kappa P\left(A \cap \frac{1}{2}\left(S_i\cup Q_i\right)\right), \forall A \subset \mathcal{S\cup Q}.
	\end{align*}
\end{Assumption}

\begin{Lemma} [Robustness on Sub-Populations with Constant Expansion]  \label{lemma1}
	Suppose that $Q$ satisfies $(q,\xi)$-constant expansion, and we divide $[K]$ into two partitions $S_1$ and $S_2$, where for every $i \in S_1$, $\mathbb{E}_{Q_i} \mathbf{1} [\exists x'\in \mathcal{N}(x), g(x) \neq g(x')] \leq \min\{q,\xi\}$,  and for every $i \in S_2$, $\mathbb{E}_{Q_i} \mathbf{1} [\exists x'\in \mathcal{N}(x), g(x) \neq g(x')] \geq \min\{q,\xi\}$. Under such partition, we then have
	\begin{align*}
		\sum_{k\in S_1} P(Q_k \cap Q) \geq 1 - \frac{R_{Q}^{\mathcal{B}}(g)}{\min \{q,\xi\}}.
	\end{align*}
\end{Lemma}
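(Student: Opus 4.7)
My plan is a Markov-style accounting argument on the ``bad'' partition $S_2$: I will trade the lower-bounded expected local inconsistency on each $i \in S_2$ for a controlled loss of $Q$-mass, and then reconvert this into a lower bound on the total $S_1$-mass.

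I will begin by introducing, for each $i \in S_2$, the non-robust set $V_i := \{x \in Q_i : \exists\, x' \in \mathcal{N}(x) \text{ with } g(x) \neq g(x')\}$. The defining inequality of $S_2$ is exactly $P(V_i) \geq \min\{q,\xi\}\, P(Q_i)$. Because Assumption~\ref{assumption1} makes the $Q_i$ pairwise disjoint, so are the $V_i$, and summation over $i\in S_2$ yields $P\bigl(\bigcup_{i\in S_2} V_i\bigr) \geq \min\{q,\xi\}\,\sum_{i\in S_2} P(Q_i)$. Rearranging gives $\sum_{i\in S_2} P(Q_i) \leq P\bigl(\bigcup_{i\in S_2} V_i\bigr)/\min\{q,\xi\}$. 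Since the $\{Q_k\}$ partition $\mathrm{supp}(Q)$ so that $\sum_k P(Q_k) = 1$, this yields $\sum_{k\in S_1} P(Q_k) \geq 1 - P\bigl(\bigcup V_i\bigr)/\min\{q,\xi\}$, reducing the goal to bounding this union probability by $R_Q^{\mathcal{B}}(g)$.

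For that last step, I would argue pointwise: whenever $x' \in \mathcal{N}(x)$ and $g(x) \neq g(x')$, the definition of $\mathcal{N}$ provides some $z \in \mathcal{B}(x) \cap \mathcal{B}(x')$; since $g(z)$ cannot agree with both $g(x)$ and $g(x')$, there is already a $\mathcal{B}$-inconsistency at $x$ (through $z \in \mathcal{B}(x)$), which embeds $\bigcup_i V_i$ into $\{x \in Q : \exists\, z \in \mathcal{B}(x),\, g(z) \neq g(x)\}$. The $Q$-measure of the latter set is by definition $R_Q^{\mathcal{B}}(g)$, closing the chain of inequalities and finishing the proof.

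I expect the main obstacle to be precisely this $\mathcal{N} \to \mathcal{B}$ conversion. If one cannot in general assume $x$ itself witnesses the inconsistency (for instance when $\mathcal{A}$ does not contain the identity, so $x \notin \mathcal{B}(x)$ is possible), then one must split cases between the endpoints $x$ and $x'$, introducing a factor of $2$; recovering the bare constant $1/\min\{q,\xi\}$ as stated will require confirming the conventions inherited from Cai et al.~\cite{cai2021theory}. The remaining steps are routine Markov and disjointness arithmetic. I also note that the constant-expansion hypothesis on $\tfrac{1}{2}(S+Q)$ is actually not invoked inside this lemma — it only supplies the numerical thresholds $q,\xi$ and will be consumed by the subsequent steps that chain this lemma into Theorem~\ref{thm1}.
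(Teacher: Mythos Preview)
Your approach is essentially the same as the paper's: both are the elementary Markov/averaging argument that trades the per-class lower bound on $S_2$ against the global robustness quantity. The paper merely packages it as a proof by contradiction (assume $\sum_{k\in S_1} P(Q_k\cap Q)$ is too small, decompose $\mathbb{E}_Q\mathbf{1}[\exists x'\in\mathcal{N}(x),\,g(x)\neq g(x')]$ over the $Q_k$, restrict to $S_2$, and arrive at a strict inequality against $R_Q^{\mathcal{B}}(g)$), whereas you run the same arithmetic forward; the content is identical.

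One remark: the $\mathcal{N}\to\mathcal{B}$ conversion you flag as the main obstacle is exactly the point the paper glosses over --- it simply asserts that $\mathbb{E}_Q\mathbf{1}[\exists x'\in\mathcal{N}(x),\,g(x)\neq g(x')] > R_Q^{\mathcal{B}}(g)$ ``induces a contradiction'' without spelling out why the $\mathcal{N}$-based quantity is bounded by $R_Q^{\mathcal{B}}(g)$. So your caution there is well placed, and your observation that the constant-expansion hypothesis is not actually used inside this lemma is also correct.
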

\begin{proof}
	Suppose $\sum_{k\in S_1} P(Q_k \cap Q) < 1 - \frac{R_{Q}^{\mathcal{B}}(g)}{\min \{q,\xi\}}$. Then we have $\sum_{k\in S_2} P(Q_k \cap Q) > \frac{R_{Q}^{\mathcal{B}}(g)}{\min \{q,\xi\}}$, which implies:
	\begin{align*}
		& \mathbb{E}_{Q} \mathbf{1} [\exists x'\in \mathcal{N}(x), g(x) \neq g(x')] \\
		& = \sum_{k \in [K]} \mathbb{E}_{Q_k} \mathbf{1} [\exists x'\in \mathcal{N}(x), g(x) \neq g(x')] P(Q_k \cap Q) \\
		& \geq \sum_{k \in S_2} \mathbb{E}_{Q_k} \mathbf{1} [\exists x'\in \mathcal{N}(x), g(x) \neq g(x')] P(Q_k \cap Q) \\
		& > \min \{q, \xi\} \sum_{k\in S_2} P(Q_k \cap Q)  \\
		& = R_{Q}^{\mathcal{B}}(g).
	\end{align*}
	This leads to  $R_{Q}^{\mathcal{B}}(g) < \mathbb{E}_{Q} \mathbf{1} [\exists x'\in \mathcal{N}(x), g(x) \neq g(x')]$, which induces a contradiction.
\end{proof}
\begin{Lemma} [Robustness on Sub-Populations with Multiplicative Expansion] \label{lemma2}
	Suppose that $Q$ satisfies $(1/2,c)$-multiplicative expansion, and we divide $[K]$ into two partitions $S_1$ and $S_2$, where for every $i \in S_1$, $\mathbb{E}_{Q_i} \mathbf{1} [\exists x'\in \mathcal{N}(x), g(x) \neq g(x')] \leq \min\{\frac{\xi}{c-1},\xi\}$,  and for every $i \in S_2$, $\mathbb{E}_{Q_i} \mathbf{1} [\exists x'\in \mathcal{N}(x), g(x) \neq g(x')] \geq \min\{\frac{\xi}{c-1},\xi\}$. Under such partition, we then have
	\begin{align*}
		\sum_{k\in S_1} P(Q_k \cap Q) \geq 1 - \frac{R_{Q}^{\mathcal{B}}(g)}{\min \{\frac{\xi}{c-1},\xi\}}.
	\end{align*}
\end{Lemma}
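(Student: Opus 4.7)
The plan is to obtain Lemma 2 as a direct corollary of Lemma 1 by using Proposition 1 to convert multiplicative expansion into constant expansion. Concretely, the hypothesis of Lemma 2 supplies $(\tfrac{1}{2},c)$-multiplicative expansion; Proposition 1 then guarantees that for any $\xi>0$ the same distribution satisfies $(\tfrac{\xi}{c-1},\xi)$-constant expansion. Thus Lemma 1 is applicable with the identification $q:=\tfrac{\xi}{c-1}$, and its partition threshold $\min\{q,\xi\}$ becomes exactly $\min\{\tfrac{\xi}{c-1},\xi\}$, matching the partition condition stated in Lemma 2. Plugging into Lemma 1's conclusion gives
\[
\sum_{k\in S_1} P(Q_k\cap Q)\;\ge\; 1-\frac{R_Q^{\mathcal{B}}(g)}{\min\{q,\xi\}}\;=\;1-\frac{R_Q^{\mathcal{B}}(g)}{\min\{\tfrac{\xi}{c-1},\xi\}},
\]
which is the desired bound.

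Alternatively, and as a sanity check, I would reproduce Lemma 1's contradiction argument verbatim with the new threshold. Suppose the conclusion fails, so that $\sum_{k\in S_2} P(Q_k\cap Q) > R_Q^{\mathcal{B}}(g)/\min\{\tfrac{\xi}{c-1},\xi\}$. Decomposing the expected neighborhood-inconsistency over the partition $[K]=S_1\cup S_2$, restricting to $S_2$, and using that each $k\in S_2$ contributes at least $\min\{\tfrac{\xi}{c-1},\xi\}$ by the partition definition, one gets
\[
\mathbb{E}_{Q}\mathbf{1}[\exists x'\in\mathcal{N}(x),\,g(x)\ne g(x')]\;>\;\min\!\left\{\tfrac{\xi}{c-1},\xi\right\}\cdot\frac{R_Q^{\mathcal{B}}(g)}{\min\{\tfrac{\xi}{c-1},\xi\}}\;=\;R_Q^{\mathcal{B}}(g),
\]
which contradicts the fact (used in Lemma 1's proof) that the neighborhood-inconsistency loss is controlled by $R_Q^{\mathcal{B}}(g)$.

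The only non-routine step is checking that Proposition 1 applies on the distribution on which Lemma 2 is phrased; this is the same modeling convention already used in Theorem 3's statement, where the hypothesis is given on $\tfrac{1}{2}(S+Q)$, so no genuine obstacle arises. Because Lemma 2 is essentially a translation of Lemma 1 through Proposition 1, I expect no technical difficulty beyond bookkeeping of the parameter substitution $q=\tfrac{\xi}{c-1}$.
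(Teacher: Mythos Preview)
Your proposal is correct and matches the paper's own proof essentially verbatim: the paper simply states that, based on Proposition~1, Lemma~2 follows from Lemma~1 by plugging in $q=\frac{\xi}{c-1}$. Your additional sanity-check reproduction of the contradiction argument is not needed but is consistent with this.
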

Based on Proposition \ref{prop1}, Lemma \ref{lemma2} can be obtained from Lemma \ref{lemma1} by plugging in $q = \frac{\xi}{c-1}$.

\begin{Lemma}  [Accuracy propagates on Sub-Populations with Constant Expansion]\label{lemma3}
	Supposed that $Q$ satisfies $(q,\xi)$-constant expansion, and the subpopulation $Q_k$ satisfies $R_{Q_k}^{\mathcal{B}}(g) < \min \{q, \xi\}$, and we then have
	\begin{align*}
		\mathbb{E}_{Q_k} \mathbf{1} [g(x) \neq \hat{y}_k] \geq 1-q.
	\end{align*}
\end{Lemma}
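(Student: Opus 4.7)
The plan is to proceed by contradiction, in the style of the expansion arguments of Wei et al.\ that also underlie Lemma~\ref{lemma1}. Let $M_k \triangleq \{x \in Q_k : g(x) \neq \hat{y}_k\}$ denote the ``minority'' set in $Q_k$, i.e.\ the points $g$ assigns to a label other than the $Q_k$-mode $\hat{y}_k$. Suppose, toward contradiction, that the majority mass $\mathbb{E}_{Q_k}\mathbf{1}[g(x) = \hat{y}_k]$ falls strictly below $1-q$, equivalently $P_{Q_k}(M_k) > q$. The first step is to verify the side-condition $P(M_k \cap (S_k \cup Q_k)) \leq 1/2$ required by constant expansion: this is automatic from the definition of $\hat{y}_k$ as the argmax label (no non-majority subset can control more than half of the mass), combined with the matched structure of $S_k$ and $Q_k$ guaranteed by Assumption~\ref{assumption1}.

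The core of the argument is then one application of $(q,\xi)$-constant expansion with $V = M_k$. Since $P(V) \geq q$ lifts from $P_{Q_k}(M_k) > q$, and $P(V \cap (S_k \cup Q_k)) \leq 1/2$, the expansion property yields
\begin{align*}
P\bigl((\mathcal{N}(M_k)\setminus M_k) \cap (S_k\cup Q_k)\bigr) \;\geq\; \min\bigl\{P(M_k \cap (S_k\cup Q_k)),\, \xi\bigr\}.
\end{align*}
Every $x \in (\mathcal{N}(M_k)\setminus M_k) \cap Q_k$ has $g(x) = \hat{y}_k$ but, by the definition $\mathcal{N}(x) = \{x' : \mathcal{B}(x)\cap\mathcal{B}(x')\neq\emptyset\}$, shares an augmentation point with some $x' \in M_k$ for which $g(x') \neq \hat{y}_k$. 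Such $x$ necessarily violates data-augmentation consistency and therefore contributes to $R_{Q_k}^{\mathcal{B}}(g)$. Transporting this lower bound back to the conditional measure on $Q_k$ gives
\begin{align*}
R_{Q_k}^{\mathcal{B}}(g) \;\geq\; \min\bigl\{P_{Q_k}(M_k),\, \xi\bigr\} \;\geq\; \min\{q,\xi\},
\end{align*}
contradicting $R_{Q_k}^{\mathcal{B}}(g) < \min\{q,\xi\}$ and yielding the claim.

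The main obstacle I anticipate is the measure-theoretic bookkeeping between the mixture distribution $\tfrac{1}{2}(S+Q)$ on which constant expansion is phrased and the conditional $Q_k$-distribution that appears in the statement. Converting $P_{Q_k}(M_k) > q$ into a lower bound on $P(M_k)$ that licenses the expansion step, and then pushing the neighborhood-mass conclusion back into a lower bound on $R_{Q_k}^{\mathcal{B}}(g)$, is precisely where the distribution-shift constant $\kappa$ in Assumption~\ref{assumption1} has to be used carefully. A smaller but genuine subtlety is arguing that the DAC-violating witness actually lies in $Q_k$ rather than only in $S_k$: this uses that $g^*$ is constant on $S_k \cup Q_k$ so the labels of neighbors do not drift across the partition boundary, and that $\mathcal{N}$ is defined through shared augmentations rather than shared samples. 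Once these translations between marginals are in place, the expansion step itself is essentially a single line.
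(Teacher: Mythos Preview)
Your contradiction-via-expansion template is the right shape, but you have set up the contradiction for the wrong inequality. The lemma as written (and as consumed in case~(b) of the proof of Theorem~\ref{thm1}) asserts that the \emph{minority} mass $\mathbb{E}_{Q_k}\mathbf{1}[g(x)\neq\hat{y}_k]$ is at least $1-q$; its negation is $\mathbb{E}_{Q_k}\mathbf{1}[g(x)\neq\hat{y}_k] < 1-q$, equivalently $P(\{g=\hat{y}_k\}\cap Q_k) > q$. Your hypothesis ``the majority mass falls strictly below $1-q$, equivalently $P_{Q_k}(M_k) > q$'' is the negation of the \emph{opposite} claim $\mathbb{E}_{Q_k}\mathbf{1}[g(x)=\hat{y}_k]\geq 1-q$. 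Correspondingly, the paper applies expansion not to $M_k$ but to the complementary set $A=\{x:g(x)=\hat{y}_k\}$: from $P(A\cap Q_k)>q$, expansion forces $\mathcal{N}(A)\setminus A$ to carry mass exceeding $\min\{q,\xi\}$, and every point of $(\mathcal{N}(A)\setminus A)\cap Q_k$ (satisfying $g(x)\neq\hat{y}_k$ yet having a neighbour in $A$) witnesses a DAC violation, which pushes $R^{\mathcal{B}}_{Q}(g)$ above $\min\{q,\xi\}$ and yields the contradiction.

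Your anticipated obstacles about translating between the mixture measure and the $Q_k$-conditional via the shift constant $\kappa$ are off target here: the paper's proof of this lemma works entirely at the level of $Q_k$ and never invokes $\kappa$---that constant enters only later, inside the proof of Theorem~\ref{thm1}. The paper also does not explicitly verify the side-condition $P(V\cap(S_j\cup Q_j))\leq 1/2$, so your effort on that front already exceeds what the paper supplies. Once you swap $M_k$ for $A$ and flip the contradiction hypothesis to match the stated inequality, the remainder of your sketch coincides with the paper's argument.
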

\begin{proof}
	We claim that $\mathbb{E}_{Q_k} \mathbf{1} [g(x) \neq \hat{y}_k] \geq 1-q$.
	Suppose $\mathbb{E}_{Q_k} \mathbf{1} [g(x) \neq \hat{y}_k] < 1-q$. Let $A = \{x| g(x) = \hat{y}_k\}$, and then we have $P(A \cap Q_k) > q $. By the $(q,\xi)$-constant expansion property, $P(\mathcal{N}(A) \setminus A \cap Q_k) <  \min \{q,\xi\}$. Observing that for $x$ in $\mathcal{N}(A) \setminus A$, we have $g(x) \neq \hat{y}_k$. Therefore, for $x$ in $\mathcal{N}(A) \setminus A$, there exists $x' \in \mathcal{N}(x)$, such that $\hat{y}_k = g(x') \neq g(x)$. Note that
	\begin{align*}
		R_{Q}^{\mathcal{B}}(g) & = \mathbb{E}_{Q} [\mathbf{1} (\exists x' \in \mathcal{B}(x), \text{s.t.}, g(x')\neq g(x))] \\
		& \geq \mathbb{E}_{Q} [\mathbf{1} (\exists x' \in \mathcal{B}(x), \text{s.t.}, g(x')\neq g(x))] \mathbf{1} [x \in \mathcal{N}(A) \setminus A ] \\
		& = P((\mathcal{N}(A) \setminus A ) \cap Q) > \min \{q,\xi\},
	\end{align*}
	which contradicts the condition that $R_{Q}^{\mathcal{B}}(g) \leq \min \{q,\xi\}$.
\end{proof}

Based on Proposition \ref{prop1}, we can easily obtain the following Lemma under the multiplicative expansion assumption.
\begin{Lemma}  [Accuracy propagation on Sub-Populations with Multiplicative Expansion]\label{lemma4}
	Supposed that $Q$ satisfies $(1/2,c)$-multiplicative expansion, and the subpopulation $Q_k$ satisfies $R_{Q_k}^{\mathcal{B}}(g) < \min \{\frac{\xi}{c-1}, \xi\}$, and then we have
	\begin{align*}
		\mathbb{E}_{Q_k} \mathbf{1} [g(x) \neq \hat{y}_k] \geq 1-\frac{\xi}{c-1}.
	\end{align*}
\end{Lemma}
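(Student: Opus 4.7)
My plan is to reduce this lemma to the already-established Lemma 3 (the constant-expansion version of accuracy propagation) using Proposition 1 as the bridge between multiplicative and constant expansion. The structure of the argument essentially mirrors what was done in deriving Lemma 2 from Lemma 1, and indeed the author hints at exactly this reduction in the sentence preceding the statement.

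First I would apply Proposition 1 to the hypothesized $(1/2, c)$-multiplicative expansion of $Q$. Proposition 1 is quantified universally over $\xi > 0$, so I can pick the particular $\xi$ that already appears in the conclusion of the present lemma, obtaining $(\tfrac{\xi}{c-1}, \xi)$-constant expansion. Setting $q := \tfrac{\xi}{c-1}$, the distribution now satisfies $(q, \xi)$-constant expansion, which is exactly the first hypothesis of Lemma 3.

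Second I would translate the robustness hypothesis: the lemma assumes $R_{Q_k}^{\mathcal{B}}(g) < \min\{\tfrac{\xi}{c-1}, \xi\}$, which after the substitution $q = \tfrac{\xi}{c-1}$ reads $R_{Q_k}^{\mathcal{B}}(g) < \min\{q, \xi\}$ — precisely Lemma 3's second hypothesis. Invoking Lemma 3 then yields $\mathbb{E}_{Q_k} \mathbf{1}[g(x) \neq \hat{y}_k] \geq 1 - q = 1 - \tfrac{\xi}{c-1}$, which is the desired conclusion.

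There is no genuine difficulty here: the only bookkeeping is ensuring that the free parameter $\xi$ in Proposition 1 can be pinned to the $\xi$ used in the lemma's statement (it can, since Proposition 1 gives a whole family of constant expansions indexed by $\xi$), and that the class-wise partition structure inherited from Assumption 1 is respected by both Proposition 1 and Lemma 3 (it is, since neither alters the underlying $Q_k$ decomposition or the majority-label $\hat{y}_k$). Thus the entire proof collapses to a one-line invocation of Proposition 1 followed by a one-line invocation of Lemma 3, with no new expansion-propagation argument needed.
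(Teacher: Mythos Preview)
Your proposal is correct and matches the paper's own approach exactly: the paper simply remarks ``Based on Proposition~\ref{prop1}, we can easily obtain the following Lemma under the multiplicative expansion assumption,'' and your write-up spells out precisely that reduction (Proposition~1 with $q=\tfrac{\xi}{c-1}$, then Lemma~3). There is nothing to add.
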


\begin{Lemma} [Upper Bound of Minority Set \cite{cai2021theory}, Lemma A.1] \label{lemm5}
	Under the Assumption \ref{assumption1}, $P(M\cap\frac{1}{2}(S+Q))$ can be bounded as follows:
	
	(a) Under $(\frac{1}{2}, c)$-multiplicative expansion, we have $P(M\cap\frac{1}{2}(S+Q)) \leq \max \left(\frac{c+1}{c-1}, 3\right) R_{Q}^{\mathcal{B}}(g)$;
	
	(b) Under $(q, \xi)$--constant expansion, we have $P(M\cap\frac{1}{2}(S+Q)) \leq 2 \max (q, R_{Q}^{\mathcal{B}}(g)) + R_{Q}^{\mathcal{B}}(g)$.
\end{Lemma}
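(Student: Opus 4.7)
The plan is to bound $P(M \cap \tfrac{1}{2}(S+Q))$ by decomposing it class-wise as $\sum_{k=1}^K P(M_k \cap \tfrac{1}{2}(S_k+Q_k))$ and applying the expansion hypothesis to each $M_k$ separately. The unifying observation is that every point in $(\mathcal{N}(M_k) \setminus M_k)$ lying in the support of class $k$ carries the majority prediction $\hat{y}_k$ while being adjacent (within $\mathcal{B}$) to an $M_k$-point predicted otherwise, so such a point certifies an augmentation-inconsistency and hence belongs to the event measured by $R_Q^{\mathcal{B}}(g)$. Assumption \ref{assumption1} then lets me translate expansion-based lower bounds stated on the mixture $\tfrac{1}{2}(S+Q)$ into upper bounds on the expanded mass in terms of $R_Q^{\mathcal{B}}(g)$ via the shift constant $\kappa$; the disjointness of the supports $S_k\cup Q_k$ from the same assumption guarantees that different $k$ do not double-count.

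For part (a), I would split the classes into two regimes according to whether $P(M_k \cap \tfrac{1}{2}(S_k+Q_k)) \le \tfrac{1}{2}$ (so that $(1/2,c)$-multiplicative expansion applies to $V=M_k$). In the small-minority regime, expansion yields $P\bigl((\mathcal{N}(M_k)\setminus M_k)\cap \tfrac{1}{2}(S_k+Q_k)\bigr)\ge (c-1) P(M_k \cap \tfrac{1}{2}(S_k+Q_k))$, which, routed through $\kappa$, is at most $R_Q^{\mathcal{B}}(g)$ (up to a constant), producing a per-class bound of order $\tfrac{1}{c-1}R_Q^{\mathcal{B}}(g)$; summing gives a global factor $\tfrac{c+1}{c-1}$. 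When $c$ is close to $1$ the prefactor $\tfrac{1}{c-1}$ blows up and becomes vacuous, so I would fall back on a direct pigeonholing bound: any class with $P(M_k \cap \tfrac{1}{2}(S_k+Q_k))$ exceeding $3R_Q^{\mathcal{B}}(g)$ must contain enough minority mass to force expansion to create at least $R_Q^{\mathcal{B}}(g)$ worth of inconsistencies, a contradiction. Taking the maximum of these two estimates produces the stated $\max(\tfrac{c+1}{c-1},3)$.

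Part (b) proceeds by an analogous two-regime argument, now keyed to the constant-expansion threshold $q$. For classes with $P(M_k\cap\tfrac{1}{2}(S_k+Q_k))\le q$, the constant-expansion neighborhood growth $\min\{P(V),\xi\}$ is again charged to $R_Q^{\mathcal{B}}(g)$ through $\kappa$; classes failing this threshold contribute at most $2\max(q,R_Q^{\mathcal{B}}(g))$ by a union bound, and the remaining single $R_Q^{\mathcal{B}}(g)$ slack absorbs the distribution-shift overhead, giving the advertised $2\max(q,R_Q^{\mathcal{B}}(g))+R_Q^{\mathcal{B}}(g)$. The hard part will be the bookkeeping at the interface between the mixture measure $\tfrac{1}{2}(S+Q)$ (where expansion lives) and the target measure $Q$ (on which $R_Q^{\mathcal{B}}(g)$ is computed): tightness of the constants hinges on invoking $\kappa$ exactly once per class, and on showing that the sets $\mathcal{N}(M_k)\setminus M_k$ drop into disjoint portions of the inconsistency event across $k$ so that the class-wise sum does not collect a spurious union-bound factor. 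Since the claim is quoted verbatim from Lemma~A.1 of \cite{cai2021theory}, the cleanest exposition is simply to cite it and defer the above bookkeeping to that reference.
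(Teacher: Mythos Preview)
The paper does not prove this lemma at all; it is stated with the citation to Lemma~A.1 of \cite{cai2021theory} and then used as a black box inside the proof of Theorem~\ref{thm1}. Your closing recommendation---to simply cite the reference and defer the bookkeeping there---is therefore exactly what the paper does, so on that level your proposal matches.

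A couple of remarks on your sketch, should you ever need to reconstruct the argument. First, by definition of the majority label $\hat y_k$ one always has $P(M_k\mid S_k\cup Q_k)\le 1/2$, so for part~(a) there is no ``large-minority'' regime to split off; multiplicative expansion applies directly to every $M_k$, and the $\max(\cdot,3)$ comes instead from the interplay between $P(\mathcal N(M_k)\setminus M_k)$ and the robustness event (the ``$3$'' case is what survives when $c\to\infty$). Second, the stated bound contains no factor of $\kappa$; the role of Assumption~\ref{assumption1} in the original argument is only the disjoint-support structure, not the distribution-shift inequality, so routing the estimate ``through $\kappa$'' as you describe would produce a weaker constant than claimed. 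These are bookkeeping issues rather than structural ones, and since the paper outsources the proof entirely, they do not affect the comparison.
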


\begin{Theorem}[Bounding the Meta Error with Constant Expansion] \label{thm1}
	Suppose that Assumption \ref{assumption1} holds and $\frac{1}{2}(S+Q)$ satisfies $(q,\xi)$-constant expansion, and then we have
	\begin{align*}
		\begin{split}
			\epsilon^{Q}(g) \leq & \left\{ \frac{\kappa}{1-q}+ \frac{1}{\min \{q,\xi\}}\right\}  R_{Q}^{\mathcal{B}}(g) \\
			& + \frac{2\kappa}{1-q} \max (q, R_{Q}^{\mathcal{B}}(g)).
		\end{split}
	\end{align*}
\end{Theorem}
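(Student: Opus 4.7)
The plan is to decompose $\epsilon^Q(g)=\sum_k P(\{g(x)\neq y_k\}\cap Q_k)$ along two orthogonal criteria: (i) whether the class index $k$ is DAC-MR well-behaved on $Q_k$ (so that Lemma 3 applies), and (ii) whether the plurality label $\hat y_k$ induced by $g$ on $\mathcal{X}_k$ agrees with the ground-truth label $y_k$. The first criterion isolates classes where per-class robustness violations are too large to propagate accuracy; the second decomposes the remaining error into a direct minority-set contribution and a plurality-misalignment contribution that is routed through Assumption 1.

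First I would invoke Lemma 1 with threshold $\min\{q,\xi\}$ to partition $[K]=S_1\cup S_2$, placing in $S_1$ the indices with small per-class DAC-MR violations. Lemma 1 then yields $\sum_{k\in S_2} P(Q_k\cap Q)\leq R_Q^{\mathcal{B}}(g)/\min\{q,\xi\}$, which upper bounds the $S_2$ contribution to $\epsilon^Q(g)$ and immediately produces the $R_Q^{\mathcal{B}}(g)/\min\{q,\xi\}$ term in the target inequality.

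Next, for $k\in S_1$ Lemma 3 gives $P(M_k\cap Q_k)\leq q\,P(Q_k\cap Q)$, and I would split by whether $\hat y_k$ matches $y_k$. In the aligned case $\hat y_k=y_k$, the $Q_k$-error is exactly $P(M_k\cap Q_k)$, which Assumption 1 bounds by $\kappa\,P(M_k\cap\tfrac12(S+Q))$. In the misaligned case $\hat y_k\neq y_k$, the correct-prediction set $\{g=y_k\}\cap Q_k$ sits inside $M_k\cap Q_k$, so Lemma 3 forces $P(\{g=y_k\}\cap Q_k)\leq q\,P(Q_k\cap Q)$ and hence the $Q_k$-error is at least $(1-q)\,P(Q_k\cap Q)$; combining this with Assumption 1 and the plurality definition of $\hat y_k$ on $\tfrac12(S_k+Q_k)$ lets me bound $P(Q_k\cap Q)\leq \tfrac{\kappa}{1-q}\,P(M_k\cap\tfrac12(S+Q))$, so the $Q_k$-error is again at most $\tfrac{\kappa}{1-q}\,P(M_k\cap\tfrac12(S+Q))$. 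Summing over $k\in S_1$ using disjointness of the $M_k$ gives a good-class total of at most $\tfrac{\kappa}{1-q}\,P(M\cap\tfrac12(S+Q))$.

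Finally, I would apply Lemma 5(b), which bounds $P(M\cap\tfrac12(S+Q))\leq R_Q^{\mathcal{B}}(g)+2\max(q,R_Q^{\mathcal{B}}(g))$. Plugging this into the good-class contribution and adding the bad-class estimate from step two produces exactly the claimed inequality, with the $\kappa R_Q^{\mathcal{B}}(g)/(1-q)$ and $2\kappa\max(q,R_Q^{\mathcal{B}}(g))/(1-q)$ terms arising from the two summands in Lemma 5(b). The main obstacle is the misaligned subcase: converting Lemma 3's $Q_k$-conditional robustness statement into an upper bound on $P(Q_k\cap Q)$ in terms of $P(M_k\cap\tfrac12(S+Q))$ requires carefully combining Assumption 1's $\kappa$ factor with the plurality definition of $\hat y_k$ to extract the $1/(1-q)$ scaling, and it is precisely here that the peculiar coefficient structure of the statement originates.
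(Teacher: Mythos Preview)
Your plan matches the paper's proof almost line for line: the $S_1/S_2$ partition via Lemma~1, the trivial bound on the $S_2$ block, the aligned/misaligned case split on $S_1$, the use of Assumption~1 to pass from $Q$ to $\tfrac12(S+Q)$, and the final appeal to Lemma~5(b) are all exactly what the paper does.

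The one divergence---and the place where your sketch has a genuine gap---is the misaligned subcase. You invoke Lemma~3 as ``minority $\leq q$ on $Q_k$'' and deduce a \emph{lower} bound on the $Q_k$-error, then gesture at Assumption~1 and the plurality definition to convert this into $P(Q_k\cap Q)\leq\tfrac{\kappa}{1-q}\,P(M_k\cap\tfrac12(S+Q))$. But a lower bound on the error cannot give you an upper bound on $P(Q_k\cap Q)$, and neither Assumption~1 (which only bounds $Q$-mass of a set by $\kappa$ times its mixture mass) nor the plurality definition (which controls $M_k$ on the mixture, not its complement) closes that gap in the direction you need. The paper instead uses Lemma~3 in the form it actually states there, namely $\frac{P(M_k\cap Q)}{P(Q_k\cap Q)}\geq 1-q$, which rearranges in one step to $P(Q_k\cap Q)\leq \frac{P(M_k\cap Q)}{1-q}$; a single application of Assumption~1 with $A=M_k$ then gives $P(M_k\cap Q)\leq \kappa\,P(M_k\cap\tfrac12(S+Q))$. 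That two-line route is how the paper produces the $\kappa/(1-q)$ coefficient, not the detour through an error lower bound you outline.
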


\begin{proof}
	Suppose that $Q$ satisfies $(q,\xi)$-constant expansion, and we divide $[K]$ into two partitions $S_1$ and $S_2$, where for every $i \in S_1$, $\mathbb{E}_{Q_i} \mathbf{1} [\exists x'\in \mathcal{N}(x), g(x) \neq g(x')] \leq \min\{q,\xi\}$,  and for every $i \in S_2$, $\mathbb{E}_{Q_i} \mathbf{1} [\exists x'\in \mathcal{N}(x), g(x) \neq g(x')] \geq \min\{q,\xi\}$. Then we have
	\begin{align*}
		\epsilon^{Q}(g) & =  \sum_{k=1}^K \epsilon_k^{Q}(g) P(Q_k \cap Q) \\
		& = \sum_{k\in[S_1]} \epsilon_k^{Q}(g) P(Q_k \cap Q) + \sum_{k\in[S_2]} \epsilon_k^{Q}(g) P(Q_k \cap Q) \\
		& \leq \sum_{k\in[S_1]} \epsilon_k^{Q}(g) P(Q_k \cap Q) +  \sum_{k\in[S_2]}  P(Q_k \cap Q).
	\end{align*}
	(1) For $k \in S_1$, we consider the following two cases:
	
	(a) If $y_k = \hat{y}_k$, we have
	\begin{align*}
		\epsilon_k^{Q}(g) = P(M_k \cap Q)  \leq \kappa P(M_k \cap \frac{1}{2}(S+Q)).
	\end{align*}
	
	(b) If $y_k \neq \hat{y}_k$, according to Lemma \ref{lemma3}, we have
	\begin{align*}
		\frac{P(M_k \cap Q)}{P(Q_k \cap Q)} = P(M_k \cap Q_k)  = \mathbb{E}_{Q_k} \mathbf{1} [g(x) \neq \hat{y}_k]  \geq 1-q.
	\end{align*}
	Then we have
	\begin{align*}
		\epsilon_k^{Q}(g) &\leq P(Q_k \cap Q) \leq \frac{P(M_k \cap Q)}{1-q} \\
		&\leq \frac{\kappa}{1-q} P(M_k \cap \frac{1}{2}(S+Q)).
	\end{align*}
	Combining the two cases (a) and (b), we always have
	\begin{align*}
		\epsilon_k^{Q}(g) \leq \frac{\kappa}{1-q} P(M_k \cap \frac{1}{2}(S+Q)),
	\end{align*}
	and thus
	\begin{align*}
		\sum_{k\in[S_1]} \epsilon_k^{Q}(g) P(Q_k \cap Q) & \leq \sum_{k\in[S_1]} \epsilon_k^{Q}(g) \\
		& \leq \sum_{k\in[S_1]} \frac{\kappa}{1-q} P(M_k \cap \frac{1}{2}(S+Q)) \\
		& \leq \frac{\kappa}{1-q} P(M \cap \frac{1}{2}(S+Q))  \\
		& =\frac{\kappa}{1-q} \left( 2 \max (q, R_{Q}^{\mathcal{B}}(g)) + R_{Q}^{\mathcal{B}}(g)\right),
	\end{align*}
	where the last equality holds based on Lemma \ref{lemm5}(b).
	
	(2) For $k \in S_2$, according to Lemma \ref{lemma2}, we have
	\begin{align*}
		\sum_{k\in[S_2]}  P(Q_k \cap Q)  \leq \frac{R_{Q}^{\mathcal{B}}(g)}{\min \{q,\xi\}}.
	\end{align*}
	
	Therefore,
	\begin{align*}
		\epsilon^{Q}(g) & \leq \frac{\kappa}{1-q} \left( 2 \max (q, R_{Q}^{\mathcal{B}}(g)) + R_{Q}^{\mathcal{B}}(g)\right)  + \frac{R_{Q}^{\mathcal{B}}(g)}{\min \{q,\xi\}}\\
		& =  \left\{ \frac{\kappa}{1-q}+ \frac{1}{\min \{q,\xi\}}\right\}  R_{Q}^{\mathcal{B}}(g) + \frac{2\kappa}{1-q} \max (q, R_{Q}^{\mathcal{B}}(g)).
	\end{align*}	
\end{proof}
We can easily obtain the following Theorem \ref{thm2} by plugging in  $q = \frac{\xi}{c-1}$ in Theorem \ref{thm1}.

\begin{Theorem}[Bounding the Meta Error with Multiplicative Expansion] \label{thm2}
	Suppose that Assumption \ref{assumption1} holds and $\frac{1}{2}(S+Q)$ satisfies $(q,\xi)$-multiplicative expansion, and then we have
	\begin{align*}
		\epsilon^{Q}(g) \leq \left\{ \max \left( \frac{\kappa (c+1)}{c-1-\xi}, 3(c-1)\right) + \frac{1}{\min (\frac{\xi}{c-1}, \xi)}\right\}  R_{Q}^{\mathcal{B}}(g).
	\end{align*}
\end{Theorem}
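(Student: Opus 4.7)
The plan is to mimic the proof of Theorem~\ref{thm1} (the constant-expansion version just established) step by step, but to swap in the multiplicative-expansion counterparts of each ingredient. The bridge is Proposition~\ref{prop1}: a $(1/2,c)$-multiplicative expansion on $\tfrac12(S+Q)$ implies $(\xi/(c-1),\xi)$-constant expansion for every $\xi>0$. One could therefore literally substitute $q=\xi/(c-1)$ into the conclusion of Theorem~\ref{thm1}; however, this substitution leaves a stray $\max(q, R_Q^{\mathcal{B}}(g))$ term on the right, whereas the target bound carries a single $R_Q^{\mathcal{B}}(g)$ factor. The cleanest route is therefore to rerun the proof and tighten the minority-set step with Lemma~\ref{lemm5}(a) (the multiplicative version) in place of Lemma~\ref{lemm5}(b).

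Concretely, I would partition $[K] = S_1 \sqcup S_2$ according to whether the per-class DAC loss $\mathbb{E}_{Q_k}\mathbf{1}[\exists x'\in\mathcal{N}(x),\, g(x)\neq g(x')]$ is below or above $\min\{\xi/(c-1),\xi\}$. For $k\in S_2$, Lemma~\ref{lemma2} directly gives $\sum_{k\in S_2} P(Q_k\cap Q) \leq R_Q^{\mathcal{B}}(g)/\min\{\xi/(c-1),\xi\}$, which supplies the additive $\frac{1}{\min(\xi/(c-1),\xi)} R_Q^{\mathcal{B}}(g)$ summand of the stated bound.

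For $k\in S_1$, I would split into the two sub-cases $y_k=\hat{y}_k$ and $y_k\neq\hat{y}_k$, exactly as in the proof of Theorem~\ref{thm1}. In the first, Assumption~\ref{assumption1} yields $\epsilon_k^Q(g) = P(M_k\cap Q) \leq \kappa\, P(M_k\cap \tfrac12(S+Q))$. In the second, Lemma~\ref{lemma4} gives $\mathbb{E}_{Q_k}\mathbf{1}[g(x)\neq\hat{y}_k] \geq 1-\xi/(c-1)$, whence $P(Q_k\cap Q) \leq \frac{c-1}{c-1-\xi} P(M_k\cap Q) \leq \frac{\kappa(c-1)}{c-1-\xi} P(M_k\cap \tfrac12(S+Q))$. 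Summing over $S_1$ and invoking Lemma~\ref{lemm5}(a), which bounds $P(M\cap \tfrac12(S+Q))$ by $\max(\tfrac{c+1}{c-1},3)\, R_Q^{\mathcal{B}}(g)$, absorbs the prefactor $\kappa(c-1)/(c-1-\xi)$ into the two arguments of the $\max$ and reproduces the announced coefficient $\max\bigl(\tfrac{\kappa(c+1)}{c-1-\xi},\,3(c-1)\bigr)$.

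The main obstacle is purely bookkeeping: one must verify that the combination of $\kappa(c-1)/(c-1-\xi)$ with $\max(\tfrac{c+1}{c-1},3)$ from Lemma~\ref{lemm5}(a) consolidates exactly into the displayed $\max$, and keep track of the implicit feasibility condition $c-1-\xi>0$ (equivalently $q<1$) that is required for Lemma~\ref{lemma4} and for the denominator to be well-defined. Beyond these checks, the entire structural skeleton is already in place from the proof of Theorem~\ref{thm1}, and the multiplicative-expansion result is essentially its twin obtained by replacing the $(q,\xi)$ ingredients with their $(\xi/(c-1),\xi)$ and multiplicative-version analogues.
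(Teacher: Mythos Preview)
Your approach is essentially the paper's: the paper's entire proof of Theorem~\ref{thm2} is the single sentence ``We can easily obtain the following Theorem~\ref{thm2} by plugging in $q=\tfrac{\xi}{c-1}$ in Theorem~\ref{thm1},'' invoking Proposition~\ref{prop1} implicitly. Your more detailed rerun with Lemmas~\ref{lemma2}, \ref{lemma4}, and \ref{lemm5}(a) in place of their constant-expansion counterparts is exactly the expansion of that one line, and your flagged bookkeeping concern about how the prefactor $\kappa(c-1)/(c-1-\xi)$ combines with $\max\bigl(\tfrac{c+1}{c-1},3\bigr)$ to yield the displayed $\max$ is legitimate---the paper does not spell this out either.
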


\section{More Experimental Setting details in Section 4}

\subsection{Inductive Few-Shot Learning}

\noindent\textbf{Datasets.} We adopt two typical few-shot image classification benchmarks. The \textit{miniImageNet} dataset \cite{vinyals2016matching} consists of 100 randomly chosen classes from ImageNet \cite{russakovsky2015imagenet}. The meta-training, meta-validation, and meta-testing sets contain 64, 16 and 20 classes randomly split from 100 classes, respectively. Each class contains 600 images of size $84\times 84$. We use the commonly-used split proposed by \cite{ravi2016optimization}.
The \textit{CIFAR-FS} dataset \cite{bertinetto2018meta} consists of all 100 classes from CIFAR-100. The classes are randomly split into 64, 16 and 20 for meta-training, meta-validation, and meta-testing, respectively. Each class contains 600 images of size $32\times 32$.

\noindent\textbf{Experimental setup.} We identically follow the practice in \cite{lee2019meta} for fair comparison. We uses a ResNet-12 backbone as meta-model to achieve better performance as suggested in \cite{lee2019meta}. We also follow the regularization tricks such as DropBlock \cite{ghiasi2018dropblock} to avoid the overfitting risk.
We use SGD with a Nesterov momentum 0.9 and a weight decay 0.0005. Each mini-batch consists of 8 episodes. The model was meta-trained for 60 epochs, with each epoch consisting of 1000 episodes. The learning rate was initially set to 0.1, and then changed to 0.006, 0.0012, and 0.00024 at epochs 20, 40 and 50, respectively. We adopt horizontal flip, random crop, and color (brightness, contrast, and saturation) jitter data augmentation techniques. We use 5-way classification in both meta-training and meta-test stages. Each class contains 6 query samples during meta-training and 15 test samples during meta-testing. Our meta-trained model was chosen based on 5-way 5-shot test accuracy on the meta-validation set. Meanwhile, we set training shot to 15 for miniImageNet and 5 for CIFAR-FS.
We set $\gamma=\lambda=1$ in Eq.(4) of the main paper.

\subsection{Cross-Domain Few-Shot Learning}

\noindent\textbf{Datasets.} We use the Meta-Dataset \cite{triantafillou2019meta} to evaluate our method, which is the standard benchmark for FSL. It contains images from 13 diverse datasets and we follow the standard protocol in \cite{triantafillou2019meta}, and see \cite{triantafillou2019meta} for details.

\noindent\textbf{Baselines.} We identically follow the baselines in \cite{triantafillou2019meta} for fair comparison, and see \cite{triantafillou2019meta} for details.

\noindent\textbf{Experimental setup.} We use ResNet-18 to train a multi-domain feature extractor over eight training subdatasets by following \cite{li2021universal} with the same hyperparameters in our experiments. We adopt horizontal flip,  random crop, and color (brightness, contrast and saturation) jitter data augmentation techniques. To finetune the feature extrator with DAC-MR, we use SGD with Nesterov momentum 0.9, a learning rate 0.001 and a weight decay 0.0005 on the support samples in meta-test stage. For learning task-specific weights, including the pre-classifier transformation and the adapter parameters, we directly attach them to the task-agnostic weights and learn them on the support samples in meta-test by using Adadelta optimizer following \cite{li2021universal} with the same hyperparameters.
We report the few-shot classification accuracy in previously seen domains and unseen domains along with their average accuracy. We also report average accuracy over all domains and the average rank.

\subsection{Transductive / Semi-Supervised Few-Shot Learning}

\noindent\textbf{Datasets.} We use four common few-shot claudication benchmark datasets, miniImageNet \cite{vinyals2016matching}, tieredImageNet \cite{ren2018meta}, CUB \cite{chen2018closer} and CIFAR-FS \cite{bertinetto2018meta}. More details see iLPC \cite{lazarou2021iterative}.

\noindent\textbf{Baselines.} We identically follow the baselines in \cite{lazarou2021iterative} for fair comparison, including LR+ICI \cite{wang2020instance}, PT+MAP \cite{hu2021leveraging} for semi-supervised FSL and EP \cite{rodriguez2020embedding}, SIB \cite{hu2019empirical}, LaplacianShot \cite{ziko2020laplacian}, PT+MAP \cite{hu2021leveraging} for transductive FSL.

\noindent\textbf{Experimental setup.} We use pre-trained weights of a WRN28-10 for transductive/semi-supervised FSL, which are the same to those used by \cite{hu2021leveraging,lazarou2021iterative,ziko2020laplacian,boudiaf2020information}. The experimental setting for producing pseudo-labels on query set is same as iLPC \cite{lazarou2021iterative}, and we fine-tune feature extractor using a SGD with momentum 0.9, weight decay 0.0005, learning rate 0.0001 for 10 epochs. We report mean accuracy and 95\% confidence interval on the 1000 5-way $K$-shot test tasks, $K \in \{1,5\}$. The query set contains 15 examples per class.

\section{More Experimental Setting details in Section 5}
\subsection{Unsupervised Domain Adaptation}

\noindent\textbf{Datasets.} We evaluate DAC-MR over two visual object recognition datasets: \textit{Office-Home} \cite{venkateswara2017deep} has 65 classes from four kinds of environment with large domain gap: Artistic (Ar), Clip Art (Cl), Product (Pr), and Real-World (Rw); \textit{VisDA-2017} \cite{peng2017visda} is a large-scale UDA dataset
with two domains named Synthetic and Real. The datasets consist of over 200k images from 12 categories of objects.

\noindent\textbf{Baselines.} We identically follow the baselines in \cite{liu2021cycle} for fair comparison, including DANN \cite{ganin2016domain}, CDAN \cite{long2018conditional}, VAT \cite{miyato2018virtual}, FixMatch \cite{sohn2020fixmatch}, MDD \cite{zhang2019bridging}, and SENTRY \cite{prabhu2021sentry}.

\noindent\textbf{Experimental setup.} We use ResNet-50 \cite{he2016deep} (pretrained on ImageNet \cite{russakovsky2015imagenet}) as feature extractors, and we also provide results of ResNet101 for VisDA-2017 to include more baselines. We adopt SGD with initial learning rate $2\times 10^{-3}$, and decay the learning rate exponentially until 30 epochs. We add the DAC-MR after the 10 epochs. We run all the tasks 3 times and report mean in top-1 accuracy. For VisDA-2017, we report the mean class accuracy. We also use sharpness-aware regularization \cite{foret2020sharpness} to enhance performance following CST \cite{liu2021cycle}. For the data augmentation techniques we adopt horizontal flip, random crop, and color (brightness, contrast, and saturation) jitter.

\subsection{Domain Generalization}
\noindent\textbf{Datasets.} We just follow benchmarks in ARM \cite{zhang2021adaptive} to evaluate our DAC-MR, including four image classification problems: Rotated MNIST, FEMNIST, CIFAR-10-C, Tiny ImageNet-C, and the WILDS benchmark \cite{koh2021wilds}.

\noindent\textbf{Baselines.} We identically follow baselines in \cite{zhang2021adaptive} for fair comparison, including BN adaptation \cite{schneider2020improving}, TTT \cite{sun2020test}, UW \cite{sagawa2019distributionally}, DRNN \cite{sagawa2019distributionally}, DANN \cite{ganin2016domain}, MDD \cite{zhang2019bridging}. We also compare CORAL \cite{sun2016return} and IRM \cite{arjovsky2019invariant} for WILDS benchmark.

\noindent\textbf{Experimental setup.} We set $\lambda=1$ for all experiments, and follow the exact experimental settings as ARM \cite{zhang2021adaptive}. We add the DAC-MR after the 20 epochs.For the Ratated MNIST dataset we adopt random rotation data augmentation techniques. For FEMNIST dataset we adopt random crop and blur data augmentation techniques. For the CIFAR-10-C dataset and Tiny ImageNet-C dataset we adopt horizontal flip, random crop, and color (brightness, contrast, and saturation) jitter data augmentation techniques.

\subsection{Transfer Learning with Fine-tuning}

\noindent\textbf{Datasets.} We consider several extensively investigated transfer learning benchmarks, consisting of CUB-200 (11, 788 images for 200 bird species) \cite{welinder2010caltech}, Stanford Cars (16, 185 images for 196 car categories) \cite{krause20133d}, and FGVC Aircraft (10, 000 images for 100 aircraft variants) \cite{maji2013fine}.

\noindent\textbf{Baselines.} We compared against several state-of-the-art fine-tuning methods: Fine-tuning \cite{yosinski2014transferable}, $L^2$-SP \cite{xuhong2018explicit}, DELTA \cite{li2018delta} and Co-Tuning \cite{you2020co}. The implementation of this paper is adapted from the transfer learning library \cite{jiang2022transferability,tllib}.

\noindent\textbf{Experimental setup.} We use ResNet-50 \cite{he2016deep} (supervised pretrained or self-supervised MoCo \cite{he2020momentum} pre-trained on ImageNet \cite{russakovsky2015imagenet}) as the source model. We optimize all models by SGD with a momentum 0.9, and learning rate for task-specific classifier is ten times of the learning rate for pre-trained parameters, following the common fine-tuning practice \cite{yosinski2014transferable}. We set batch size as 48. To explore the impact of negative transfer with different numbers of training examples, we create four configurations for each dataset, which respectively have 15\%, 30\%, 50\%, and 100\% randomly sampled training examples for each category. Each experiment is repeated three times with different random seeds to collect mean and standard deviation of the performance.
We finetune the feature shared representation function by SGD with a momentum 0.9, and the learning rate is the same as the learning rate for pre-trained parameter. And for the target data we dynamicly use the data whose maximum predicted probability is higher than 0.99 invovling in computing DAC-MR. we adopt horizontal flip, random crop, and color (brightness, contrast, and saturation) jitter for the data augmentations.

\section{More Experimental Setting details in Section 6}
\subsection{Task-Incremental Learning}
\noindent\textbf{DAC-MR amelioration manner for the task.} 
Following the La-MAML \cite{gupta2020look} method, we consider a setting where a sequence of $T$ tasks $[\tau_1, \tau_2, \cdots, \tau_T]$ is learned by observing their training data $[\mathcal{D}_1, \mathcal{D}_2, \cdots, \mathcal{D}_T]$ sequentially. We define $(X^i,Y^i) = \{(x_n^i,y_n^i)\}_{n=0}^{N_i}$ as the set of $N_i$
input-label pairs randomly drawn from $\mathcal{D}_i$. For any time-step $j$ during online learning, we aim to minimize the empirical risk of the model on all the $t$ tasks seen so far $(\tau_{1:t})$, given limited access to data $(X^i,Y^i)$ from previous tasks $\tau_i (i<t)$.
%La-MAML \cite{gupta2020look} consists of an inner level and an outer level of optimization.
%In the inner level, we start at some meta-parameters $\theta_0$ and try to minimize the task-specific loss function $\ell_t$ via $k$-steps of SGD, arriving at $\theta_k$.
The learning objective of La-MAML \cite{gupta2020look} is defined as:
\begin{align} \label{eqcon}
	\min_{\theta_0^j, \alpha^j}   \sum_{\mathcal{S}_k^j \sim \mathcal{D}_t} \left[L_t (U_k(\alpha_j, \theta_0^j, \mathcal{S}_k^j)) \right],
\end{align}
where meta loss $L_t \!=\! \sum_{i=1}^t\! \ell_i$ backpropagates gradients with respect to the weights $\theta_0^j$ and learning rate $\alpha^j$,
which is evaluated on $\theta_k^j = U_k(\alpha_j, \theta_0^j )$.
$U_k(\alpha_j, \theta_0^j )$ denotes $k$ steps of gradient descent with learning rate $\alpha_j$ on the inner level task-specific loss function. The meta loss is computed on the samples from replay-buffer indicating the performance of parameters $\theta_k^j$ on all the tasks $\tau_{1:t}$ seen till time $j$. Details please see La-MAML \cite{gupta2020look}.
We additionally introduce DAC-MR into Eq.(\ref{eqcon}) as supplemental meta-knowledge to produce better continual learning performance of La-MAML \cite{gupta2020look}, i.e.,
\begin{align*}
	\min_{\theta_0^j, \alpha^j}   \sum_{\mathcal{S}_k^j \sim \mathcal{D}_t} \left[L_t (U_k(\alpha_j, \theta_0^j, \mathcal{S}_k^j))  + \mathcal{MR}^{dac} (\mathcal{S}_k^j;U_k(\alpha_j, \theta_0^j),A )\right],	  A \in \mathcal{A}.
\end{align*}

\noindent\textbf{Baselines.} We identically follow the baselines in La-MAML \cite{gupta2020look} for fair comparision, including MER \cite{riemer2018learning}, iCaRL \cite{rebuffi2017icarl}, GEM \cite{lopez2017gradient}, AGEM \cite{chaudhry2018efficient}. We also compare C-MAML (base algorithm of LA-MAML) and SYNC (without meta-updating learning rate in LA-MAML).

\noindent\textbf{Datasets.} We conduct experiments on the CIFAR-100 dataset \cite{krizhevsky2009learning} in a task-incremental manner where 20 tasks comprising of disjoint 5-way classification problems are streamed. We also evaluate on the TinyImagenet-200 dataset by partitioning its 200 classes into 40 5-way classification tasks.

\noindent\textbf{Experimental setup.} Following La-MAML \cite{gupta2020look}, we conduct experiments in both the Single-Pass and Multiple-Pass settings. Each method is allowed a replay-buffer, containing upto 200 and 400 samples for CIFAR-100 and TinyImagenet respectively. We report the retained accuracy (RA) metric and backward-transfer and interference (BTI) value, which computes the average accuracy of the model across tasks at the end of training and the average change in accuracy of each task from when it was learnt to the end of the last task. We adopt horizontal flip, random crop, and color (brightness, contrast, and saturation) jitter for the data augmentations.

\subsection{Few-Shot Class-Incremental Learning}
\noindent\textbf{DAC-MR amelioration manner for the task.} Different from FSL, few-shot class-incremental learning (FSCIL) learns training sessions in sequence. Let $\{\mathcal{D}_0, \mathcal{D}_1, \cdots, \mathcal{D}_T\}$ denote the training sets of different training sessions, and the corresponding label space of $\mathcal{D}_i$ is denoted by $\mathcal{Y}_i$. Different sessions have no overlapped classes, i.e., $\mathcal{Y}_i \cap \mathcal{Y}_j = \emptyset, \forall i \neq j$. FSCIL aims to develop an algorithm  that can sequentially train a model from all tasks to possibly avoid the catastrophic forgetting issue, i.e., when the model is trained on the $i$-th task it should still provide possibly accurate predictions for all tasks $j < i$ seen in the past. In this paper, we study the CEC \cite{zhang2021few} algorithm due to its SOTA FSCIL performance. It mainly contains three learning stages: feature pre-training, pseudo incremental learning and classifier learning. To ensure the classifier learning incorporates the global context information of all individual tasks in previous sessions, CEC \cite{zhang2021few} proposes a continually evolved classifier as shown in Algorithm \ref{alg} which includes a classifier adaptation module $\mathcal{G}_{\theta}$ to update the classifier weights learned on each individual session based on the global context of previous sessions. We introduce DAC-MR into CEC as a meta-regularizer to provide supplemental meta-knowledge information to help improve the performance of CEC. The modification of CEC Algorithm is shown in red.

\begin{algorithm}[t]
	\vspace{0mm}
	\renewcommand{\algorithmicrequire}{\textbf{Input:}}
	\renewcommand{\algorithmicensure}{\textbf{Output:}}
	\caption{Pseudo incremental learning \cite{zhang2021few} incorporated with DAC-MR. }
	\label{alg}
	\begin{algorithmic}[1]  \small
		\REQUIRE  Base classes datasets $\mathcal{D}_0$, pre-trained model $\mathcal{R}$, a randomly initialized GAT model $\mathcal{G}_{\theta}$.
		\ENSURE  A trained GAT model $\mathcal{G}_{\theta}$.
		\WHILE{not done}
		\STATE $\{\mathcal{S}_b, \mathcal{Q}_b\} \leftarrow$ Sample the the support and query set for pseudo base classes from $\mathcal{D}_0$.
		\STATE $\mathbf{W}_b \leftarrow$ Learn FC layer upon $\mathcal{R}$ with $\mathcal{S}_b$.
		\STATE $\{\mathcal{S}_i, \mathcal{Q}_i \}\leftarrow$ Sample the the support and query set for pseudo incremental classes from $\mathcal{D}_0$.
		\FOR{class $c$ {\bfseries in} $C_i$}
		\STATE $\gamma \leftarrow$ Randomly select angle in $\{90^{\circ}, 180^{\circ}, 270^{\circ}\}$;
		\STATE $\{\mathcal{S}'_i, \mathcal{Q}'_i\} \leftarrow$ Rotate $\{\mathcal{S}_i, \mathcal{Q}_i\}$ from class $c$ with the selected angle $\gamma$;
		\ENDFOR
		\STATE $\mathbf{W}_i \leftarrow$ Learn FC layer upon $\mathcal{R}$ with pseudo incremental support set $\mathcal{S}'_i$ after rotation.
		\STATE $\mathbf{W}'_b, \mathbf{W}'_i \leftarrow$ Update classifier $\mathbf{W}_b, \mathbf{W}_i $ using $\mathcal{G}_{\theta}$.
		\STATE $\hat{y}^{(q)}\leftarrow$ Predict labels of $\{\mathcal{Q}_b, \mathcal{Q}'_i\}$ using $[\mathcal{R},(\mathbf{W}'_b, \mathbf{W}'_i)]$.
		\STATE loss $\leftarrow$ Compute loss with $\mathcal{L}(y^{(q)},\hat{y}^{(q)})$.
		\STATE {\color{red}MR $\leftarrow$ Compute meta-regularization $\mathcal{MR}^{dac} (\mathcal{Q}_b ;\mathbf{W}'_b \circ \mathcal{R},	  A \in \mathcal{A}) + \mathcal{MR}^{dac} (\mathcal{Q}'_i;\mathbf{W}'_i \circ \mathcal{R},	  A \in \mathcal{A})$.}
		\STATE {\color{red}loss\_all $\leftarrow$ loss $+$ MR.}
		\STATE Optimize $\mathcal{G}_{\theta}$ with SGD
		\ENDWHILE
		%\UNTIL{$noChange$ is $true$}
	\end{algorithmic}
	Note: $C_i$ is the number of classes in pseudo incremental classes; $y^{(q)}$ and $\hat{y}^{(q)}$ indicate the ground truth label and the network prediction, respectively. $\mathcal{L}$ and $\mathcal{MR}^{dac}$ are the cross-entropy loss and DAC-MR, respectively.
\end{algorithm}

\noindent\textbf{Baselines.} We compared with SOTA methods including iCaRL \cite{rebuffi2017icarl}, TOPIC \cite{tao2020few}, SPPR \cite{zhu2021self} and CEC \cite{zhang2021few}. We evaluate the model after each session with test set and report the Top-1 accuracy and the average of all sessions. We also
include the relative improvement for the final session.
%We also define a performance dropping rate (PD) that measures the absolute accuracy drops in the last session w.r.t. the accuracy in the first session.

\noindent\textbf{Datasets.} We evaluate DAC-MR upon CEC on three popular few-shot incremental learning benchmark datasets, including CIFAR100 \cite{krizhevsky2009learning}, miniImageNet \cite{vinyals2016matching} and CUB-200 \cite{welinder2010caltech}. We follow the experimental setting in \cite{tao2020few}.
%For CIFAR100 \cite{krizhevsky2009learning}, we follow the splits in \cite{tao2020few}, where 60 classes and 40 classes are used as base classes and new classes, respectively. The 40 new classes are further divided into 8 new incremental sessions, and each new session is a 5-way 5-shot classification task.
%For miniImageNet \cite{vinyals2016matching}, we follow \cite{tao2020few} to split the 100 classes into 60 base classes and 40 incremental classes. The 40 new classes are further divided equally into 8 sessions with 5 classes in each session, and each class has 5 training images in the incremental sessions. For CUB-200 \cite{welinder2010caltech},  that 200 classes are divided into 100 base classes and 100 new classes, respectively. The 100 new classes are further divided into 10 new sessions where each session is a 10-way 5-shot task.

\noindent\textbf{Experimental setup.} Following \cite{tao2020few}, we employ ResNet20 as the backbone for experiments on CIFAR100 and ResNet18 for experiments on miniImageNet and CUB200. We train the GAT model $\mathcal{G}_{\theta}$ for 5000 iterations with the learning rate of 0.0002, and decay it by 0.5 every 1000 iteration.
Random crop, random scale, and random horizontal flip are used for data augmentation at training time. We add the DAC-MR after the 20 epochs.

\section{More Experimental Setting details in Section 7}

\subsection{Sample Weighting Learning}

\subsubsection{Comparison with MW-Net}
This subsection compares against MW-Net meta-learned with clean meta dataset to show our novel meta-objective is capable of learning proper weighting strategy.

\noindent\textbf{Datasets.} We use CIFAR-10 and CIFAR-100 \cite{krizhevsky2009learning} for comparing robust learning methods. We identically follow MW-Net \cite{shu2019meta}, and apply the symmetric and asymmetric noise models. We randomly select different 1000 images at each epoch from training set as $D$ to compute DAC-MR.

\noindent\textbf{Experimental setup.} We use ResNet-32 as the classifier network, and the weighting network is a single layer MLP with 100 hidden nodes and ReLU activations.
All classifier networks were trained using SGD with a momentum 0.9, a weight decay $5\times 10^{-4}$ and an initial learning rate 0.1. The learning rate is divided by 10 after 80 and 100 epochs (for a total 120 epochs). We use Adam optimizer to train MW-Net with learning rate 0.001. We use a batch size of 100 for both the training samples and the meta ones. We repeat experiments with three different seeds for corrupting samples with label noise and initializing the classifier networks.
We adopt horizontal flip, random crop, and color (brightness, contrast, and saturation) jitter for the data augmentations. 

\subsubsection{Comparison with SOTA methods.}
This subsection compares with SOTA robust learning methods against both synthetic and real-world noisy labels. To fairly compare with the these SOTA methods, we use pseudo-labels to correct noisy labels to more sufficiently make use of samples inspired by DivideMix \cite{li2019dividemix}, C2D \cite{zheltonozhskii2022contrast} and AugDesc \cite{nishi2021augmentation}.

Specifically, we use the following novel bi-level optimization objective to learn MW-Net:
\begin{align}
	{\phi}^* &= \mathop{\arg\min}_{{\phi }}  \mathcal{L}^{meta} (\mathcal{D}^{(q)};\mathbf{w}^*(\phi)),     \label{eqmeta}\\
	\text{s.t.} \ &\ \mathbf{w}^*(\phi) =\mathop{\arg\min}_{{\mathbf{w} }} \frac{1}{m}\sum_{i=1}^{m} \left[ h_{\phi}(L_i^{tr}(\mathbf{w})) \ell(f_{\mathbf{w}}(x_i^{(s)}),y_i^{(s)})+(1-h_{\phi}(L_i^{tr}(\mathbf{w}))) \ell(f_{\mathbf{w}}(x_i^{(s)}),z_i^{(s)}) \right], \label{eqtrain}
\end{align}
where $L_i^{tr}(\mathbf{w}) \!=\! {\ell}(f_{\mathbf{w}}(x_i^{(s)}),y_i^{(s)})$, $\ell$ is the cross-entropy loss, and $z$ is pseudo-label. In our experiments, we apply EMA \cite{tarvainen2017mean} and temporal ensembling \cite{laine2016temporal} techniques to produce pseudo-labels in our algorithm, which has been verified to be effective in tasks like semi-supervised learning \cite{laine2016temporal,laine2016temporal} and robust learning \cite{arazo2019unsupervised,liu2020early}. And we add a warm-up self-supervised pre-training step and impose a data augmentation based consistency regularization as in C2D and AugDesc to boost our method. The complete algorithm is summarized in the Algorithm \ref{alg:example1}.

\begin{algorithm}
	\vspace{0mm}
	\renewcommand{\algorithmicrequire}{\textbf{Input:}}
	\renewcommand{\algorithmicensure}{\textbf{Output:}}
	\caption{Label correction with MW-Net algorithm}
	\label{alg:example1}
	\begin{algorithmic}[1]  \small
		% \STATE {\bfseries Input:} Training data $\mathcal{D}$, meta set $\mathcal{D}^{(m)}$, batch size $n,n^{(m)}$, max iterations $T$.
		%   \STATE {\bfseries Output:} Student model parameters $w^{(T)}$
		\REQUIRE  Training data $\mathcal{D}^{(s)}$, batch size $n$, temporal ensembling momentum $\alpha \in [0,1)$, weight averaging momentum $\beta\in [0,1)$, mixup hyperparameter $\gamma >0$, learning rates $\eta_1,\eta_2$, data augmentations $A$.
		\ENSURE  Classifier parameter $\mathbf{w}^{*}$%, CMW-Net parameter $\Theta^{(T)}$
		%\REPEAT
		%\STATE Solve Eq.(8) by k-means to obtain $S$, then determine the structure of the CMW-Net $\mathcal{V}(L^{tr}(\mathbf{w}),S_{y};\Theta)$, and revise the training dataset as $\tilde{\mathcal{D}}^{tr} = \{x_i,y_i,S_{y_i}\}_{i=1}^N$.
		\STATE Initialize classifier network parameter $\mathbf{w}^{(0)}$ with a self-supervised pre-training model. Initialize averaged predictions with noisy labels $\mathbf{z}^{(0)} = \mathbf{\hat{y}}_{[N\times C]}$, and averaged weights (untrainable) $\mathbf{w}_{WA}^{(0)} =\mathbf{w}^{(0)}$.
		\FOR{$t=0$ {\bfseries to} $T-1$}
		\STATE $\{x,y\} \leftarrow$ SampleMiniBatch($\mathcal{D}^{(s)},n$).
		\STATE $\{x^{meta}\} \leftarrow$ SampleMiniBatch($\mathcal{D}^{(s)},m$).
		\STATE 	Generate mixing coefficient $\lambda \sim Beta(\gamma,\gamma), \lambda=\max(\lambda,1-\lambda)$.
		\STATE Calculate weight averaging: $\mathbf{w}_{WA}^{(t+1)} = \beta \mathbf{w}_{WA}^{(t)} + (1-\beta) \mathbf{w}^{(t)}$.
		\STATE Calculate temporal ensembling: $\mathbf{z}^{(t+1)} = \alpha \mathbf{z}_i^{(t)} + (1-\alpha) f(x; \mathbf{w}_{WA}^{(t+1)})$.
		\STATE Generate new index sequence $\text{idx = torch.randperm(n)}$.
		\STATE Generate $\tilde{x} = \lambda' x +(1-\lambda')x[\text{idx}]$, and let $\tilde{y} = y[\text{idx}], \tilde{z}^{(t+1)} = {z}^{(t+1)}[\text{idx}]$. Calculate $N_i$ and $\tilde{N}_i$, representing the numbers of samples contained in the classes to which $x_i$ and $x[\text{idx}]_i$ belong, respectively.
		\STATE Formulate the learning manner of classifier network:
		\begin{align*}
			&\hat{\mathbf{w}}^{(t+1)}(\phi)=
			\mathbf{w}^{(t)} \\& - \eta_1
			\sum_{i=1}^n  \{ \lambda \left[ h({\ell}(f(\tilde{x}_i; \mathbf{w}_{WA}^{(t)} ),y_i); \phi^{(t)}) \nabla_{\mathbf{w}} {\ell}(f(\tilde{x}_i;\mathbf{w}^{(t)}),y_i)\Big|_{\mathbf{w}^{(t)}}   +(1- h({\ell}(f(\tilde{x}_i;\mathbf{w}_{WA}^{(t)}),y_i) ; \phi^{(t)}) \nabla_{\mathbf{w}} {\ell}(f(\tilde{x}_i;\mathbf{w}^{(t)}),\mathbf{z}_i^{(t+1)})\Big|_{\mathbf{w}^{(t)}} \right] \\ &+(1-\lambda) \left[ h({\ell}(f(\tilde{x}_i;\mathbf{w}_{WA}^{(t)}),\tilde{y}_i); \phi^{(t)}) \nabla_{\mathbf{w}} {\ell}(f(\tilde{x}_i;\mathbf{w}^{(t)}),\tilde{y}_i)\Big|_{\mathbf{w}^{(t)}}   +(1-h({\ell}(f(\tilde{x}_i;\mathbf{w}_{WA}^{(t)}),\tilde{y}_i)); \phi^{(t)}) \nabla_{\mathbf{w}} {\ell}(f(\tilde{x}_i;\mathbf{w}^{(t)}),\tilde{\mathbf{z}}_i^{(t+1)})\Big|_{\mathbf{w}^{(t)}} \right]  \}.
		\end{align*}
		\STATE Update parameters of MW-Net $\phi^{(t+1)}$ by
		\begin{align*}
			{\phi}^{(t+1)} =  {\phi}^{(t)} -\eta_2 \frac{1}{m}\sum_{i=1}^{m} \nabla_{ \phi} \mathcal{MR}^{dac} (\{x^{meta}\};\hat{\mathbf{w}}^{(t+1)}(\phi),A ) \Big|_{\phi^{(t)}}.
		\end{align*}
		\STATE Update parameters of classifier $\mathbf{w}^{(t+1)}$ by
		\begin{align*}
			&{\mathbf{w}}^{(t+1)}=
			\mathbf{w}^{(t)} \\& - \eta_1
			\sum_{i=1}^n  \{ \lambda \left[ h({\ell}(f(\tilde{x}_i; \mathbf{w}_{WA}^{(t)} ),y_i); \phi^{(t+1)}) \nabla_{\mathbf{w}} {\ell}(f(\tilde{x}_i;\mathbf{w}^{(t)}),y_i)\Big|_{\mathbf{w}^{(t)}}   +(1- h({\ell}(f(\tilde{x}_i;\mathbf{w}_{WA}^{(t)}),y_i) ; \phi^{(t+1)}) \nabla_{\mathbf{w}} {\ell}(f(\tilde{x}_i;\mathbf{w}^{(t)}),\mathbf{z}_i^{(t+1)})\Big|_{\mathbf{w}^{(t)}} \right] \\ &+(1-\lambda) \left[ h({\ell}(f(\tilde{x}_i;\mathbf{w}_{WA}^{(t)}),\tilde{y}_i); \phi^{(t+1)}) \nabla_{\mathbf{w}} {\ell}(f(\tilde{x}_i;\mathbf{w}^{(t)}),\tilde{y}_i)\Big|_{\mathbf{w}^{(t)}}   +(1-h({\ell}(f(\tilde{x}_i;\mathbf{w}_{WA}^{(t)}),\tilde{y}_i)); \phi^{(t+1)}) \nabla_{\mathbf{w}} {\ell}(f(\tilde{x}_i;\mathbf{w}^{(t)}),\tilde{\mathbf{z}}_i^{(t+1)})\Big|_{\mathbf{w}^{(t)}} \right] \}.
		\end{align*}
		\ENDFOR
		%\UNTIL{$noChange$ is $true$}
	\end{algorithmic}
\end{algorithm}

\noindent\textbf{Datasets.} For the real-world noisy labels, we employ mini-WebVision dataset, which contains the top 50 classes from the Google image subset of WebVision \cite{li2017webvision}. And We randomly select different 10 images every class at each epochs from training set as $D$ to compute DAC-MR.

\noindent\textbf{Baselines.} The comparison methods include: 1) ERM, 2) Forward \cite{patrini2017making}, 3) M-correction \cite{arazo2019unsupervised}, 4) PENCIL \cite{yi2019probabilistic}, 5) DivideMix \cite{li2019dividemix}, 6) ELR+ \cite{liu2020early}, 7) AugDesc \cite{nishi2021augmentation}, and 8) C2D \cite{zheltonozhskii2022contrast}.

\noindent\textbf{Experimental setup.} We adopt horizontal flip, random crop, and color (brightness, contrast, and saturation) jitter for the data augmentations. For CIFAR-10 and CIFAR-100, we use an 18-layer PreAct-ResNet and train it using SGD with a momentum of 0.9, a weight decay of 0.0005, and a batch size of 128. The network is trained for 300 epochs. We set the initial learning rate as 0.02, and reduce it by a factor of 10 after 150 epochs. For mini-Webvision, we use ResNet-50 and train it using SGD with a momentum of 0.9, a weight decay of 0.0005, and a batch size of 64. We set the initial learning rate as 0.01 and reduce it by a factor of 10 after 50 epochs (for a total 90 epochs). We train MW-Net using Adam with a learning rate of 0.001, a weight decay of 0.001.
To make our results comparable to the existing literature, we introduce pseudo labels to correct noisy labels to possibly make sufficient use of samples as done by M-correction, PENCIL and DivideMix, and we add a warm-up self-supervised pre-training step and impose a data augmentation based consistency regularization as in C2D and AugDesc to boost our method.

\subsection{Transition Matrix Estimation}
\noindent\textbf{DAC-MR amelioration manner for the task.} We consider the label noise generation process studied in previous works \cite{patrini2017making,shu2020meta}.
Specifically, the clean class-posterior $P(Y|X = x) \triangleq [P(Y=1|X = x),\cdots, (Y=C|X = x)]$ can be inferred by utilizing the noisy class-posterior $P(\tilde{Y}|X = x)$ and the transition matrix $\mathbf{T}\in [0,1]^{C\times C}$ where $\mathbf{T}_{ij} = P(\tilde{Y}=j|Y=i)$, i.e.,
\begin{align}\label{eqmatrix}
	p(\widetilde{Y}|X=x) = \mathbf{T}^T p(Y|X=x).
\end{align}
This formulation has facilitated progress to some statistically consistent robust learning methods \cite{patrini2017making,goldberger2016training,hendrycks2018using,shu2020meta}. However, they all heavily rely on the success of estimating transition matrices. Early attempts develop to estimate the transition matrices under the anchor-point assumption \cite{patrini2017making,hendrycks2018using}. However, the violation of the assumption in some cases could lead to a poorly estimated transition matrix and a degenerated classifier.
This motivates the development of algorithms without exploiting anchor points \cite{xia2019anchor,li2021provably}. Here, we consider the SOTA estimator for the transition matrix, VolMinNet \cite{li2021provably}, which requires the volume of the simplex formed by the columns of the transition matrix to be small. The main idea is that the true simplex has the minimum volume in geometry. The objective function is
\begin{align*}
	{\mathbf{T}}^* & = \mathop{\arg\min}_{ \mathbf{T} \in \mathbb{T}} \log \det (\mathbf{T}),  \\
	\text{s.t.,} \ \mathbf{w}^* & = \mathop{\arg\min}_{\mathbf{w}} \frac{1}{m}\sum_{i=1}^m \ell({\mathbf{T}}^*f_{\mathbf{w}}(x_i^{(s)}), y_i^{(s)} ),
\end{align*}
where $\mathbb{T}\! =\! \{\mathbf{T} \!\in\! [0,1]^{C\times C} | \sum_{j=1}^C \mathbf{T}_{ij}\!=\!1, \mathbf{T}_{ii} > \mathbf{T}_{ij}, \forall i \neq j \}$ is the set of diagonally dominant column stochastic matrices, and $\det$ denotes the determinant of matrix. The volume regularization is delicately designed for the estimation of transition matrix. In contrast, we explore to use the problem-agnostic DAC-MR for estimating transition matrix, i.e.,
\begin{align*}
	{\mathbf{T}}^* & = \mathop{\arg\min}_{ \mathbf{T} \in \mathbb{T}} \mathcal{MR}^{dac} (D;\mathbf{w}^*(\mathbf{T}),A ),	  A \in \mathcal{A},  \\
	\text{s.t.,} \ \mathbf{w}^*(\mathbf{T}) & = \mathop{\arg\min}_{\mathbf{w}} \frac{1}{m}\sum_{i=1}^m \ell(\mathbf{T}f_{\mathbf{w}}(x_i^{(s)}), y_i^{(s)} ),
\end{align*}
where $D = \{x_{i}\}_{i=1}^{n}$ are additionally sampled/divided from $\mathcal{D}^{(s)}$. Note that we treat transition matrix $\mathbf{T}$ as meta-representation, and its estimation is obtained by minimizing the DAC-MR on the collected $D$ in a meta-learning manner \cite{hospedales2021meta}.

\noindent\textbf{Datasets.} We evaluate the proposed method on two synthetic noisy datasets: CIFAR-10 and CIFAR-100 \cite{krizhevsky2009learning}, and one real-world noisy dataset: Clothing1M \cite{xiao2015learning}. We conduct experiments with two commonly used types of noise: symmetry flipping \cite{patrini2017making} and pair flipping \cite{han2018co}. For CIFAR-10 and CIFAR-100, we randomly select different 1000 images at each epochs from training set as $D$ to compute DAC-MR. For Clothing1M, we randomly select different 10 images every class at each epochs from training set as $D$ to compute DAC-MR.

\noindent\textbf{Baselines.} We compared with several SOTA transition matrix estimation methods, including Forward \cite{patrini2017making}, T-Revision \cite{xia2019anchor}, Dual-T \cite{yao2020dual} and VolMinNet \cite{li2021provably}.

\noindent\textbf{Experimental setup.} For a fair comparison, we identically follow the practice in \cite{li2021provably}. We adopt horizontal flip, random crop, and color (brightness, contrast, and saturation) jitter for the data augmentations.
We train a ResNet-18 network for CIFAR10, and a ResNet-34 network for CIFAR-100. We train them using SGD with a batch size 128, a momentum 0.9, a weight decay 0.001 and an initial learning rate 0.01. The learning rate is divided by 10 after the 30th and 60th epochs for a total 150 epoch. We adopt Adam with learning rate $10^{-3}$ and weight decay $10^{-4}$ to update the transition matices. For Clothing1M, we train a ResNet-50 pre-trained on ImageNet using SGD with a momentum 0.9, a weight decay 0.001, a batch size 32 and a learning rates $0.002$. We set the learning rates as $0.00002$ after 5-th epoch for a total 10 epoch. We adopt Adam with learning rate $10^{-4}$ and weight decay $10^{-5}$ to update the transition matices.

\end{document}